\documentclass{article}

\usepackage[preprint]{neurips_2026}

\usepackage[utf8]{inputenc} % allow utf-8 input
\usepackage[T1]{fontenc}    % use 8-bit T1 fonts
\usepackage{hyperref}       % hyperlinks
\usepackage{url}            % simple URL typesetting
\usepackage{booktabs}       % professional-quality tables
\usepackage{amsfonts}       % blackboard math symbols
\usepackage{nicefrac}       % compact symbols for 1/2, etc.
\usepackage{microtype}      % microtypography
\usepackage{xcolor}         % colors

\usepackage{amsmath}
\usepackage{mathtools}

\usepackage{hyperref}
\usepackage{url}
\usepackage[utf8]{inputenc} % allow utf-8 input
\usepackage[T1]{fontenc}    % use 8-bit T1 fonts
\usepackage{hyperref}       % hyperlinks

\usepackage{caption}
\usepackage[dvipsnames]{xcolor}
\usepackage[super]{nth}
\usepackage{graphicx}
\usepackage{physics}
\usepackage{enumitem}
\usepackage{amssymb}
\usepackage{amsthm}
\usepackage{wrapfig}
\usepackage{algorithm}
\usepackage{algorithmic}

\makeatletter
\let\save@mathaccent\mathaccent
\newcommand*\if@single[3]{%
  \setbox0\hbox{${\mathaccent"0362{#1}}^H$}%
  \setbox2\hbox{${\mathaccent"0362{\kern0pt#1}}^H$}%
  \ifdim\ht0=\ht2 #3\else #2\fi
  }
\newcommand*\rel@kern[1]{\kern#1\dimexpr\macc@kerna}
\newcommand*\widebar[1]{\@ifnextchar^{{\wide@bar{#1}{0}}}{\wide@bar{#1}{1}}}
\newcommand*\wide@bar[2]{\if@single{#1}{\wide@bar@{#1}{#2}{1}}{\wide@bar@{#1}{#2}{2}}}
\newcommand*\wide@bar@[3]{%
  \begingroup
  \def\mathaccent##1##2{%
    \let\mathaccent\save@mathaccent
    \if#32 \let\macc@nucleus\first@char \fi
    \setbox\z@\hbox{$\macc@style{\macc@nucleus}_{}$}%
    \setbox\tw@\hbox{$\macc@style{\macc@nucleus}{}_{}$}%
    \dimen@\wd\tw@
    \advance\dimen@-\wd\z@
    \divide\dimen@ 3
    \@tempdima\wd\tw@
    \advance\@tempdima-\scriptspace
    \divide\@tempdima 10
    \advance\dimen@-\@tempdima
    \ifdim\dimen@>\z@ \dimen@0pt\fi
    \rel@kern{0.6}\kern-\dimen@
    \if#31
      \overline{\rel@kern{-0.6}\kern\dimen@\macc@nucleus\rel@kern{0.4}\kern\dimen@}%
      \advance\dimen@0.4\dimexpr\macc@kerna
      \let\final@kern#2%
      \ifdim\dimen@<\z@ \let\final@kern1\fi
      \if\final@kern1 \kern-\dimen@\fi
    \else
      \overline{\rel@kern{-0.6}\kern\dimen@#1}%
    \fi
  }%
  \macc@depth\@ne
  \let\math@bgroup\@empty \let\math@egroup\macc@set@skewchar
  \mathsurround\z@ \frozen@everymath{\mathgroup\macc@group\relax}%
  \macc@set@skewchar\relax
  \let\mathaccentV\macc@nested@a
  \if#31
    \macc@nested@a\relax111{#1}%
  \else
    \def\gobble@till@marker##1\endmarker{}%
    \futurelet\first@char\gobble@till@marker#1\endmarker
    \ifcat\noexpand\first@char A\else
      \def\first@char{}%
    \fi
    \macc@nested@a\relax111{\first@char}%
  \fi
  \endgroup
}
\makeatother

\newcommand{\shorteq}[1]{\mathrel{\makebox[#1][c]{=}}}

\theoremstyle{plain}
\newtheorem{theorem}{Theorem}[section]
\newtheorem{proposition}[theorem]{Proposition}

\theoremstyle{definition}
\newtheorem{definition}[theorem]{Definition}

\theoremstyle{remark}

\title{Robust Conditional Conformal Prediction via\\ Branched Normalizing Flow}

\author{%
  Rui Xu$^{1}$, Xingyuan Chen$^{1}$, Wenxing Huang$^{2}$, Minxuan Huang$^{2}$, \\ 
  \textbf{Weiyan Chen}$^2$, \textbf{Sihong Xie}$^1$, \textbf{Hui Xiong}$^1$ \\
  $^{1}$ Information Hub, The Hong Kong University of Science and Technology (Guangzhou)\\
  $^{2}$ The Second Affiliated Hospital of Guangzhou Medical University
}
\begin{document}

\maketitle

\vspace{-10pt}
\begin{abstract}
\vspace{-5pt}
Conformal prediction (CP) constructs prediction sets with marginal coverage guarantees under the assumption that the calibration and test distributions are identical. However, under distribution shift, existing approaches primarily align marginal conformal score distributions, which is sufficient to preserve marginal coverage but does not control the conditional coverage error at individual test inputs. As a consequence, CP can remain unreliable in regions where the conditional score distributions are mismatched.
In this work, we bound the conditional invalidity of CP under distribution shift in terms of the Wasserstein distance between the calibration and test distributions. This result highlights the role of invertible transport in mitigating conditional coverage degradation. Motivated by this insight, we introduce Branched Normalizing Flow (BNF), a two-branch architecture that normalizes a test input to the calibration distribution and transforms the prediction set of the normalized input back to the test distribution while preserving conditional guarantees. Empirically, BNF consistently improves conditional coverage robustness on nine datasets across a wide range of confidence levels.
\end{abstract}

\section{Introduction}
Due to data noise and lack of prior knowledge, prediction uncertainty hinders applications of AI in various safety-critical domains. Conformal Prediction (CP) yields a set of possible targets rather than a single prediction to accommodate prediction uncertainty~\cite{vovk2005algorithmic, shafer2007tutorialconformalprediction}. We focus on CP for $\textbf{regression}$~\cite{lei2017distributionfreepredictiveinferenceregression}. Given a trained model $h$, a score function $s(X,Y) = |h(X) - Y|$ computes the residuals (conformal scores) of $n$ calibration instances $\{(X_i,Y_i)\}_{i=1}^n$. Denoting $\tau$ the ${\lceil(1-\alpha)(n+1)\rceil}/{n}$ quantile of the conformal scores, a vanilla prediction set $\mathcal{C}_\text{M}(X_{n+1})$ of a test input $X_{n+1}$ contains all target values whose conformal scores are smaller than $\tau$. Let $P_{XY}$ and $Q_{XY}$ be calibration and test distributions in space $\mathcal{X}\times\mathcal{Y}$, respectively. If the data are independent and identically distributed (i.i.d.) so that $P_{XY}= Q_{XY}$, the prediction set $\mathcal{C}_\text{M}(X_{n+1})$ achieves the \textbf{marginal coverage guarantee} $\text{Pr}\left(Y_{n+1}\in \mathcal{C}_\text{M}(X_{n+1})\right)\geq 1-\alpha$. However, since $\tau$ does not depend on the specific test input $x$, $\mathcal{C}_\text{M}(X_{n+1})$ has constant size and lacks adaptiveness. To address the weakness, conditional prediction set $\mathcal{C}_\text{C}(X_{n+1})$ aims at \textbf{conditional coverage guarantee} $\text{Pr}(Y_{n+1}\in \mathcal{C}_\text{C}(X_{n+1})|\colorbox{gray!20}{\raisebox{0pt}[1.2ex][0.1ex]{$X_{n+1}=x$}})\geq 1-\alpha, \forall x\in\mathcal{X}$, which provides more effective uncertainty quantification~\cite{papadopoulos2011regression,vovk2012conditional}.

In practice, a distribution shift ($P_{XY}\neq Q_{XY}$) can violate the i.i.d. assumption. For example, \textbf{multi-source domain generalization (MSDG)} considers $Q_{XY}$ as a random mixture of multiple source distributions~\cite{krueger2021out}. In this scenario, ensuring coverage guarantees becomes both important and challenging. 
Let $P_{V}$ and $Q_{V}$ be the calibration and test conformal score distributions in space $\mathcal{V}$, respectively. The difference between the cumulative probabilities of $P_{V}$ and $Q_{V}$ at $\tau$ can measure the validity of marginal coverage. 
Various upper bounds of the discrepancy between $P_V$ and $Q_V$ are proposed to estimate the potential deviation from the nominal marginal coverage~\cite{barber2023conformal, xu2025wassersteinregularized}.
Nevertheless, since existing methods align calibration and test conformal scores solely at the marginal level, they do not elucidate how score distributions change with individual inputs, leaving conditional score behavior largely unaddressed. As a result, these methods are unable to assess the conditional coverage of $\mathcal{C}_\text{C}(X_{n+1})$ under distribution shift (Figure~\ref{fig: Method}(a)).

\begin{figure*}[t]
\centering
\captionsetup{singlelinecheck = false, justification=justified}
% \vspace{-15pt}
  \includegraphics[scale=0.25]{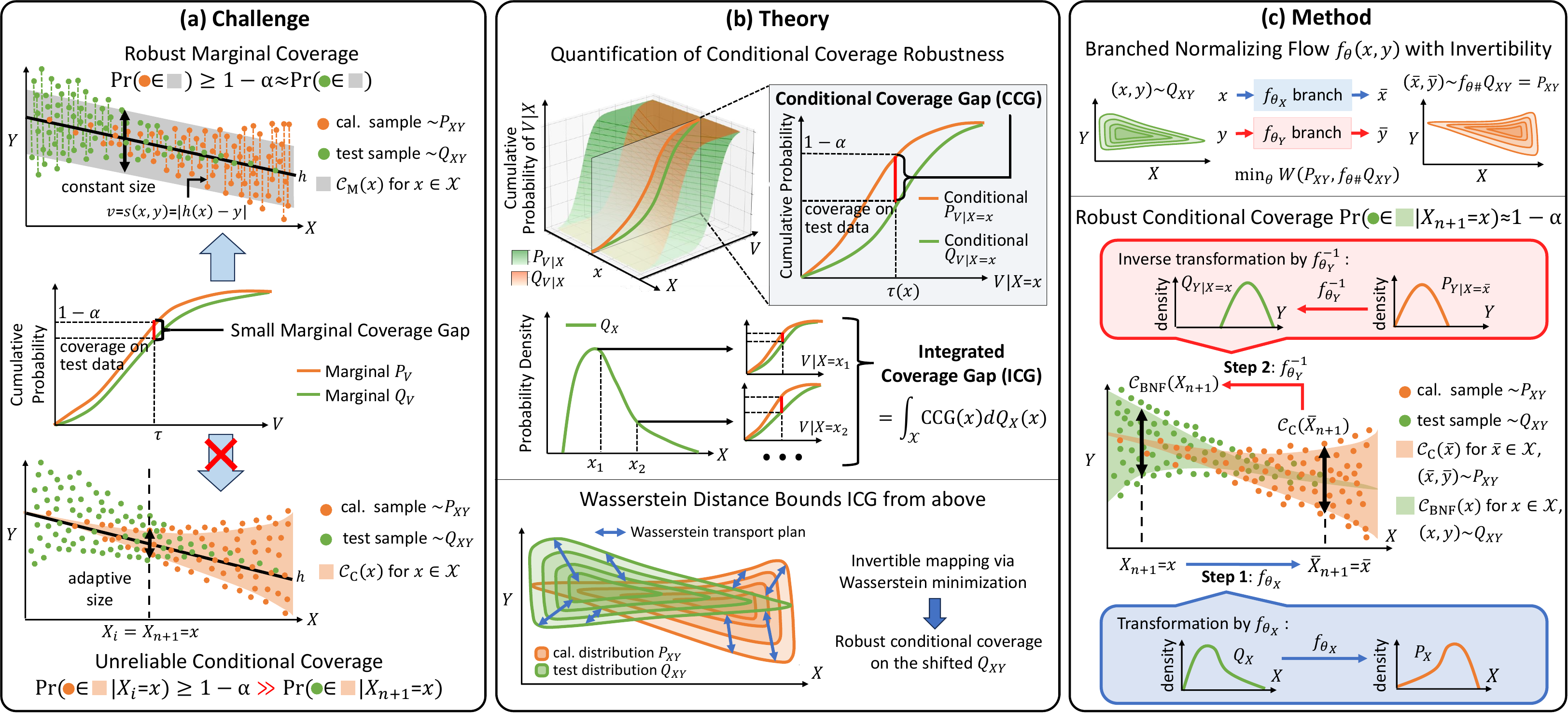}
  % \vspace{-3pt}
  \caption{\textbf{(a)} Vanilla prediction set $\mathcal{C}_\text{M}(x)$ has constant size and offers marginal coverage, which is robust if conformal score distributions $P_V$ and $Q_V$ have similar cumulative probabilities at $\tau$. Conditional prediction set $\mathcal{C}_\text{C}(x)$ has input-dependent size and provides conditional guarantees for calibration inputs $X_i=x$ where $i=1,...,n$, but may fail on non-i.i.d. test input $X_{n+1}=x$. The difference between $P_V$ and $Q_V$ can not capture the reliability of conditional coverage on the shifted test data; \textbf{(b)} Conditional coverage gap (CCG) measures $\mathcal{C}_\text{C}(X_{n+1})$ validity at $x$ by comparing $P_{V|X=x}$ and $Q_{V|X=x}$.  Integrated coverage gap (ICG) is the expectation of CCG under $Q_X$ for a holistic robustness measure. Wasserstein distance $W(P_{XY}, Q_{XY})$ bounds ICG to reveal how a distribution shift results in non-i.i.d. conformal scores. An invertible mapping between $P_{XY}$ and $Q_{XY}$ via Wasserstein minimization promotes robust conditional coverage; \textbf{(c)} Branched Normalizing Flow $f_\theta$ minimizes $W(f_{\theta\#}Q_{XY}, P_{XY})$, where $f_{\theta\#}Q_{XY}$ is a pushforward distribution.  For inference, we first compute a normalized test input $\overline{X}_{n+1}$ by $f_{\theta_X}$ and generate $\mathcal{C}_\text{C}(\overline{X}_{n+1})\subseteq\mathcal{Y}$ with conditional guarantee on $P_{XY}$. Then, $f_{\theta_Y}^{-1}$ inversely transforms the set to $\mathcal{C}_\text{BNF}(X_{n+1})\subseteq\mathcal{Y}$ with preserved conditional coverage on $Q_{XY}$.}
  \label{fig: Method} 
  \vspace{-10pt}
\end{figure*}

We aim to ensure the conditional guarantee under distribution shifts with three key contributions.
\begin{enumerate}[topsep=0pt, itemsep=0pt, leftmargin=15pt]
    \item \textbf{Quantification of conditional coverage robustness.} We define the conditional coverage gap (CCG) to measure conditional invalidity under distribution shift, and the integrated coverage gap (ICG) as the expectation of CCG under the test distribution. (Figure~\ref{fig: Method}(b), \nth{1} plot).
    \item \textbf{Bounding by Wasserstein distance.} We bound ICG with the Wasserstein distance between calibration and test distributions to reveal how a distribution shift is propagated from data space to score space. This bound implies that an invertible mapping can promote robust conditional coverage (Figure~\ref{fig: Method}(b) \nth{2} plot).
    \item \textbf{Branched Normalizing Flow (BNF).}
    We embed the Wasserstein bound into a branched structure, defined as $f_\theta(x,y) = (f_{\theta_X}(x), f_{\theta_Y}(y))=(\widebar{x},\widebar{y})$, to transform $Q_{XY}$ to $P_{XY}$ (Figure~\ref{fig: Method}(c) \nth{1} plot). The structure does not explicitly couple the transformations of $x$ and $y$, so $f_{\theta_X}$ can compute the normalized test input without knowing the true label during inference. If the conditional prediction set of the normalized input holds $1-\alpha$ conditional coverage on calibration distribution, $f_{\theta_Y}^{-1}$ inversely transforms it with preserved guarantee on test distribution (Figure~\ref{fig: Method}(c) \nth{2} plot). 
\end{enumerate}

Experiments on nine datasets cover both synthetic distribution shifts~\cite{physicochemical_properties_of_protein_tertiary_structure_265} and real-world challenges, including sales prediction across time series~\cite{bike_sharing_275}, traffic forecasting with mismatched data~\cite{cui2019traffic}, medicine decision-making for different populations~\cite{johnson2023mimic,pollard2018eicu}, and epidemic modeling over pandemic intervals~\cite{deng2020cola}, demonstrating improved conditional guarantee robustness under distribution shift.

\section{Background}\label{sec: background}
\subsection{Conditional Conformal Prediction}\label{sec: adaptive CP}
Denote $X\in\mathcal{X}\subseteq\mathbb{R}^d$ and $Y\in\mathcal{Y}\subseteq\mathbb{R}$ the input and output random variables, respectively. With a regression model $h: \mathcal{X}\rightarrow\mathcal{Y}$, a score function $s:\mathcal{X}\times\mathcal{Y}\rightarrow\mathcal{V}\subseteq\mathbb{R}$ outputs conformal scores to assess how data conform to the model $h$. We denote $V\in\mathcal{V}$ the random variable of conformal score, typically defined as the absolute residual: $V:=s(X, Y)=|h(X)-Y|$. With instances $\{(X_i, Y_i)\}_{i=1}^n$ from a calibration distribution $P_{XY}$, split conformal prediction computes calibration conformal scores $V_i=s(X_i,Y_i)$ for $i=1,...,n$~\citep{papadopoulos2002inductive}. For a test instance $(X_{n+1},Y_{n+1})$, a vanilla prediction set is given by $\mathcal{C}_\text{M}(X_{n+1}):=\left\{y:s(X_{n+1},y)\leq\tau,y\in\mathcal{Y}\right\}$, where $\tau$ is the ${\lceil(1-\alpha)(n+1)\rceil}/{n}$ quantile of $\{V_i\}_{i=1}^n$.\footnote{Equivalently, $\tau$ can be defined as the $1-\alpha$ quantile of $\{V_i\}_{i=1}^n\cup\{V_\infty\}$~\citep{vovk2005algorithmic,lei2017distributionfreepredictiveinferenceregression}.} Under the i.i.d. assumption with $(X_{n+1},Y_{n+1})\sim P_{XY}$, $\mathcal{C}_\text{M}(X_{n+1})$ provides $\textbf{marginal coverage guarantee}$ of the ground truth $Y_{n+1}$, namely,
\begin{equation}\label{eq: marginal guarantee}
    \text{Pr}\left(Y_{n+1}\in \mathcal{C}_\text{M}(X_{n+1})\right)=\text{Pr}(V_{n+1}\leq\tau)\geq 1-\alpha.
\end{equation} 
As $\tau$ is independent from test inputs, fixed-size prediction sets often underestimate uncertainty for hard samples and overestimate it for easy ones~\citep{angelopoulos2022conformal}. Therefore, a conditional prediction set $\mathcal{C}_\text{C}(X_{n+1})$ aims at improving the guarantee under the condition where $X_{n+1}=x$, $\forall x\in\mathcal{X}$. Theoretically, denote $\tau(x)$ the $(1-\alpha)$ quantile of $P_{V|X=x}$. Then, for $X_{n+1}=x$, a conditional prediction set is
\begin{equation} \label{eq: conditional prediction set}
    \mathcal{C}_\text{C}(X_{n+1}):=\left\{y:s(X_{n+1},y)\leq\tau(x), y\in\mathcal{Y}\right\},
\end{equation}
with the \textbf{conditional guarantee} under i.i.d. assumption:
\begin{equation}\label{eq: conditional guarantee}
        \text{Pr}\left(Y_{n+1}\in \mathcal{C}_\text{C}(X_{n+1})|X_{n+1}=x\right)=\text{Pr}\left(V_{n+1}\leq \tau(x)|X_{n+1}=x\right)\geq 1-\alpha, \forall{x}\in\mathcal{X}.
\end{equation}
\subsection{Conformal Prediction in Multi-Source Domains}
Multi-source domain generalization (MSDG) is a case of joint distribution shifts, where  $(X_{n+1},Y_{n+1})\sim Q_{XY}\neq P_{XY}$.
Under MSDG, the test distribution is a random mixture of multiple sources, so standard conformal prediction cannot maintain its coverage guarantees without accounting for the shift.
To address marginal coverage under such shifts,
conservative CP approaches consider the worst-case scenario~\citep{cauchois2024robust,zou2024coverage}. Recent work further regularizes the model $h$ for a balance between marginal coverage and prediction set size~\citep{xu2025wassersteinregularized}. A related area is federated CP~\citep{lu2023federated,wen2025distributedconformalpredictionmessage},  where robust CP is pursued across separated sources without data centralization. Nevertheless, the impact of joint distribution shifts on conditional coverage guarantees remains poorly understood. We therefore develop a theoretical framework to analyze conditional coverage robustness under such shifts.
\section{Theory}
\subsection{Conditional Coverage Lower Bound}
Let $P_{V|x}$ and $Q_{V|x}$ be the calibration and test conformal score distributions conditioned on an input $x$. Denote $F^P_{V|x}(\cdot)$ and $F^Q_{V|x}(\cdot)$ cumulative distribution functions (CDFs) of $P_{V|x}$ and $Q_{V|x}$, respectively. Using this notation, the conditional guarantee under i.i.d. assumption in Eq.~(\ref{eq: conditional guarantee}) is reformulated by
\begin{equation}\label{eq: conditional coverage under i.i.d. in cdf}
        F^P_{V|x}(\tau(x))\geq 1-\alpha.
\end{equation}
To quantify how a distribution shift $(X_{n+1},Y_{n+1}) \sim Q_{XY}\neq P_{XY}$ impedes the conditional guarantee at $X_{n+1}=x$, we define \textbf{conditional coverage gap (CCG)} by
\begin{equation}\label{eq: conditional coverage gap}
    \text{CCG}(P,Q,x):=\big|F^P_{V|x}(\tau(x))-F^Q_{V|x}(\tau(x))\big|,
\end{equation}
which implies $F^Q_{V|x}(\tau(x))\geq F^P_{V|x}(\tau(x))-\text{CCG}(P,Q,x)$. Combining with  Eq.~(\ref{eq: conditional coverage under i.i.d. in cdf}), we obtain a lower bound on the conditional coverage for $Y_{n+1}\sim Q_{Y|x}$ by
\begin{equation}\label{eq: lowerbound by CCG}\begin{split}
        &\Pr\left(Y_{n+1}\in \mathcal{C}_\text{C}(X_{n+1})|X_{n+1}=x\right)=F^Q_{V|x}(\tau(x))\\&\geq F^P_{V|x}(\tau(x))-\text{CCG}(P,Q,x)\geq 1-\alpha-\text{CCG}(P,Q,x).
\end{split}
\end{equation}
Since test inputs follow $Q_X$, evaluating CCG at a single $x$ can not take $Q_X(x)$ at different $x$ into account. Hence,  \textbf{integrated coverage gap (ICG)} is defined as the expectation of CCG under $Q_X$ by
\begin{equation}\label{eq: integrated coverage gap}
    \text{ICG}(P,Q) := \int_\mathcal{X} \text{CCG}(P,Q,x) \dd Q_X(x).
\end{equation}
By integrating CCG over $Q_X$, ICG is a comprehensive metric for the validity of $\mathcal{C}_\text{C}(X_{n+1})$. A low ICG means that conditional coverage is consistently close to $1-\alpha$ over $Q_{X}$.
\subsection{Linking Data Shift to Coverage via Wasserstein}
We further explore how a distribution shift between $P_{XY}$ and $Q_{XY}$ in space $\mathcal{X}\times\mathcal{Y}$ is propagated to a shift between $P_{V|x}$ and $Q_{V|x}$ in space $\mathcal{V}$ for all $x\in\mathcal{X}$. 
\begin{definition}[$p$-Wasserstein Distance between Population Distributions~\citep{panaretos2019statistical}]
    For any probability measures $\mu_X$ and $\nu_X$ defined on a metric space $(\mathcal{X},d_\mathcal{X})$, where $\mathcal{X}$ is a set and $d_\mathcal{X}$ is a metric on $\mathcal{X}$, the Wasserstein distance of order $p\geq 1$ between $\mu_X$ and $\nu_X$ is defined by
    \begin{equation*}\label{eq: Wasserstein distance}
        {W_p}(\mu_X,\nu_X)\shorteq{0.1em}\inf_{{\gamma\in\Gamma(\mu_X,\nu_X)}} \Bigl(\int\nolimits_{\mathcal{X}\times\mathcal{X}}{d_\mathcal{X}(x_1,x_2)^p\dd{\gamma(x_1,x_2)}}\Bigr)^\frac{1}{p},
    \end{equation*}
    where $\Gamma(\mu_X,\nu_X)$ is the set of all joint probability measures $\gamma$ on $\mathcal{X}\times\mathcal{X}$ with marginals $\gamma(\mathcal{A}\times\mathcal{X})=\mu_X(\mathcal{A})$ and $\gamma(\mathcal{X}\times \mathcal{B})=\nu_X(\mathcal{B})$, $\forall$ measurable sets $\mathcal{A},\mathcal{B}\subseteq\mathcal{X}$.
\end{definition}
The Wasserstein distance with $p=1$ is denoted as $W$. An upper bound of the marginal coverage gap is proposed in~\citep{xu2025wassersteinregularized}. Let $L$ be the Lebesgue density bound of $P_{V|x}$ for all $x\in\mathcal{X}$~\citep{ross2011fundamentals}. We derive
\begin{equation}\label{eq: conditional gap upper bound V}
    \text{CCG}(P,Q,x)\leq\sqrt{2L\cdot W(P_{V|x},Q_{V|x})}.
\end{equation}
Next, we explore how $W(P_{V|x},Q_{V|x})$ arises from the difference in $P_{Y|x}$ and $Q_{Y|x}$ by Theorem~\ref{theorem: Wasserstein distance with Lipschitz continuity}.
\begin{theorem}\label{theorem: Wasserstein distance with Lipschitz continuity}
    Let $\mu_{XY}$ and $\nu_{XY}$ be probability measures in the metric space $(\mathcal{X}\times\mathcal{Y}, d_\mathcal{XY})$, where $d_{\mathcal{X}\mathcal{Y}}$ is the 2-product metric of $d_\mathcal{X}$ and $d_\mathcal{Y}$ such that $d_{\mathcal{X}\mathcal{Y}}((x_1,y_1),(x_2,y_2)):=||(d_\mathcal{X}(x_1,x_2),d_\mathcal{Y}(y_1,y_2))||_2$. Let $s: \mathcal{X}\times\mathcal{Y} \to \mathcal{V}$ be a measurable function such that $s(x,y)=v$. In the metric space $(\mathcal{V}, d_\mathcal{V})$, denote $\mu_V$ the probability measure of $s(X,Y)$ for $(X,Y) \sim \mu_{XY}$. Also, let $\nu_V$ be the probability measure of $s(X,Y)$ for $(X,Y) \sim \nu_{XY}$. If $s$ has a continuity constant $\kappa$ at $x$ such that $\frac{d_\mathcal{V}(s(x,y_1),s(x,y_2))}{d_\mathcal{Y}(y_1,y_2)}\leq\kappa, \forall x\in\mathcal{X}$ and $\forall y_1,y_2\in\mathcal{Y}$, the following inequality holds:
    \begin{equation}
        W(\mu_{V|x},\nu_{V|x})\leq \kappa\cdot W(\mu_{Y|x},\nu_{Y|x}).
    \end{equation}
\end{theorem}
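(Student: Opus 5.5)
We prove the bound by a coupling argument on the conditional laws at a fixed $x$, without needing the 2-product metric on $\mathcal{X}\times\mathcal{Y}$. Once $X=x$ is fixed, the conditional score law $\mu_{V|x}$ is the pushforward of $\mu_{Y|x}$ under the map $s_x:=s(x,\cdot):\mathcal{Y}\to\mathcal{V}$, and likewise $\nu_{V|x}=s_{x\#}\nu_{Y|x}$. The statement then reduces to the standard fact that a $\kappa$-Lipschitz map contracts the $1$-Wasserstein distance by at most a factor $\kappa$. The hypothesis $d_\mathcal{V}(s(x,y_1),s(x,y_2))\le\kappa\, d_\mathcal{Y}(y_1,y_2)$ says exactly that $s_x$ is $\kappa$-Lipschitz, uniformly in $x$.

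The steps are as follows.

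First, fix $x\in\mathcal{X}$ and identify $\mu_{V|x}=s_{x\#}\mu_{Y|x}$. This uses measurability of $s$ and a regular conditional distribution: for measurable $B\subseteq\mathcal{V}$,
\[
\mu_{V|x}(B)=\mu_{Y|x}(\{y:s(x,y)\in B\}).
\]
The same identification holds for $\nu$.

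Second, take any coupling $\gamma\in\Gamma(\mu_{Y|x},\nu_{Y|x})$ and push it forward by $(y_1,y_2)\mapsto(s_x(y_1),s_x(y_2))$. The resulting $\tilde\gamma$ has marginals $\mu_{V|x}$ and $\nu_{V|x}$, so $\tilde\gamma\in\Gamma(\mu_{V|x},\nu_{V|x})$.

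Third, change variables and apply the Lipschitz bound:
\[
W(\mu_{V|x},\nu_{V|x})\le\int d_\mathcal{V}(v_1,v_2)\,\dd\tilde\gamma=\int d_\mathcal{V}(s_x(y_1),s_x(y_2))\,\dd\gamma\le\kappa\int d_\mathcal{Y}(y_1,y_2)\,\dd\gamma.
\]

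Fourth, take the infimum over $\gamma$ to obtain $W(\mu_{V|x},\nu_{V|x})\le\kappa\,W(\mu_{Y|x},\nu_{Y|x})$. Optimal couplings need not exist; taking the infimum over all couplings avoids the issue.

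The main obstacle is the first step, which is measure-theoretic rather than a computation. The conditionals $\mu_{V|x}$ and $\mu_{Y|x}$ are defined only $\mu_X$-almost everywhere through disintegration. To make $\mu_{V|x}=s_{x\#}\mu_{Y|x}$ rigorous we need regular conditional probabilities, for instance by assuming $\mathcal{Y}$ and $\mathcal{V}$ are Polish, and we must check that the conditional law of $s(X,Y)$ given $X=x$ is a version of the pushforward. We handle this by verifying, for measurable $A\subseteq\mathcal{X}$ and $B\subseteq\mathcal{V}$,
\[
\int_A s_{x\#}\mu_{Y|x}(B)\,\dd\mu_X(x)=\mu_{XY}\bigl(\{(x,y):x\in A,\ s(x,y)\in B\}\bigr).
\]
This identity follows from the disintegration formula applied to the indicator of $\{(x,y):s(x,y)\in B\}$, and it shows that $x\mapsto s_{x\#}\mu_{Y|x}$ is a version of $\mu_{V|x}$. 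The inequality then holds for every $x$ under this version, and in particular almost everywhere under any other version. We also assume finite first moments, so that both Wasserstein distances are well defined; if the right-hand side is infinite, the bound is trivial.
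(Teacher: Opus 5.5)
Your proof is correct and follows the same route as the paper: push a coupling of $\mu_{Y|x}$ and $\nu_{Y|x}$ forward through $s(x,\cdot)$ to get an admissible plan for the score conditionals, then apply the Lipschitz bound. The paper differs only in invoking a cited pushforward identity and assuming an optimal plan $\gamma^*$ exists. You instead use only the easy inequality direction and take an infimum over all couplings, which avoids the existence question and is slightly cleaner.
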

A related theorem in~\citep{xu2025wassersteinregularized} does not condition on a specific $x$. Since $\mathcal{V}, \mathcal{Y} \subseteq \mathbb{R}$, we can take the metrics $d_\mathcal{V}(\cdot, \cdot)$ and $d_\mathcal{Y}(\cdot, \cdot)$ as the absolute value of the difference. Therefore, according to Theorem~\ref{theorem: Wasserstein distance with Lipschitz continuity}, if the score function $s(X,Y)$ is continuous with a constant $\kappa$ such that $\frac{|s(x,y_1)-s(x,y_2)|}{|y_1-y_2|}\leq \kappa$, $\forall x\in \mathcal{X},\forall y_1,y_2\in\mathcal{Y}$, we derive that
\begin{equation}\label{eq: kappa inequality}
    W(P_{V|x},Q_{V|x})\leq\kappa\cdot W(P_{Y|x},Q_{Y|x}).
\end{equation}
For an intuitive explanation, a smaller $\kappa$ implies that the score function $s$ becomes less responsive to changes in $y$ conditioned on $x$. Consequently, a substantial distribution shift between $P_{Y|x}$ and $Q_{Y|x}$ will not result in a large $W(P_{V|x}, Q_{V|x})$. Combining Eq.~(\ref{eq: kappa inequality}) and Eq.~(\ref{eq: conditional gap upper bound V}) leads to
\begin{equation}\label{eq: conditional gap upper bound Y}
        \text{CCG}(P,Q,x)\leq\sqrt{2\kappa L\cdot W(P_{Y|x},Q_{Y|x})}.
\end{equation}
Besides, as $\sqrt{W(P_{Y|x},Q_{Y|x})}\leq W(P_{Y|x},Q_{Y|x})+1/4$,\footnote{When $W(P_{Y|x},Q_{Y|x})\geq 1$, we obtain $\text{ICG}(P,Q)\leq\sqrt{2\kappa L}\int_\mathcal{X} W(P_{Y|x}, Q_{Y|x})\dd Q_X(x)$ by $\sqrt{W(P_{Y|x},Q_{Y|x})}\leq W(P_{Y|x},Q_{Y|x})$. This tightens Eq.~(\ref{eq: final bound of ICG}) to $\text{ICG}(P,Q)\leq\sqrt{2\kappa L}\cdot\eta\cdot W(P_{XY},Q_{XY})$}. We can bound ICG based on Eq.~(\ref{eq: conditional gap upper bound Y}) by 
\begin{equation*}
    \text{ICG}(P,Q)\leq\sqrt{2\kappa L}\left(\int_\mathcal{X} W(P_{Y|x}, Q_{Y|x})\dd Q_X(x)+\frac{1}{4}\right),
\end{equation*}
showing that transporting 
$P_{Y|x}$ to $Q_{Y|x}$ for all $x\in\mathcal{X}$ at population level is sufficient to eliminate conditional gap. 

However, accurately estimating the conditional distributions $P_{Y|x}$ and $Q_{Y|x}$ from finite samples is theoretically intractable in practice, particularly in moderate or high dimensions. To address this, we further bound $\text{ICG}(P, Q)$ via the joint Wasserstein distance $W(P_{XY}, Q_{XY})$ to provide a theoretically justified and more tractable alternative.

\begin{theorem}\label{theorem: upper bound of conditional Wasserstein distance}
    Let $\mu_{XY}$ and $\nu_{XY}$ be probability measures on the metric space $(\mathcal{X}\times\mathcal{Y}, d_{\mathcal{X}\mathcal{Y}})$. $\mu_{Y|x}$ and $\nu_{Y|x}$ are the corresponding conditional distributions of $Y$ given $X=x$. A joint distribution shift occurs between $\mu_{XY}$ and $\nu_{XY}$ such that $\mu_{X}\neq\nu_{X}$ and $\mu_{Y|X}\neq\nu_{Y|X}$. Denote $\gamma^*_{XYXY}\in\Gamma(\mu_{XY},\nu_{XY})$ the optimal transport plan of $W(\mu_{XY},\nu_{XY})$ and $\gamma^*_{XX}(x_1,x_2)=\int_{\mathcal{Y}^2}\dd\gamma^*_{XYXY}(x_1,y_1,x_2,y_2)$. If $\exists$ $\eta>0$ such that $\int_{\mathcal{X}}W(\mu_{Y|x},\nu_{Y|x})\dd\nu_X(x)\leq\eta\int_{\mathcal{X}\times\mathcal{X}}W(\mu_{Y|x},\nu_{Y|x})\dd\gamma^*_{XX}(x,x)$, the following inequality holds that $\int_{\mathcal{X}}W(\mu_{Y|x},\nu_{Y|x})\dd\nu_X(x)\leq\eta\cdot W(\mu_{XY},\nu_{XY})$.
\end{theorem}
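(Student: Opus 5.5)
By the hypothesis involving $\eta$, it suffices to show
\begin{equation*}
\int_{\mathcal{X}\times\mathcal{X}}W(\mu_{Y|x},\nu_{Y|x})\,\dd\gamma^*_{XX}(x,x)\leq W(\mu_{XY},\nu_{XY}).
\end{equation*}
The difficulty is reading the left-hand side: the integrand depends only on the diagonal point $x$. I would interpret it as the contribution of the optimal $X$-coupling $\gamma^*_{XX}$ concentrated on the diagonal $\{(x,x)\}$, i.e.\ as $\int_{\mathcal{X}} W(\mu_{Y|x},\nu_{Y|x})\,\dd\gamma^*_{XX}(x,x)$ with $\gamma^*_{XX}$ restricted to $\Delta=\{(x_1,x_2):x_1=x_2\}$. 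On this set $d_\mathcal{X}(x_1,x_2)=0$, so the transport cost reduces to the $Y$-part.

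\textbf{Step 1 (disintegration).} Disintegrate the optimal plan along its $XX$-marginal:
\begin{equation*}
\dd\gamma^*_{XYXY}(x_1,y_1,x_2,y_2)=\dd\gamma^*_{YY|x_1,x_2}(y_1,y_2)\,\dd\gamma^*_{XX}(x_1,x_2).
\end{equation*}
For $\gamma^*_{XX}$-a.e.\ pair on $\Delta$, the conditional plan $\gamma^*_{YY|x,x}$ should have first marginal $\mu_{Y|x}$ and second marginal $\nu_{Y|x}$. This uses the marginal constraints of $\gamma^*_{XYXY}$ together with the fact that mass leaving $x$ and arriving at $x$ carries the conditional laws. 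It holds when the diagonal part is the whole coupling; otherwise I would argue it for the diagonal restriction.

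Given this, $\gamma^*_{YY|x,x}\in\Gamma(\mu_{Y|x},\nu_{Y|x})$, so by the definition of $W$ as an infimum,
\begin{equation*}
W(\mu_{Y|x},\nu_{Y|x})\leq\int d_\mathcal{Y}(y_1,y_2)\,\dd\gamma^*_{YY|x,x}(y_1,y_2).
\end{equation*}

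\textbf{Step 2 (metric comparison).} Since $d_{\mathcal{X}\mathcal{Y}}$ is the 2-product metric,
\begin{equation*}
d_\mathcal{Y}(y_1,y_2)\leq\|(d_\mathcal{X}(x_1,x_2),d_\mathcal{Y}(y_1,y_2))\|_2=d_{\mathcal{X}\mathcal{Y}}.
\end{equation*}
Integrate against $\gamma^*_{XX}$ and recombine through the disintegration. Because $d_{\mathcal{X}\mathcal{Y}}\geq 0$, extending the integral from the diagonal part to the whole space only increases it. This gives
\begin{equation*}
\int W(\mu_{Y|x},\nu_{Y|x})\,\dd\gamma^*_{XX}(x,x)\leq\int d_{\mathcal{X}\mathcal{Y}}\,\dd\gamma^*_{XYXY}=W(\mu_{XY},\nu_{XY}).
\end{equation*}
The final equality holds by optimality of $\gamma^*_{XYXY}$, whose existence I take from standard lower semicontinuity and tightness arguments on Polish spaces. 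Multiplying by $\eta$ and chaining with the hypothesis finishes the proof.

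\textbf{Main obstacle.} The main obstacle is Step 1: identifying the conditional plan on the diagonal as a coupling of exactly $\mu_{Y|x}$ and $\nu_{Y|x}$. In general the disintegration only gives marginals $\gamma^*_{Y|x_1,x_2}$, which average to $\mu_{Y|x_1}$ over $x_2$. I would first try to make this rigorous by working directly with the diagonal restriction. If that fails, I would read the hypothesis's $\gamma^*_{XX}(x,x)$ notation as exactly what licenses this identification, so that the substantive content is absorbed into $\eta$.
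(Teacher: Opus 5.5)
Your outline is the paper's own argument: disintegrate $\gamma^*_{XYXY}$ along $\gamma^*_{XX}$, keep only the diagonal $x_1=x_2$, use $d_\mathcal{Y}\le d_{\mathcal{X}\mathcal{Y}}$, and bound each diagonal conditional cost from below by $W(\mu_{Y|x},\nu_{Y|x})$. The step you flag in Step 1 is a genuine gap, and it is not merely technical: the conditional plan $\gamma^*_{YY|x,x}$ need not belong to $\Gamma(\mu_{Y|x},\nu_{Y|x})$.

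The marginal constraints only say that the mass leaving $x$, aggregated over \emph{all} destinations $x_2$, has law $\mu_{Y|x}$. An optimal plan can keep at $x$ only the part of $\mu_{Y|x}$ that already matches $\nu_{Y|x}$ and send the rest to a nearby $x_2\neq x$. Your remark that the identification holds when the diagonal carries the whole coupling does not help, because that forces $\mu_X=\nu_X$, which the theorem excludes. Restricting to the diagonal does not help either, because the restricted conditional plan has as marginals only pieces of $\mu_{Y|x}$ and $\nu_{Y|x}$. The paper's proof has the same hole: it asserts $W(\mu_{Y|x_1},\nu_{Y|x_1})\le\int d_\mathcal{Y}\,\dd\gamma^*_{YY|x_1x_1}$ on the grounds that $\gamma^*_{YY|x_1x_1}$ is ``not necessarily optimal,'' tacitly assuming it is admissible.

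In fact both the intermediate inequality and the theorem fail. Take the following setup:
\begin{itemize}
\item $\mathcal{X}=\{a,b\}$ with $d_\mathcal{X}(a,b)=\epsilon$, $\mathcal{Y}=\mathbb{R}$, and $M>2\epsilon$;
\item $\mu_{XY}=\tfrac{1}{4}\delta_{(a,0)}+\tfrac{1}{4}\delta_{(a,M)}+\tfrac{1}{2}\delta_{(b,M)}$ and $\nu_{XY}=\tfrac{1}{4}\delta_{(a,0)}+\tfrac{3}{4}\delta_{(b,M)}$.
\end{itemize}
Then $\mu_X\neq\nu_X$, $\mu_{Y|a}=\tfrac{1}{2}(\delta_0+\delta_M)\neq\nu_{Y|a}=\delta_0$, and $\mu_{Y|b}=\nu_{Y|b}=\delta_M$.

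Every coupling moves mass at least $\tfrac{1}{4}$ from $a$ to $b$, so $W(\mu_{XY},\nu_{XY})\ge\epsilon/4$. The plan $(a,0)\mapsto(a,0)$, $(a,M)\mapsto(b,M)$, $(b,M)\mapsto(b,M)$ attains this bound. It is the unique optimizer, since equality forces $y_1=y_2$ almost surely. For this plan, $\gamma^*_{XX}(a,a)=\tfrac{1}{4}$, $\gamma^*_{XX}(b,b)=\tfrac{1}{2}$, and $\gamma^*_{YY|a,a}=\delta_{(0,0)}$, whose first marginal is $\delta_0\neq\mu_{Y|a}$.

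Consequently
\[
\int W(\mu_{Y|x},\nu_{Y|x})\,\dd\gamma^*_{XX}(x,x)=\tfrac{1}{4}\cdot\tfrac{M}{2}=\int W(\mu_{Y|x},\nu_{Y|x})\,\dd\nu_X(x),
\]
so the hypothesis holds with $\eta=1$. Yet the conclusion would read $M/8\le\epsilon/4$, which is false.

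What your argument does prove is the version in which $W(\mu_{Y|x},\nu_{Y|x})$ is replaced by the Wasserstein distance between the two marginals of $\gamma^*_{YY|x,x}$, integrated over $\{x_1=x_2\}$ against $\gamma^*_{XX}$. A correct statement would need its $\eta$-hypothesis phrased in those terms. Your fallback of absorbing the issue into $\eta$ changes the hypothesis rather than proving the stated theorem.
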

Substituting the notations $\mu$ and $\nu$ with $P$ and $Q$ in Theorem~\ref{theorem: upper bound of conditional Wasserstein distance}, we establish an upper bound for the integrated conditional Wasserstein distance as follows
\begin{equation}\label{eq: joint W-distance upper bound}
    \int_\mathcal{X} W(P_{Y|x}, Q_{Y|x})\dd Q_X(x)\leq \eta\cdot W(P_{XY},Q_{XY}).
\end{equation}
Finally, based on Eq.~(\ref{eq: joint W-distance upper bound}), we deduce that
\begin{equation}\label{eq: final bound of ICG}
    \text{ICG}(P,Q)\leq\sqrt{2\kappa L}\ (\eta\cdot W(P_{XY},Q_{XY})+1/4).
\end{equation}
Eq.~(\ref{eq: final bound of ICG}) states that ICG is bounded by $W(P_{XY}, Q_{XY})$, meaning that greater shifts in the joint distribution lead to a more significant decline in conditional coverage. However, the influence of $W(P_{XY}, Q_{XY})$ is moderated by scaling constants, which include $\kappa$, $L$, and $\eta$. The specific roles and particular implications of these constants for CP are detailed in Appendix~\ref{appendix: scaling constants}. The finite-sample behavior of $W(P_{XY}, Q_{XY})$ is examined in Appendix \ref{appendix: finite sample analysis}.
\section{Method}\label{sec: method}
The upper bound in Eq.~(\ref{eq: final bound of ICG}) provides a framework to ensure conditional coverage under distribution shift. Specifically, if a model $f_\theta$ transforms $Q_{XY}$ via the Wasserstein transport plan to $P_{XY}$, for $(X_{n+1},Y_{n+1})\sim Q_{XY}$, we have
\begin{equation}\label{eq: exact alignment}(\widebar{X}_{n+1},\widebar{Y}_{n+1}):=f_\theta(X_{n+1},Y_{n+1})\sim P_{XY}. 
\end{equation}
Therefore, the conditional prediction set $\mathcal{C}_\text{C}(\widebar{X}_{n+1})$ constructed on the normalized input ensures conditional coverage with respect to $P_{XY}$. However, to achieve $1-\alpha$ conditional coverage on $Q_{XY}$ during inference, the model $f_\theta$ must satisfy two additional requirements.
\begin{enumerate}[label=(\roman*), topsep=0pt, itemsep=0pt, leftmargin=15pt]
\item To obtain a prediction set of the original test input $X_{n+1}$ from $\mathcal{C}_\text{C}(\widebar{X}_{n+1})$, $f_\theta$ should be invertible:\label{itm:1}
\begin{equation}
   f_\theta^{-1}(\widebar{X}_{n+1},\widebar{Y}_{n+1})=(X_{n+1},Y_{n+1}).
\end{equation}
\item Since $Y_{n+1}$ is unobserved at inference, $f_\theta$ should not depend on $Y_{n+1}$ when transform of $X_{n+1}$:\label{itm:2}
\begin{equation}
    \widebar{X}_{n+1}\perp Y_{n+1}|X_{n+1}.
\end{equation}
\end{enumerate}
\subsection{Branched Normalizing Flow}
Normalizing flows are widely applied techniques for invertible mapping~\citep{kobyzev2020normalizing, papamakarios2021normalizing}. A formal definition of normalizing flows is presented in Definition~\ref {def: normalizing flow} with a demonstration in Figure~\ref{fig: pushforward}.
\begin{definition}[Normalizing flows~\citep{kobyzev2020normalizing}]\label{def: normalizing flow}
    Let $\mu_X$ be a probability measure in $\mathbb{R}^d$. For a measurable and invertible function $g: \mathbb{R}^d \rightarrow \mathbb{R}^d$, $\nu_X$ is the pushforward measure of $\mu_X$ through $g$, denoted as $\nu_X=g_\#\mu_X$, if $\nu_X(\mathcal{A}) = \mu_X(g^{-1}(\mathcal{A}))$ for every measurable set $\mathcal{A} \subseteq \mathbb{R}^d$. $g$ is referred to as the generative flow, and $f = g^{-1}$ is known as the normalizing flow with $\mu_X=f_\#\nu_X$.
\end{definition}
\begin{wrapfigure}[7]{r}{7.2cm}
\centering
\captionsetup{singlelinecheck = false, skip=5pt, justification=centering}
\vspace{-12pt}
  \includegraphics[width=0.5\textwidth]{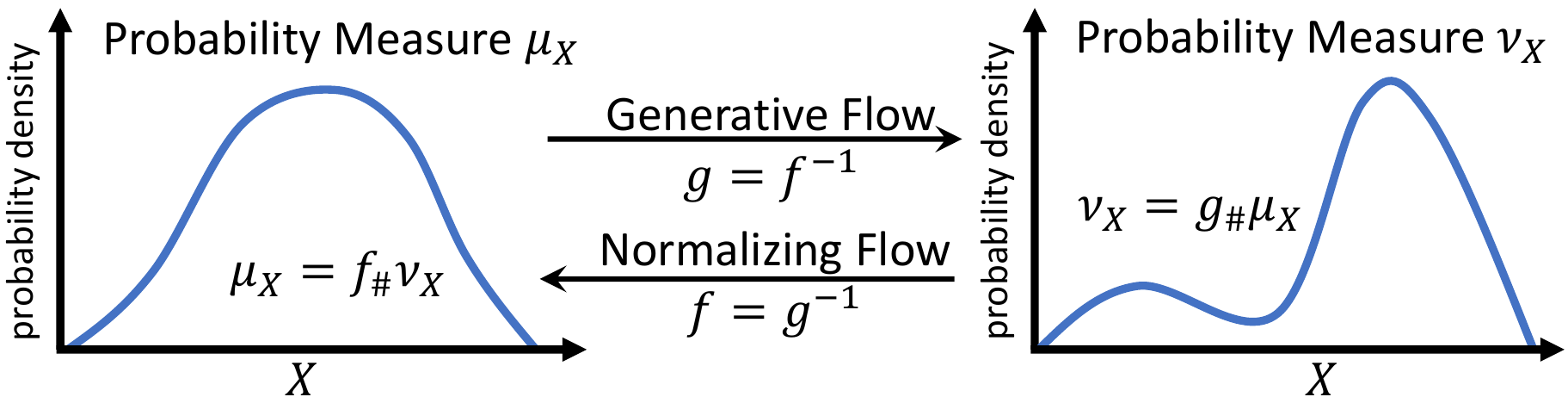}
  \vspace{-2pt}
  \caption{
  Invertible generative and normalizing flows.}
  \label{fig: pushforward} 
\end{wrapfigure}
To make $f_\theta$ meet the two requirements~\ref{itm:1} and~\ref{itm:2}, we introduce a special normalizing flow, called \textbf{Branched Normalizing Flow (BNF)}. For a given sample $(x,y)$, BNF transforms it with a branched structure such that 
\begin{equation}\label{eq: original BNF}
    (\widebar{x}, \widebar{y}):\shorteq{0.1em}f_\theta(x,y) \shorteq{0.1em} (f_{\theta_X}(x), f_{\theta_Y}(y)).
\end{equation}
The invertibility of BNF allows that $
    f_\theta^{-1}(\widebar{x}, \widebar{y}) = (f^{-1}_{\theta_X}(\widebar{x}), f^{-1}_{\theta_Y}(\widebar{y})) = (x, y),$
enabling the inverse transformation of $\mathcal{C}_\text{C}(\widebar{X}_{n+1})$ and satisfying requirement~\ref{itm:1}. Besides, the parameters $\theta_X$ and $\theta_Y$ are not shared between branches, so BNF does not explicitly couple the mappings of $x$ and $y$. Therefore, the normalized test input $\widebar{X}_{n+1}$ can be obtained without knowing 
 $Y_{n+1}$, fulfilling requirement~\ref{itm:2}. 

Consider a BNF realizing $f_{\theta\#}Q_{XY}\shorteq{0.1em}P_{XY}$ so that $f_{\theta_{X}\#}Q_X\shorteq{0.1em}P_X$ and $ f_{\theta_{Y}\#}Q_{Y|x}\shorteq{0.1em}P_{Y|\widebar{x}}$ by optimizing
\begin{equation}\label{eq: joint Wasserstein minimization}
    \min_\theta W(P_{XY},f_{\theta\#}Q_{XY}).
\end{equation}
Then, given a test input $X_{n+1}=x$, we normalize it as $\widebar{X}_{n+1}:=f_{\theta_X}(x)=\widebar{x}$. Since $f_{\theta\#}Q_{XY} = P_{XY}$, the transformed true target $\widebar{Y}_{n+1}:=f_{\theta_Y}({Y}_{n+1})$, together with $\widebar{X}_{n+1}$, should follow the calibration distribution, i.e, $(\widebar{X}_{n+1}, \widebar{Y}_{n+1})\sim P_{XY}$, as shown in Figure~\ref{fig: Method}(c) \nth{1} plot. Therefore, the adaptive prediction set of $\widebar{X}_{n+1}$ ensures 
% Then, as shown in Figure~\ref{fig: Method}(c), given a test input $X_{n+1}=x$, we normalize it as $\widebar{X}_{n+1}=f_{\theta_X}(x)=\widebar{x}$ and construct an adaptive prediction set $\mathcal{C}_\text{C}(\widebar{X}_{n+1})$ using $\tau(\widebar{x})$ as in Eq.~(\ref{eq: adaptive prediction set}). Since $f_{\theta\#}Q_{XY} = P_{XY}$, the transformed true target $\widebar{Y}_{n+1}=f_{\theta_Y}({Y}_{n+1})$, together with $\widebar{X}_{n+1}$, should follow the calibration distribution, i.e, $(\widebar{X}_{n+1}, \widebar{Y}_{n+1})\sim P_{XY}$.
% Therefore, by Eq.~(\ref{eq: conditional guarantee}), we have the conditional guarantee: 
\begin{equation}\label{eq: conditional guarantee of the normalized sample}
    \text{Pr}\left(\widebar{Y}_{n+1}\in \mathcal{C}_\text{C}(\widebar{X}_{n+1})|\widebar{X}_{n+1}=\widebar{x}\right)\geq 1-\alpha.
\end{equation}
BNF then constructs a prediction set of the original input $X_{n+1}$ by including all targets whose normalized counterparts lie in $\mathcal{C}_\text{C}(\widebar{X}_{n+1})$. Specifically, we define 
\begin{equation}\label{eq: BNF prediction set}
    \mathcal{C}_\text{BNF}(X_{n+1}):=\{ f_{\theta_Y}^{-1}(\widebar{y}): \widebar{y}\in \mathcal{C}_\text{C}(\widebar{X}_{n+1})\}.
\end{equation}
Proposition~\ref{proposition: equivalence with monoticity} in Appendix~\ref{appendix: additional theoretical statements} and the invertibility of the univariate function $f_{\theta_Y}$ imply that
\begin{equation}\label{eq: equivalence with monoticity}
        \widebar{Y}_{n+1}\in \mathcal{C}_\text{C}(\widebar{X}_{n+1})\iff f^{-1}_{\theta_Y}(\widebar{Y}_{n+1})\in \mathcal{C}_\text{BNF}({X}_{n+1}).
\end{equation}
Consequently, since $Y_{n+1}=f^{-1}_{\theta_Y}(\widebar{Y}_{n+1})$, the conditional guarantee is inherited by $\mathcal{C}_\text{BNF}(X_{n+1})$:
\begin{equation}\label{eq: BNF conditional guarantee}
    \text{Pr}\left(Y_{n+1}\in \mathcal{C}_\text{BNF}(X_{n+1})|X_{n+1}=x\right)\geq 1-\alpha.
\end{equation}
Even if $f_{\theta_X}$ and $f_{\theta_Y}$ do not share parameters, Wasserstein minimization in Eq.~(\ref{eq: joint Wasserstein minimization}) considers the dependency between them. We provide an illustrative example in Appendix~\ref{appendix: demonstration of implicit dependency} to show how this dependency is implicitly accounted for during optimization.
\subsection{Enhancing Fitting Ability via Gaussian Noise Augmentation}\label{sec: augmented BNF}
The monotonicity of the univariate $f_{\theta_Y}$ allows the equivalence in Eq.~(\ref{eq: equivalence with monoticity}), but also limits its fitting ability. As a result, it struggles to optimize Eq.~(\ref{eq: joint Wasserstein minimization}) for complex distributions, leading to unreliable conditional coverage. We empirically present the issue in Appendix~\ref{appendix: comparison between flows}.

To address this limitation, we adopt the augmentation technique proposed in~\citep{huang2020augmentednormalizingflowsbridging} and introduce a variant called \textbf{Augmented BNF} to gain higher fitting ability. Specifically, given a sample $(x,y)$, the augmented transformation is defined as
\begin{equation}\label{eq: normalization via augmented BNF}
        (\widebar{x}, \widebar{y}):= (f_{\theta_X}(x), f_{\theta_Y}^\text{aug}(y;\varepsilon)).
\end{equation}
where $\varepsilon$ is sampled from a Gaussain distribution $\mathcal{N}(0,1)$. Meanwhile, $f_{\theta_X}$ is unchanged from BNF. 
We implement Augmented BNF using Real NVP~\citep{dinh2016density, huang2020augmentednormalizingflowsbridging}, a representative coupling flow with architectural details provided in Appendix~\ref{appendix: architecture}.

Although $f^{\text{aug}}_{\theta_Y}(y;\varepsilon)$ remains invertible, it does not build a monotonic relationship between $y$ and $\bar{y}$. As a result, we can not rely on Proposition~\ref{proposition: equivalence with monoticity} to preserve the conditional guarantee. To address this issue, we propose an alternative approach to obtain a prediction set for test input $X_{n+1}$ with a sampled noise $\varepsilon_{n+1}$ by defining
\begin{equation}\label{eq: aug BNF prediction set}
    \mathcal{C}_\text{BNF}^{\text{aug}}(X_{n+1}):\text{}\shorteq{0.1em}\left\{y:f^{\text{aug}}_{\theta_Y}(y;\varepsilon_{n+1})\in \mathcal{C}_\text{C}(\widebar{X}_{n+1})\right\}.
\end{equation}
Proposition~\ref{proposition: equivalence without monoticity} in Appendix~\ref{appendix: additional theoretical statements} implies that $\widebar{Y}_{n+1}\in \mathcal{C}_\text{C}(\widebar{X}_{n+1})\iff Y_{n+1}\in \mathcal{C}_\text{BNF}^\text{aug}({X}_{n+1})$. Hence, based on Eq.~(\ref{eq: conditional guarantee of the normalized sample}), we conclude that
\begin{equation}\label{eq: BNF conditional guarantee with augmentataion}
    \text{Pr}\left(Y_{n+1}\in \mathcal{C}_\text{BNF}^\text{aug}(X_{n+1})|X_{n+1}=x\right)\geq 1-\alpha.
\end{equation}
\section{Application to Multi-Source Domains}\label{sec: application to MSDG}
In this work, we study joint distribution shift in multi-source domain generalization (MSDG)~\citep{sagawa2019distributionally}, a widely explored setting in CP~\citep{cauchois2024robust,zou2024coverage,xu2025wassersteinregularized}. In MSDG, the test distribution is a random mixture within the convex hull of the source distributions. Formally, given $K$ source distributions $D_{XY}^{k}$ for $k\shorteq{0.1em}1,..,K$, the test distribution satisfies $
    Q_{XY}\in\left\{\sum\nolimits_{k=1}^K\lambda_kD_{XY}^k: \lambda_1,...,\lambda_K\geq 0,\sum\nolimits_{k=1}^K\lambda_k=1\right\}$.
\begin{theorem}\label{theorem: Wasserstein upper bound under random mixture}
    Let $\{\nu^k_{XY}\}_{k=1}^K$ be probability measures defined on the metric space $(\mathcal{X} \times \mathcal{Y}, d_{\mathcal{XY}})$, and let $\nu_{XY}$ lie in the convex hull of these measures, i.e., $\nu_{XY} = \sum\nolimits_{k=1}^K \lambda_k \nu^k_{XY}$ with $\lambda_k \geq 0$ and $\sum\nolimits_{k=1}^K \lambda_k = 1$. For any probability measure $\mu_{XY}$ on $(\mathcal{X} \times \mathcal{Y}, d_{\mathcal{XY}})$, the following inequality holds: $        W(\mu_{XY},\nu_{XY})\leq\sum\nolimits_{k=1}^K \lambda_kW(\mu_{XY},\nu_{XY}^k)$.    
\end{theorem}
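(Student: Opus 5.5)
The plan is to prove convexity of $W(\mu_{XY},\cdot)$ in its second argument by gluing optimal couplings. For each $k=1,\dots,K$, let $\gamma^k\in\Gamma(\mu_{XY},\nu^k_{XY})$ be an optimal transport plan for $W(\mu_{XY},\nu^k_{XY})$. Such a plan exists by the standard existence result when the cost is lower semicontinuous, as the paper already implicitly assumes for $\gamma^*_{XYXY}$ in Theorem~\ref{theorem: upper bound of conditional Wasserstein distance}. If one prefers not to assume existence, take $\epsilon$-optimal plans and let $\epsilon\to 0$ at the end. Now define the mixture
\begin{equation*}
\gamma:=\sum\nolimits_{k=1}^K\lambda_k\gamma^k .
\end{equation*}
This is a probability measure on $(\mathcal{X}\times\mathcal{Y})^2$, because it is a convex combination of probability measures.

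The next step is to check that $\gamma\in\Gamma(\mu_{XY},\nu_{XY})$. Take measurable $\mathcal{A}\subseteq\mathcal{X}\times\mathcal{Y}$. For the first marginal,
\begin{equation*}
\gamma(\mathcal{A}\times(\mathcal{X}\times\mathcal{Y}))=\sum\nolimits_k\lambda_k\mu_{XY}(\mathcal{A})=\mu_{XY}(\mathcal{A}),
\end{equation*}
using $\sum_k\lambda_k=1$. For the second marginal,
\begin{equation*}
\gamma((\mathcal{X}\times\mathcal{Y})\times\mathcal{A})=\sum\nolimits_k\lambda_k\nu^k_{XY}(\mathcal{A})=\nu_{XY}(\mathcal{A}).
\end{equation*}
So $\gamma$ is an admissible coupling of $\mu_{XY}$ and $\nu_{XY}$.

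Since $W=W_1$ is an infimum over couplings, and integration is linear in the measure, we get
\begin{equation*}
W(\mu_{XY},\nu_{XY})\leq\int d_{\mathcal{XY}}\,\dd\gamma=\sum\nolimits_{k=1}^K\lambda_k\int d_{\mathcal{XY}}\,\dd\gamma^k=\sum\nolimits_{k=1}^K\lambda_kW(\mu_{XY},\nu^k_{XY}),
\end{equation*}
which is the claim. In the $\epsilon$-optimal variant, the right-hand side carries an extra $+\epsilon$, which vanishes as $\epsilon\to0$.

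The only delicate point is measure-theoretic bookkeeping: existence of the plans, or the $\epsilon$-argument, and the linearity of the integral of a nonnegative cost under a finite mixture. If any $W(\mu_{XY},\nu^k_{XY})=\infty$ with $\lambda_k>0$, the inequality holds trivially. Otherwise the argument above applies directly. This is routine, so I expect no real obstacle. It is worth remarking that the proof uses linearity only for $p=1$; for $W_p$ the same construction gives convexity of $W_p^p$ instead.
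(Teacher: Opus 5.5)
Your proposal is correct and follows essentially the same route as the paper: mix the optimal plans $\gamma^k$ with weights $\lambda_k$, check that the marginals are $\mu_{XY}$ and $\nu_{XY}$, and use linearity of the integral to bound the infimum. Your extra remarks are valid refinements that the paper's proof omits: $\epsilon$-optimal plans in place of assuming existence of optimal plans, the infinite-distance case, and convexity of $W_p^p$ for general $p$.
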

As outlined in~\citep{cauchois2024robust,xu2025wassersteinregularized}, achieving coverage guarantee for each source distribution ensures that the coverage on the test distribution is preserved. Inspired by the principle, Theorem~\ref{theorem: Wasserstein upper bound under random mixture} suggests a surrogate objective for Augmented BNF by $\sum\nolimits_{k=1}^K \lambda_kW(P_{XY},{f_{\theta}^\text{aug}}_\#D_{XY}^k)$.  
Since $\{\lambda_k\}_{k=1}^K$ are unknown, we minimize the expectation assuming they are uniformly distributed over the simplex:
\begin{equation}\label{eq: joint Wasserstein minimization under random mixture}
    \min_\theta\tfrac{1}{K}\sum\nolimits_{k=1}^K W(P_{XY},{f_{\theta}^\text{aug}}_\#D^k_{XY}).
\end{equation}
\begin{wrapfigure}[25]{r}{7cm}
\vspace{-0.8cm}
\begin{minipage}{7cm}
\begin{algorithm}[H]
   \caption{Augmented BNF + CQR under MSDG}
   \label{alg: Augmented BNF under MSDG}
\begin{algorithmic}
    \STATE {\bfseries Require:} {sets $\mathcal{S}_{D^k}$ for $k\shorteq{0.1em}1,...,K$; calibration set $\mathcal{S}_{P}$; test set $\mathcal{S}_{Q}$; $N$ epochs; $1-\alpha$ confidence; Augmented BNF $f_\theta^\text{aug}$; CQR algorithm $A_\text{CQR}$.}
    \vspace{-0.1cm}
    \\\hrulefill
    \STATE {\bfseries Training Phase:}
\FOR{$i \shorteq{0.1em} 1$ to $N$ epochs}
    \FOR{$k \shorteq{0.1em} 1$ to $K$}
        \STATE Initialize $\widebar{\mathcal{S}}_{D^k} \gets \emptyset$
        \FOR{each $(x, y) \in \mathcal{S}_{D^k}$}
            \STATE $(\widebar{x}, \widebar{y})\shorteq{0.1em} f_{\theta}^{\text{aug}}(x, y, \varepsilon)$, where $\varepsilon\sim\mathcal{N}(0,1)$\STATE $\widebar{\mathcal{S}}_{D^k} \gets \widebar{\mathcal{S}}_{D^k} \cup \{(\widebar{x}, \widebar{y})\}$
        \ENDFOR
    \ENDFOR
    \STATE $\min_\theta \frac{1}{K} \sum_{k=1}^K W\left(\widehat{P}_{XY}, {f_{\theta}^{\text{aug}}}_\#\widehat{D}^k_{XY} \right)$
\ENDFOR
\vspace{-0.1cm}
    \\\hrulefill
    \STATE {\bfseries Inference Phase:}
    \FOR{$x\text{ from }\mathcal{S}_Q$}
    \STATE $\bar{x}\shorteq{0.1em}f_{\theta_X}(x)$
    \STATE $\mathcal{C}_\text{CQR}(\widebar{x})\shorteq{0.1em} A_\text{CQR}\left(\bigcup_{k=1}^K \mathcal{S}_{D^k}, \mathcal{S}_P, \widebar{x}, 1 - \alpha\right)$
    \STATE Sample $\varepsilon\sim\mathcal{N}(0,1)$ 
    \STATE $\mathcal{C}_\text{BNF}^\text{aug}(x)\shorteq{0.1em}\{y:f^{\text{aug}}_{\theta_Y}(y;\varepsilon)\in \mathcal{C}_\text{CQR}(\widebar{x})\}$
    \ENDFOR
\end{algorithmic}
\end{algorithm}
\end{minipage}
\end{wrapfigure}
We work with finite samples in practice. 
Let $\mathcal{S}_{D^k}$ denote a set of samples drawn from the $k$-th source distribution $D_{XY}^k$ for $k=1,...,n$
, each of equal size, and let $\mathcal{S}_P$ be a calibration set drawn from $P_{XY}$.
During optimization, for each $(x,y)\in\mathcal{S}_{D^k}$, we sample a noise $\varepsilon\sim\mathcal{N}(0,1)$ and compute $(\widebar{x},\widebar{y})$ using Eq.~(\ref{eq: normalization via augmented BNF}). All normalized pairs are collected in $\widebar{\mathcal{S}}_{D^k}$. 
The empirical distributions $\widehat{P}_{XY}$
and ${f_{\theta}^\text{aug}}_\#\widehat{D}^k_{XY}$ are estimated from $\mathcal{S}_P$ and $\widebar{\mathcal{S}}_{D^k}$, respectively, allowing us to optimize the objective in Eq.~(\ref{eq: joint Wasserstein minimization under random mixture}).

Even if $(\widebar{X}_{n+1},\widebar{Y}_{n+1})\sim P_{XY}$, constructing $\mathcal{C}_\text{C}(\widebar{X}_{n+1})$ that satisfies the conditional guarantee under $P_{XY}$ remains challenging with finite samples~\cite{foygel2021limits}. 
In this work, we employ conformalized quantile regression (CQR)~\citep{romano2019conformalized}, which generates a prediction set $\mathcal{C}_\text{CQR}(\widebar{X}_{n+1})$ to approximate the $1-\alpha$ conditional coverage in Eq.~(\ref{eq: conditional guarantee of the normalized sample}).
Crucially, CQR is independent of the Augmented BNF and can be seamlessly integrated into our framework. Details of the CQR implementation are provided in Appendix~\ref{appendix: CQR}.
We denote the algorithm of CQR as $
    A_\text{CQR}(\bigcup\nolimits_{k=1}^K \mathcal{S}_{D^k}, \mathcal{S}_P, \widebar{X}_{n+1}, 1 - \alpha)$.
Given a test set $\mathcal{S}_Q$ from $Q_{XY}$, we outline the combination of Augmented BNF + CQR in Algorithm~\ref{alg: Augmented BNF under MSDG}.
\vspace{-5pt}
\section{Experiment}\label{sec: experiment}
\vspace{-5pt}
\subsection{Experimental Setup}
We conduct Augmented BNF using the normflows library~\citep{Stimper2023}.
% The experiment environment is introduced in Appendix~\ref{appendix: exp environment}. 
To estimate the empirical Wasserstein distance, we adopt the Sinkhorn algorithm~\citep{cuturi2013sinkhorn,knight2008sinkhorn} via the geomloss library~\citep{feydy2019interpolating} in Appendix~\ref{appendix: sinkhorn algorithm}.

\textbf{Baselines.} Five methods are selected for comparison. Split CP (SCP)~\citep{papadopoulos2002inductive} ensures marginal coverage under i.i.d. data; Importance-Weighted CP (IW-CP)~\citep{tibshirani2019conformal} addresses covariate shift; Worst-Case CP (WC-CP)~\citep{cauchois2024robust,zou2024coverage,gendler2021adversarially} provides conservative guarantees under joint distribution shift; and Wasserstein-Regularized CP (WR-CP)~\citep{xu2025wassersteinregularized} improves robustness under MSDG. We also include CQR alone, without Augmented BNF, to highlight its limitations in pursuing conditional coverage under shift. Additional details about the baselines are given in Appendix~\ref{appendix: baselines}, with examples in Figure~\ref{fig: examples}.

\textbf{Datasets.} We set $K=3$ under both \textbf{synthetic} and \textbf{natural} distribution shifts. Synthetic shifts are introduced in the PTS dataset~\citep{physicochemical_properties_of_protein_tertiary_structure_265}. For real-world applications, we consider (i) sales prediction over time with Bike Rental~\citep{bike_sharing_275}, (ii) multi-location traffic forecasting with Seattle-Loop~\citep{cui2019traffic}, PEMSD4, and PEMSD8~\citep{bai2020adaptive}, (iii) unbiased healthcare with MIMIC-IV~\citep{johnson2023mimic}, eICU~\citep{pollard2018eicu}, and data from a collaborating hospital, and (iv) epidemic modeling across pandemic phases with U.S. Influenza-like Illness (ILI)~\citep{deng2020cola}. Data preparation for MSDG details are in Appendix~\ref{appendix: datasets}.

\textbf{Evaluation metric.} The worst-slice coverage (WSC)~\citep{cauchois2021knowing} measures the minimal coverage over any sufficiently large slice in $\mathcal{X}$, serving as an empirical proxy for the robustness of conditional coverage. However, as reviewed in Appendix~\ref{appendix: wsc}, WSC captures only the minimal (i.e., most insufficient) coverage and overlooks regions where coverage may be overly conservative. To address these weaknesses, we propose \textbf{worst-slice coverage gap (WSCG)} that captures both under- and over-coverage by taking the maximum absolute deviation from $1-\alpha$ over slices containing at least 10\% of test samples. Specifically, for any CP methods that produce a prediction set $\mathcal{C}(x)$ given an input $x$,
\begin{equation}\label{eq: WSCG}
    \text{WSCG}=\sup\nolimits_{\mathcal{S} \subseteq \mathcal{X}} \left|\Pr(y\in C(x)|x\in\mathcal{S})-(1-\alpha)\right|,\quad \text{s.t.} \Pr(x\in\mathcal{S}|(x,y)\in\mathcal{S}_Q)\geq 0.1.
\end{equation}
\begin{figure*}[t]
\centering
\captionsetup{singlelinecheck = false, justification=justified}
  \includegraphics[scale=0.4]{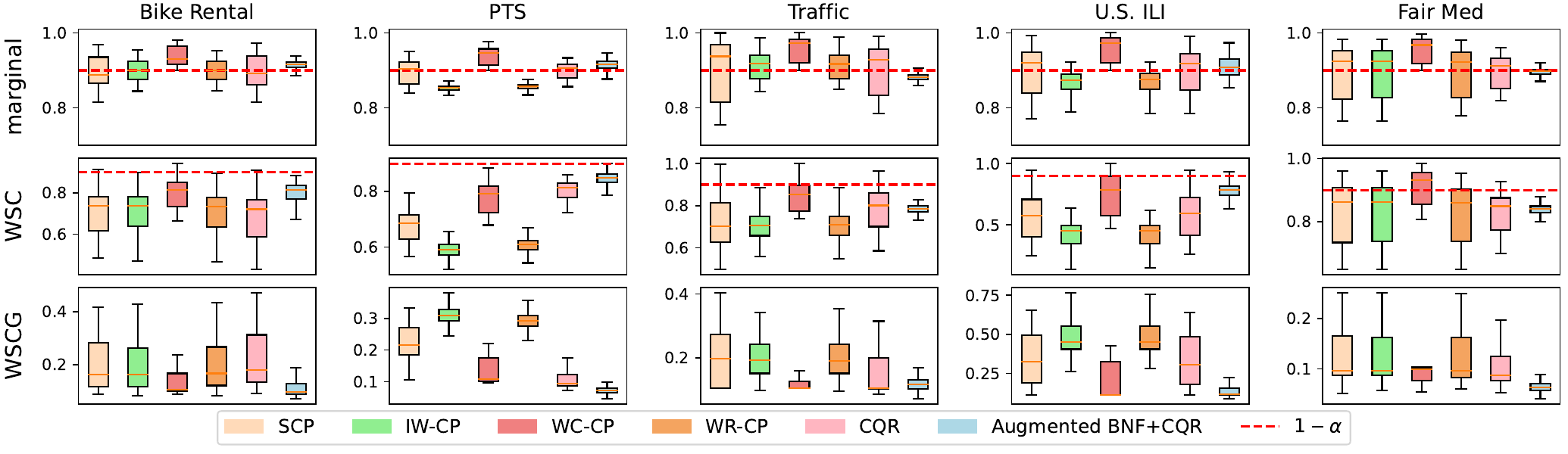}
  % \vspace{-15pt}
  \caption{Marginal coverage, WSC, and WSCG of Augmented BNF+CQR and baselines with $1-\alpha=0.9$: our method achieves the lowest WSCG and brings the marginal coverage and WSC close to the expected confidence.}
  \label{fig: aug_coverage} 
  \vspace{-13pt}
\end{figure*}
\vspace{-10pt}
\subsection{Main Result}
We evaluate the combination of Augmented BNF and CQR, along with five baseline methods, across 10 independent trials for each dataset. The results are summarized in Figure~\ref{fig: aug_coverage}, which presents box plots of coverage metrics under $1-\alpha = 0.9$. For each trial, 100 random mixtures were generated as test sets. Our approach consistently achieves marginal coverage and WSC values close to the desired confidence level $1-\alpha=0.9$, and obtains the lowest WSCG. We examine the generalization ability of the Augmented BNF across varying sample sizes and $K$ values in Appendix~\ref{appendix: approximation ability}. Besdies, we evaluate the performance of Augmented BNF across a range of $1-\alpha$ in Appendix~\ref{appendix: alpha ablation}.

\begin{wrapfigure}[8]{r}{7.1cm}
\centering
\captionsetup{singlelinecheck = false, skip=5pt, justification=centering}
\vspace{-10pt}
  \includegraphics[width=0.52\textwidth]{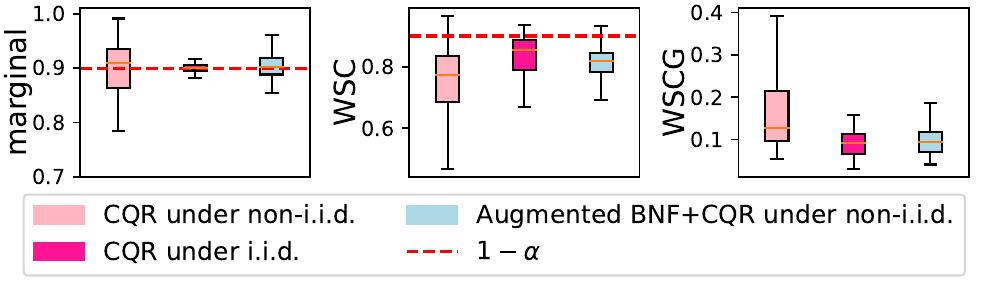}
  \vspace{-16pt}
  \caption{
  Comparison of Augmented BNF + CQR under distribution shift with CQR in shifted and i.i.d. settings.}
  \label{fig: approximation} 
\end{wrapfigure}
Since Eq.~(\ref{eq: joint Wasserstein minimization under random mixture}) is optimized empirically, the transformed test data may not perfectly follow the calibration distribution. As a result, Augmented BNF cannot fully eliminate WSCG or exactly achieve the target marginal coverage in Figure~\ref{fig: aug_coverage}. To assess this gap, we compare Augmented BNF + CQR under distribution shift with CQR in both shifted and i.i.d. settings, showing that our method closely approaches i.i.d. performance in Figure~\ref{fig: approximation}. Notably, even under i.i.d. data, CQR exhibits nonzero WSCG due to its own approximation error. Appendix~\ref{appendix: lower bound} further derives lower bounds on coverage under imperfect transformations and the approximation error.
\vspace{-3pt}
\section{Discussion}
\vspace{-3pt}
\subsection{Feature vs. Stochastic Conditioning in BNF}~\label{sec: feature conditioning}
A natural extension of the original BNF in Eq.~(\ref{eq: original BNF}) is to condition the $Y$-transformation on features, denoted by $f_{\theta_Y}^{\text{fea}}(y; x)$, in order to explicitly capture input dependence. We refer to this variant as \textbf{Feature-Conditioned BNF}. However, this modification exacerbates the curse of dimensionality, worsening the rate from $|\mathcal{S}_{D^{k}}|/(d+2)$ to $|\mathcal{S}_{D^{k}}|/(2d+1)$, where $d$ is the feature dimension. Consequently, it yields less robust conditional coverage than Augmented BNF, as shown in Figure~\ref{fig: feature aggregated}.

In contrast, $f_{\theta_Y}^\text{aug}(y; \varepsilon)$ in Augmented BNF can be viewed as conditioning on a simple one-dimensional Gaussian noise variable $\varepsilon$. Although this design leaves the learning of feature dependence entirely to the joint Wasserstein minimization in Eq.~(\ref{eq: joint Wasserstein minimization}), the injected noise $\varepsilon$ effectively introduces latent degrees of freedom, turning $f_{\theta_Y}^{\text{aug}}$ into a stochastic mapping. This allows the model to represent \emph{a distribution over transformations}, thereby substantially increasing expressiveness beyond a deterministic map.

\begin{wrapfigure}[10]{r}{7.1cm}
\centering
\captionsetup{singlelinecheck = false, skip=5pt, justification=justified}
  \includegraphics[width=0.52\textwidth]{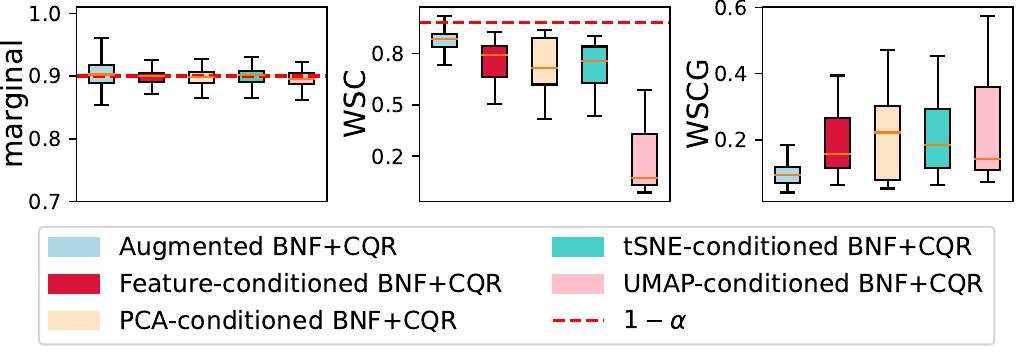}
  \vspace{-10pt}
  \caption{
  Comparison of conditioning the transformation of $Y$ on the feature and its one-dimensional projection. The result is aggregated over datasets.}
  \label{fig: feature aggregated} 
\end{wrapfigure}
Replacing the original input $x$ of $f^{\text{fea}}_{\theta_Y}(y;x)$ with a one-dimensional representation $\tilde{x}$ (e.g., via PCA~\citep{abdi2010principal}, t-SNE~\citep{van2008visualizing}, or UMAP~\citep{mcinnes2018umap}) can alleviate the curse of dimensionality. Nevertheless, as a deterministic and compressed projection, $\tilde{x}$ inevitably loses information from the original input and lacks the flexibility of stochastic conditioning. Consequently, it is less expressive than $\varepsilon$ and less informative $x$, leading to inferior performance, as shown in Figure~\ref{fig: feature aggregated}. The results for each dataset are shown in Figure~\ref{fig: feature}.
% \begin{figure*}[t]
% \centering
% \captionsetup{singlelinecheck = false, justification=justified}
%   \includegraphics[scale=0.4]{fig_feature.pdf}
%   \vspace{-15pt}
%   \caption{Comparison of conditioning the $Y$-transformation on feature and its one-dimensional projection.}
%   \label{fig: feature}
%     \vspace{-15pt}
% \end{figure*}

\subsection{Enhancing Prediction Efficiency with Source-Conditioned Transformations}
Prediction efficiency, typically measured by the size of the prediction set, is a key metric in CP, with smaller sets being more informative at a fixed coverage level. In Augmented BNF, however, the sampled noise $\varepsilon_{n+1}$ in Eq.~(\ref{eq: aug BNF prediction set}) is source-agnostic, preventing $f_{\theta_Y}^{\text{aug}}$ from identifying the origin of a test sample. To maintain valid coverage across all sources, the prediction set $\mathcal{C}_\text{BNF}^{\text{aug}}(X_{n+1})$ must therefore account for all possibilities, leading to larger sets. To improve efficiency, we introduce \textbf{Augment-Conditioned BNF}, which incorporates source-specific conditioning into the transformation. This allows the model to better distinguish among sources and produce smaller prediction sets while preserving coverage robustness, as demonstrated in Figure~\ref{fig: coverage size aggregated}. Appendix~\ref{appendix: prediction efficiency} provides a detailed introduction to Augment-Conditioned BNF and its performance on each dataset.
\begin{figure*}[h]
\centering
\captionsetup{singlelinecheck = false, justification=justified}
  \includegraphics[scale=0.4]{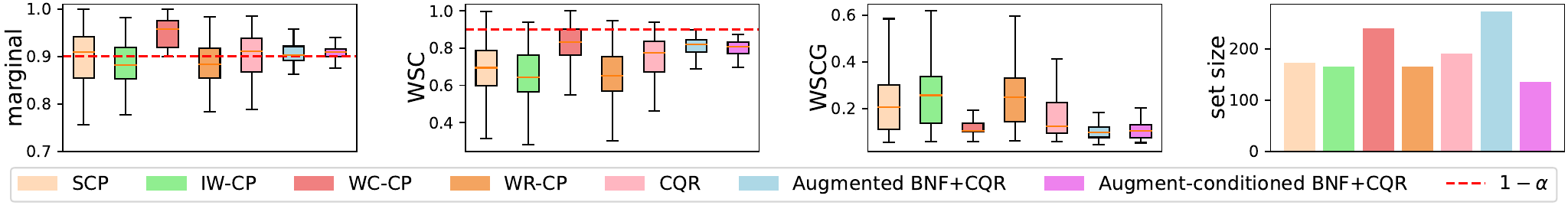}
  \vspace{-5pt}
  \caption{Standard Augmented BNF can produce large prediction sets under MSDG. The Augment-Conditioned variant significantly reduces set size and preserves coverage performance. The result is aggregated over datasets.} 
  \label{fig: coverage size aggregated}
    \vspace{-15pt}
\end{figure*}

\subsection{Distribution Shift as Label Perturbation}\label{sec: perturbation}
To further validate our method, we introduce an additional shift form via label perturbation~\cite{sesia2023adaptive, einbinder2022conformal}. Specifically, we sample perturbed labels uniformly from the interval $[Y,1.5Y]$
. This scale-based perturbation ensures that the induced shift is sufficiently pronounced. We construct ten such shifted environments and train BNF to transport them back to the unperturbed calibration distribution. During inference, we evaluate on 100 randomly shifted distributions. In Figure~\ref{fig: perturbation aggregated}, our method achieves a favorable trade-off among robust conditional coverage and high prediction efficiency, proving effectiveness beyond the multi-source setup. The results for each dataset are presented in Figure~\ref{fig: perturbation}.
\begin{figure*}[h]
\centering
\captionsetup{singlelinecheck = false, justification=justified}
  \includegraphics[scale=0.4]{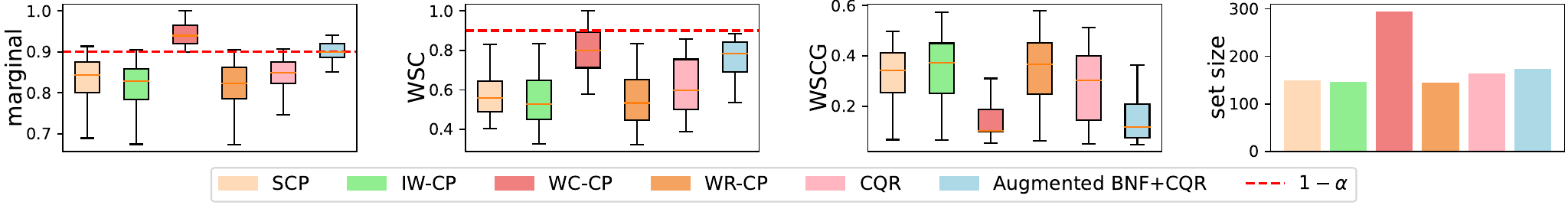}
  \vspace{-5pt}
  \caption{Distribution shift is induced by label perturbation. Our method consistently maintains comparably robust coverage without a significant cost in prediction efficiency. The result is aggregated over datasets.} 
  \label{fig: perturbation aggregated}
    \vspace{-15pt}
\end{figure*}
\vspace{-3pt}
\section{Conclusion}
\vspace{-3pt}
This work proposes the Conditional Coverage Gap (CCG) to evaluate the robustness of conditional coverage at a given test input, and defines the Integrated Coverage Gap (ICG) as its expectation over the test feature distribution. We bound ICG using the Wasserstein distance \( W(P_{XY}, Q_{XY}) \), capturing how distribution shift propagates from the data space to the conformal score space. To ensure \( 1 - \alpha \) conditional coverage under shift, we introduce the Branched Normalizing Flow (BNF). The invertibility of BNF enables mapping adaptive prediction sets from \( P_{XY} \) to \( Q_{XY} \), while the branched structure allows input $X_{n+1}$ transformation without needing \( Y_{n+1} \) at test time. BNF is applied to both synthetic and real-world distribution shifts to validate its effectiveness.

\bibliographystyle{unsrt}
\bibliography{neurips_2026}

%%%%%%%%%%%%%%%%%%%%%%%%%%%%%%%%%%%%%%%%%%%%%%%%%%%%%%%%%%%%
\newpage
\appendix

\section{Insight into scaling constants of the ICG bound}\label{appendix: scaling constants}

We provide more intuitive explanations of the scaling constants in Eq.~(\ref{eq: final bound of ICG}), clarifying their roles and implications for CP.

First, the term $L$, representing the Lebesgue density bound of $P_{V|x}$, captures the concentration of conformal scores at $x$. A higher $L$ indicates that the calibration scores are tightly clustered around certain values of $v$, which makes the conditional coverage more sensitive to distribution shifts in test conformal scores. Hence, this highlights how the shape of the calibration conformal score distribution directly influences coverage robustness.

Second, $\kappa$ provides an interpretation of how the score function $s(x,y)$ influences robustness under distribution shift. Specifically, the continuity constant $\kappa\geq(|s(x,y_1)-s(x,y_2)|)/(|y_1-y_2|)$ for all $y_1,y_2\in\mathcal{Y},x\in\mathcal{X}$. It captures the sensitivity of the score function $s(x,y)$ to changes in the label $y$, given a fixed input $x$. A smaller $\kappa$ implies that the conformal score is relatively insensitive to variations in the label, meaning that even under a large concept shift (i.e., large $W(P_{Y|x}, Q_{Y|x})$), the induced shift in conformal scores $W(P_{V|x}, Q_{V|x})$ remains small.

Lastly, the term $\eta$, introduced in Theorem~\ref{theorem: upper bound of conditional Wasserstein distance}, quantifies the extent to which the concept shift contributes to the overall joint distribution shift. A smaller $\eta$ indicates that most of the distributional difference between $P_{XY}$ 
 and $Q_{XY}$ does not stem from the difference between $P_{Y|x}$ 
 and $Q_{Y|x}$  for $x\in\mathcal{X}$. In such cases, the impact of $W(P_{XY},Q_{XY})$ on the coverage gap is limited, and accordingly, the upper bound in Eq.~(\ref{eq: final bound of ICG}) becomes tighter.

 \section{Finite-sample approximation of Wasserstein distance}\label{appendix: finite sample analysis}
In practice, the population forms of calibration and test distributions are typically inaccessible, so we may approximate $W(P_{XY},Q_{XY})$ based on empirical distributions.

\begin{definition}[$p$-Wasserstein Distance between Empirical Distributions~\citep{panaretos2019statistical}]\label{def: Wassertein distance between empirical distributions}
    Let \( \{x_i\}_{i=1}^n \sim \mu_X \) and \( \{x'_j\}_{j=1}^m \sim \nu_X \) be i.i.d.\ samples from two distributions on a metric space \( (\mathcal{X}, d_\mathcal{X}) \). The Dirac measure \( \varepsilon_{x} \) is the point mass at \( x \in \mathcal{X} \). The empirical measures are defined as
    \[
    \widehat{\mu}_X = \frac{1}{n} \sum_{i=1}^n \varepsilon_{x_i}, \quad 
    \widehat{\nu}_X = \frac{1}{m} \sum_{j=1}^m \varepsilon_{x'_i}.
    \]
    $C\in\mathbb{R}^{n\times m}$ is a cost matrix where each element $C_{ij}=d_\mathcal{X}(x_i,x'_j)$ measures the distance between sample $x_i$ from $\widehat{\mu}_X$ and $x'_j$ from $\widehat{\nu}_X$. Let \( \gamma \in \mathbb{R}^{n \times m} \) be a transportation plan matrix, where each \( \gamma_{ij} \geq 0 \) represents the mass transported from \( x_i \) to \( x'_j \). The set of admissible transport plans is
\[
\Gamma(\widehat{\mu}_X, \widehat{\nu}_X) = \left\{ \gamma \in \mathbb{R}_{\geq 0}^{n \times m} \;\middle|\; \sum_{j=1}^m \gamma_{ij} = \frac{1}{n},\; \sum_{i=1}^n \gamma_{ij} = \frac{1}{m} \right\}.
\]
    The \( p \)-Wasserstein distance between empirical distributions \( \widehat{\mu}_X \) and \( \widehat{\nu}_X \) is then given by
\[
{W}_p(\widehat{\mu}_X, \widehat{\nu}_X) = \left( \min_{\gamma \in \Gamma(\widehat{\mu}_X, \widehat{\nu}_X)} \sum_{i=1}^n \sum_{j=1}^m \gamma_{ij} \, C_{ij}^p \right)^{1/p}.
\]
\end{definition}
Let \( \widehat{P}_{XY} \) and \( \widehat{Q}_{XY} \) be the empirical distributions based on \( n \) and $m$ i.i.d.\ samples drawn from \( P_{XY} \) and \( Q_{XY} \), respectively. Our goal is to bound the deviation between the empirical and population Wasserstein distances, i.e., to analyze how $
W(\widehat{P}_{XY}, \widehat{Q}_{XY})$ converges to $W(P_{XY}, Q_{XY})$ as $n$ increases.
\begin{definition}[Upper Wasserstein Dimension~\citep{dudley1969speed}]
    Given a set $\mathcal{A}\subseteq\mathcal{X}$, the $\epsilon$-covering number, denoted $\mathcal{N}_\epsilon(\mathcal{A})$, is the smallest $n$ such that $n$ closed balls, $\mathcal{U}_1,...,\mathcal{U}_n$, of diameter $\epsilon$ achieve $\mathcal{A}\subseteq\cup_{1\leq i\leq m}\mathcal{U}_i$. For a distribution $\mu_X$ in $\mathcal{X}$, the $(\epsilon,\zeta)$-dimension is $d_\epsilon(\mu_X,\zeta)={-\log(\inf\{\mathcal{N}_\epsilon(\mathcal{A}):\mu_X(\mathcal{A})\geq1-\zeta\})}/{\log\epsilon}$. The upper Wassersteion dimension with $p=1$ is 
    \begin{equation}
        d_W(\mu_X)=\inf\{\varphi\in(2,\infty):\limsup\nolimits_{\epsilon\rightarrow0}d_\epsilon(\mu_X,\epsilon^{\frac{\varphi}{\varphi-2}})\leq \varphi\}.
    \end{equation} 
\end{definition}
\begin{theorem}\label{theorem: Wasserstein distance convergence to expectation}
    Given a probability measure $\mu_X$ in space $\mathcal{X}$, let $\sigma>d_W(\mu_X)$. If $\widehat{\mu}_X$ is an empirical measure corresponding to $n$ i.i.d. samples from $\mu_X$, $\exists\lambda\in\mathbb{R}$ such that $\mathbb{E}[W(\mu_X,\widehat{\mu}_X)]\leq\lambda n^{-1/\sigma}$. Furthermore, for $t>0$, $\textup{Pr}(W(\mu_X,\widehat{\mu}_X)\geq\mathbb{E}[W(\mu_X,\widehat{\mu}_X)]+t)\leq e^{-2nt^2}$~\citep{weed2019sharp}. 
\end{theorem}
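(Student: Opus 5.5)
The plan is to treat the two claims separately. The expectation bound comes from a multiscale (dyadic partition) coupling argument in the spirit of \citep{dudley1969speed,weed2019sharp}. The tail bound comes from McDiarmid's bounded-differences inequality applied to $W(\mu_X,\widehat{\mu}_X)$ as a function of the $n$ samples. Throughout I will assume, as is implicit in the constant $e^{-2nt^2}$, that $\mathcal{X}$ has diameter at most $1$ (after rescaling $d_\mathcal{X}$). Without this normalization the exponent must carry a factor $\mathrm{diam}(\mathcal{X})^{-2}$.

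\textbf{Expectation bound.} Fix $\sigma>d_W(\mu_X)$ and pick $\varphi$ with $d_W(\mu_X)<\varphi<\sigma$. By definition of $d_W$, for all small $\epsilon$ there is a set $\mathcal{A}_\epsilon$ with $\mu_X(\mathcal{A}_\epsilon)\geq 1-\epsilon^{\varphi/(\varphi-2)}$ that is covered by at most $\epsilon^{-\varphi}$ balls of diameter $\epsilon$.
\begin{enumerate}[topsep=0pt, itemsep=0pt, leftmargin=15pt]
\item I build a nested sequence of partitions $\mathcal{Q}_1,\dots,\mathcal{Q}_k$ at scales $\epsilon_j=3^{-j}$. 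Each partition refines the covering of $\mathcal{A}_{\epsilon_j}$, with the complement lumped into one cell.
\item I use the standard multiscale inequality
\begin{equation*}
W(\mu,\widehat{\mu})\leq \epsilon_k+\sum_{j=1}^{k}\epsilon_{j-1}\sum_{Q\in\mathcal{Q}_j}|\mu(Q)-\widehat{\mu}(Q)|.
\end{equation*}
It is proved by transporting mass cell-by-cell down the tree and paying $\epsilon_{j-1}$ for each unit of mismatch at level $j$.
\item I take expectations and apply Cauchy--Schwarz: $\mathbb{E}|\mu(Q)-\widehat{\mu}(Q)|\leq\sqrt{\mu(Q)/n}$ and $\sum_Q\sqrt{\mu(Q)}\leq\sqrt{|\mathcal{Q}_j|}$. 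Together with the covering bound $|\mathcal{Q}_j|\lesssim\epsilon_j^{-\varphi}$, this shows level $j$ contributes $\lesssim\epsilon_j^{1-\varphi/2}n^{-1/2}$. The mass outside $\mathcal{A}_{\epsilon_j}$ contributes at most $\epsilon_j\cdot\epsilon_j^{\varphi/(\varphi-2)}$-type terms, and the exponent $\varphi/(\varphi-2)$ in the definition of $d_W$ is chosen precisely so these are absorbed.
\item I sum the geometric series, which is dominated by the finest scale since $\varphi>2$. Choosing $\epsilon_k\asymp n^{-1/\varphi}$ balances $\epsilon_k$ against $\epsilon_k^{1-\varphi/2}n^{-1/2}$ and gives $\mathbb{E}W\leq\lambda n^{-1/\varphi}\leq\lambda n^{-1/\sigma}$ for a constant $\lambda$.
\end{enumerate}
The main obstacle is this step, specifically handling the exceptional set $\mathcal{X}\setminus\mathcal{A}_{\epsilon}$ consistently across scales so that the partitions stay nested. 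If that bookkeeping becomes messy, I would fall back on citing \citep{weed2019sharp}, Proposition~5, directly, since the statement is attributed there.

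\textbf{Concentration.} View $F(x_1,\dots,x_n)=W(\mu_X,\frac1n\sum_i\delta_{x_i})$. Replacing a single $x_i$ by $x_i'$ changes $\widehat{\mu}_X$ by moving mass $1/n$ from $x_i$ to $x_i'$. By the triangle inequality for $W$,
\begin{equation*}
|F(\dots,x_i,\dots)-F(\dots,x_i',\dots)|\leq W\bigl(\widehat{\mu}_X,\widehat{\mu}'_X\bigr)\leq \tfrac1n d_\mathcal{X}(x_i,x_i')\leq \tfrac1n .
\end{equation*}
McDiarmid's inequality with $c_i=1/n$ then gives
\begin{equation*}
\Pr\bigl(F-\mathbb{E}F\geq t\bigr)\leq\exp\bigl(-2t^2/\textstyle\sum_i c_i^2\bigr)=e^{-2nt^2},
\end{equation*}
which is the claimed tail bound. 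This part is routine once the diameter normalization is in place.
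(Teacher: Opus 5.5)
The paper does not prove this statement; it imports it from Weed and Bach. Their argument has the same architecture as yours: a dyadic multiscale coupling with Cauchy--Schwarz for the mean, and McDiarmid with $c_i=1/n$ for the tail. Your concentration half is correct. You are also right that it needs $\mathrm{diam}(\mathcal{X})\le 1$, which the paper's statement omits.

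The expectation half has a genuine gap in step 3, and it is more serious than the nestedness issue you flagged. First, a lumped complement cell has diameter up to $1$, so mass inside it cannot be charged at the rates $\epsilon_{j-1}$ or $\epsilon_k$ that appear in your multiscale inequality. Second, and decisively, even granting those rates, scale-by-scale exceptional sets contribute $\sum_{j\ge1}\epsilon_{j-1}\epsilon_j^{\varphi/(\varphi-2)}$ (or $\sum_j\mu_X(\mathcal{X}\setminus\mathcal{A}_{\epsilon_j})$ at rate $1$). This is a convergent series dominated by the coarsest level $j=1$, hence a constant independent of $n$. So it is not absorbed into $\lambda n^{-1/\varphi}$, contrary to your claim about the exponent. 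A sharper per-cell estimate cannot rescue this either: on a region of mass $m$ split into many cells of mass $\ll 1/n$, the expected mismatch really is of order $m$.

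The missing idea is that the exceptional set must be chosen depending on $n$; this is the actual role of the exponent $\varphi/(\varphi-2)$.
\begin{itemize}
\item Put $\bar\epsilon:=n^{-(\varphi-2)/\varphi^2}$. At every level with $\epsilon_j\ge\bar\epsilon$, use the single set $\mathcal{A}_{\bar\epsilon}$. Its covering number at scale $\epsilon_j$ is at most $\bar\epsilon^{-\varphi}$, so the Cauchy--Schwarz term is $\sqrt{\bar\epsilon^{-\varphi}/n}=n^{-1/\varphi}$. Its discarded mass is $\bar\epsilon^{\varphi/(\varphi-2)}=n^{-1/\varphi}$, so the two balance exactly.
\item At the finer levels $n^{-1/\varphi}\lesssim\epsilon_j<\bar\epsilon$, use $\mathcal{A}_{\epsilon_j}$. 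Their discarded masses are geometric and sum to $O(n^{-1/\varphi})$.
\item Set $S:=\mathcal{A}_{\bar\epsilon}\cap\bigcap_{\epsilon_j<\bar\epsilon}\mathcal{A}_{\epsilon_j}$. Then $\mu_X(S^c)=O(n^{-1/\varphi})$, and $S$ has the needed covering bounds at every scale. You can carry $S^c$ as one unrefined cell at all levels: nestedness becomes trivial, and transport inside it costs at most $\mu_X(S^c)$.
\item What remains is a nested partition of $S$ whose level-$j$ counts are controlled by covering numbers of $S$ at comparable scales; this is the partition lemma of Weed and Bach.
\item For small $n$, where $\bar\epsilon$ lies outside the range in which the covering bound holds, use the threshold scale instead. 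This adds only an $O(n^{-1/2})\le O(n^{-1/\varphi})$ term.
\end{itemize}
With these changes your steps 3--4 go through. As written, the sketch yields no decay in $n$, and only your fallback of citing Weed and Bach (which is all the paper does) delivers the claim.
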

\begin{theorem}\label{theorem: Wasserstein approximation}
    Given probability measures $\mu_X$ and $\nu_X$ in space $\mathcal{X}$, let $\sigma_\mu>d_W(\mu_X)$ and $\sigma_\nu>d_W(\nu_X)$. Denote $\widehat{\mu}_X$ and $\widehat{\nu}_X$ empirical measures corresponding to $n$ and $m$ i.i.d. samples from $\mu_X$ and $\nu_X$, respectively. For $t_\mu,t_\nu>0$, $\exists\lambda_\mu, \lambda_\nu>0$ with probability at least $(1-e^{-2n{t_\mu}^2})(1-e^{-2m{t_\nu}^2})$ that
    \begin{equation}\label{eq: Wasserstein approximation with absolute}
        \left|W(\mu_{X}, \nu_{X}) - W(\widehat{\mu}_{X}, \widehat{\nu}_{X})\right| \leq \lambda_\mu n^{-1/\sigma_\mu} + \lambda_\nu m^{-1/\sigma_\nu} + t_\mu + t_\nu.
    \end{equation}
\end{theorem}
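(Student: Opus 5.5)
The plan is to reduce Theorem~\ref{theorem: Wasserstein approximation} to two applications of Theorem~\ref{theorem: Wasserstein distance convergence to expectation}, combined through the triangle inequality for $W$. There are three steps.

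\textbf{Step 1: triangle inequality.} Since $W$ is a metric on probability measures with finite first moment, we have
\begin{equation*}
W(\mu_X,\nu_X)\le W(\mu_X,\widehat\mu_X)+W(\widehat\mu_X,\widehat\nu_X)+W(\widehat\nu_X,\nu_X),
\end{equation*}
and symmetrically
\begin{equation*}
W(\widehat\mu_X,\widehat\nu_X)\le W(\widehat\mu_X,\mu_X)+W(\mu_X,\nu_X)+W(\nu_X,\widehat\nu_X).
\end{equation*}
Combining the two gives the deterministic bound
\begin{equation*}
\left|W(\mu_X,\nu_X)-W(\widehat\mu_X,\widehat\nu_X)\right|\le W(\mu_X,\widehat\mu_X)+W(\nu_X,\widehat\nu_X).
\end{equation*}

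\textbf{Step 2: bound each term.} Apply Theorem~\ref{theorem: Wasserstein distance convergence to expectation} to $\mu_X$ with $\sigma_\mu>d_W(\mu_X)$. This gives $\lambda_\mu$ with $\mathbb{E}[W(\mu_X,\widehat\mu_X)]\le\lambda_\mu n^{-1/\sigma_\mu}$. On the event $E_\mu=\{W(\mu_X,\widehat\mu_X)<\mathbb{E}[W(\mu_X,\widehat\mu_X)]+t_\mu\}$, which has probability at least $1-e^{-2nt_\mu^2}$, we get
\begin{equation*}
W(\mu_X,\widehat\mu_X)\le\lambda_\mu n^{-1/\sigma_\mu}+t_\mu.
\end{equation*}
The same argument for $\nu_X$, with $m$ samples, gives an event $E_\nu$ of probability at least $1-e^{-2mt_\nu^2}$ on which $W(\nu_X,\widehat\nu_X)\le\lambda_\nu m^{-1/\sigma_\nu}+t_\nu$. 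We can take $\lambda_\mu,\lambda_\nu>0$: if the theorem supplies a nonpositive constant, any positive number still satisfies the inequality.

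\textbf{Step 3: combine the events.} The two samples are drawn independently, so $E_\mu$ and $E_\nu$ are independent, and $\Pr(E_\mu\cap E_\nu)=\Pr(E_\mu)\Pr(E_\nu)\ge(1-e^{-2nt_\mu^2})(1-e^{-2mt_\nu^2})$. On $E_\mu\cap E_\nu$, summing the Step~2 bounds and substituting into Step~1 yields Eq.~(\ref{eq: Wasserstein approximation with absolute}).

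The main point to be careful about is this independence step, which is what produces the product form of the probability. It is implicit in the setup, since the two empirical measures come from separate i.i.d. draws. If independence were not available, one would fall back on a union bound, giving probability at least $1-e^{-2nt_\mu^2}-e^{-2mt_\nu^2}$, which is slightly weaker than the product form stated. A minor additional point is that the triangle inequality in Step~1 needs finite first moments; I would assume these implicitly, as the paper does, since the dimension condition is meaningful only in that regime.
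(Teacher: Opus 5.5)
Your proof is correct and follows the same route as the paper: the triangle inequality, then the Weed--Bach expectation and concentration bounds for each empirical measure, then independence of the two samples to get the product probability. Your Step~1 is more careful than the paper's proof, which derives only the one-sided bound $W(\mu_X,\nu_X)-W(\widehat\mu_X,\widehat\nu_X)\le\lambda_\mu n^{-1/\sigma_\mu}+\lambda_\nu m^{-1/\sigma_\nu}+t_\mu+t_\nu$ and then takes absolute values because the right-hand side is nonnegative, a step that does not follow; your symmetric triangle inequality supplies the missing reverse direction on the same event.
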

A related theorem is proposed in~\citep{xu2025wassersteinregularized}, though without accounting for the signs of $\lambda_\mu$ and $\lambda_\nu$. Based on Theorem~\ref{theorem: Wasserstein approximation}, if $\sigma _P>d _W(P _{XY})$ and $\sigma _Q>d _W(Q _{XY})$, for $t _P, t _Q > 0$, there are $\lambda _P, \lambda _Q>0$ with a probability at least  $(1-e^{-2nt _P^2})(1-e^{-2mt _Q^2})$ that
\begin{equation}\label{eq: Wasserstein approximation with absolute on P and Q}
        \left|W(P _{XY}, Q _{XY}) - W(\widehat{P} _{XY}, \widehat{Q} _{XY})\right| \leq \lambda _P n^{-1/\sigma _P} + \lambda _Q m^{-1/\sigma _Q} + t_P + t_Q.
\end{equation}
As \( n \) and $m$ increase, the bound in Eq.~(\ref{eq: Wasserstein approximation with absolute on P and Q}) decreases, thereby improving the approximation of the empirical Wasserstein distance. At the same time, the probability \( (1 - e^{-2n t_P^2})(1 - e^{-2m t_Q^2}) \) increases, indicating that the bound holds with higher confidence.

\section{Additional theoretical statements}\label{appendix: additional theoretical statements}
\subsection{Supporting propositions}
\begin{proposition}\label{proposition: equivalence with monoticity} Let $f: \mathcal{X} \to \mathcal{Y}$ be an invertible univariate function, where $\mathcal{X}, \mathcal{Y} \subseteq \mathbb{R}$. Let $C = [y_{\textnormal{lo}}, y_{\textnormal{hi}}] \subseteq \mathcal{Y}$ be a closed interval. Then for any $y \in \mathcal{Y}$, the following equivalence holds:
\[
y \in C \quad \Longleftrightarrow \quad f^{-1}(y) \in \{x \in \mathcal{X} : f(x) \in C\}.
\]
\end{proposition}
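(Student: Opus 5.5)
The plan is to unfold the definition of the set on the right-hand side and use nothing but the fact that $f$ is a bijection, with $f\circ f^{-1}=\mathrm{id}_\mathcal{Y}$ and $f^{-1}\circ f=\mathrm{id}_\mathcal{X}$. Write $D:=\{x\in\mathcal{X}: f(x)\in C\}=f^{-1}(C)$ for the preimage of $C$ under $f$. The claim then reads: $y\in C$ if and only if $f^{-1}(y)\in D$. I will prove the two implications separately, each in one line.

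For the forward direction, I fix $y\in\mathcal{Y}$ with $y\in C$ and set $x:=f^{-1}(y)$, which lies in $\mathcal{X}$ because $f^{-1}:\mathcal{Y}\to\mathcal{X}$. Since $f(x)=f(f^{-1}(y))=y\in C$, the point $x$ satisfies the membership condition of $D$. Hence $f^{-1}(y)\in D$.

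For the reverse direction, I suppose $f^{-1}(y)\in D$. By the definition of $D$ this means $f(f^{-1}(y))\in C$, and the left side is just $y$. Hence $y\in C$.

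There is no genuine obstacle here. The only care needed is with domains: I use that $f^{-1}$ is defined on all of $\mathcal{Y}$, which is what ``invertible'' means for $f:\mathcal{X}\to\mathcal{Y}$. I also note that neither monotonicity nor the interval structure of $C$ is used, so the equivalence holds for any subset $C\subseteq\mathcal{Y}$. The role of monotonicity in the paper is only that it makes $D$ again an interval, namely $[f^{-1}(y_{\text{lo}}),f^{-1}(y_{\text{hi}})]$ for increasing $f$ or the endpoints reversed for decreasing $f$. That fact is what yields the explicit form of $\mathcal{C}_\text{BNF}$, and I will add it as a remark rather than as part of the proof.
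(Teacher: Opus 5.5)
Your proof is correct, but it takes a different and more economical route than the paper. The paper argues through monotonicity. It asserts that an invertible univariate $f$ must be strictly monotonic, splits into increasing and decreasing cases, and identifies $\{x\in\mathcal{X}: f(x)\in C\}$ with the explicit interval $[f^{-1}(y_{\text{lo}}), f^{-1}(y_{\text{hi}})]$ (endpoints swapped in the decreasing case). It then reads off membership of $f^{-1}(y)$ from the ordering of the endpoints.

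You instead unfold the definition of the preimage and use only $f(f^{-1}(y))=y$. This is exactly the argument the paper gives for Proposition~\ref{proposition: equivalence without monoticity}, so the present statement is that result applied with $a=f^{-1}(y)$.

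Your route buys generality and rigor. It holds for any bijection and any $C\subseteq\mathcal{Y}$. It also avoids the paper's step ``invertible implies strictly monotonic,'' which is false without continuity on an interval domain. For example, take the bijection of $\mathbb{R}$ that swaps $0$ and $1$ and fixes every other point. There the preimage of $[0,1/2]$ is $(0,1/2]\cup\{1\}$, which is not an interval, yet your equivalence still holds.

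The paper's route buys the explicit interval form of the preimage, which is what lets $\mathcal{C}_\text{BNF}(X_{n+1})$ be computed from the two endpoints $f_{\theta_Y}^{-1}(y_{\text{lo}})$ and $f_{\theta_Y}^{-1}(y_{\text{hi}})$. That is extra information beyond the stated equivalence. It needs monotonicity as a genuine hypothesis, and the interval should be intersected with $\mathcal{X}$ when $\mathcal{X}$ is not itself an interval. Putting it in a separate remark, as you propose, is the right call.
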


\begin{proof}
Since $f$ is an invertible univariate function, it must be strictly monotonic, either strictly increasing or decreasing.

\textbf{Case 1:} Suppose $f$ is strictly increasing. Then $f^{-1}$ is also strictly increasing.

\begin{itemize}
\item[$\Rightarrow$] If $y \in C = [y_{\textnormal{lo}}, y_{\textnormal{hi}}]$, then by monotonicity,
\[
f^{-1}(y_{\textnormal{lo}}) \leq f^{-1}(y) \leq f^{-1}(y_{\textnormal{hi}}),
\]
so \( f^{-1}(y) \in [f^{-1}(y_{\textnormal{lo}}), f^{-1}(y_{\textnormal{hi}})] \). Since \( f \) is strictly increasing, this implies
\[
[f^{-1}(y_{\textnormal{lo}}), f^{-1}(y_{\textnormal{hi}})]=\{x \in \mathcal{X} : f(x) \in C\}
\]
and thus \( f^{-1}(y) \in \{x \in \mathcal{X} : f(x) \in C\} \).

\item[$\Leftarrow$] If \( f^{-1}(y) \in \{x \in \mathcal{X} : f(x) \in C\} \), then equivalently we can derive $y \in C$.
\end{itemize}

\textbf{Case 2:} Suppose $f$ is strictly decreasing. Then $f^{-1}$ is also strictly decreasing.

\begin{itemize}
\item[$\Rightarrow$] If $y \in C = [y_{\textnormal{lo}}, y_{\textnormal{hi}}]$, then
\[
f^{-1}(y_{\textnormal{lo}}) \geq f^{-1}(y) \geq f^{-1}(y_{\textnormal{hi}}),
\]
so \( f^{-1}(y) \in [f^{-1}(y_{\textnormal{hi}}), f^{-1}(y_{\textnormal{lo}})] \). Again, since \( f \) is decreasing,
\[
[f^{-1}(y_{\textnormal{hi}}), f^{-1}(y_{\textnormal{lo}})] = \{x \in \mathcal{X} : f(x) \in C\},
\]
which implies \( f^{-1}(y) \in \{x \in \mathcal{X} : f(x) \in C\} \).

\item[$\Leftarrow$] If \( f^{-1}(y) \in \{x \in \mathcal{X} : f(x) \in C\} \), then again \( y \in C \).
\end{itemize}

In either case, the equivalence holds.
\end{proof}

\begin{proposition}\label{proposition: equivalence without monoticity}
Let $f: \mathcal{X} \to \mathcal{Y}$ be a univariate function, where $\mathcal{X}, \mathcal{Y} \subseteq \mathbb{R}$. Let $C \subseteq \mathcal{Y}$ be a closed interval. Then for $a \in \mathcal{X}$, it holds that:
\begin{equation*}
    f(a)\in C \iff a \in \{x \in \mathcal{X} : f(x) \in C\}.
\end{equation*}
\end{proposition}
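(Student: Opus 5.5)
The plan is to prove the equivalence by unfolding the definition of the preimage set. No monotonicity or invertibility of $f$ is needed, and neither is the fact that $C$ is a closed interval.

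Write $S := \{x \in \mathcal{X} : f(x) \in C\}$, the preimage $f^{-1}(C)$ in the set-theoretic sense. This is meaningful because $f$ is defined on all of $\mathcal{X}$; it does not use any inverse function. Fix $a \in \mathcal{X}$.

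For the forward direction, assume $f(a) \in C$. Then $a$ is an element of $\mathcal{X}$ satisfying the defining predicate of $S$, so $a \in S$ by the axiom of specification.

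For the reverse direction, assume $a \in S$. Every element of $S$ satisfies the defining predicate, so $f(a) \in C$.

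I expect no real obstacle here; the one point to be careful about is the quantifier. The statement is for $a \in \mathcal{X}$, so membership in $\mathcal{X}$ is given, and only the predicate $f(a) \in C$ has to be checked.

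This is also what makes the result usable for the inverse construction in Eq.~(\ref{eq: aug BNF prediction set}). There, take $f = f^{\text{aug}}_{\theta_Y}(\cdot;\varepsilon_{n+1})$, $a = Y_{n+1}$, and $C = \mathcal{C}_\text{C}(\widebar{X}_{n+1})$. The only requirement is that $\widebar{Y}_{n+1}$ is computed with the same noise $\varepsilon_{n+1}$ used to build the set, so that $\widebar{Y}_{n+1} = f(Y_{n+1})$. With that in place, the proposition gives $\widebar{Y}_{n+1} \in \mathcal{C}_\text{C}(\widebar{X}_{n+1}) \iff Y_{n+1} \in \mathcal{C}_\text{BNF}^\text{aug}(X_{n+1})$.
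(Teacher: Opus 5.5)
Your proof is correct and matches the paper's: both unfold the definition of $\{x\in\mathcal{X}: f(x)\in C\}$ and use the standing assumption $a\in\mathcal{X}$, so that membership reduces to the predicate $f(a)\in C$. Your extra remark is also worth keeping: applying the result to $\mathcal{C}_\text{BNF}^\text{aug}(X_{n+1})$ requires $\widebar{Y}_{n+1}$ to be computed with the same noise $\varepsilon_{n+1}$, a condition the paper leaves implicit.
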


\begin{proof}
The statement is a direct consequence of the definition of the set $\{x \in \mathcal{X} : f(x) \in C\}$. By definition, $a$ belongs to this set if and only if $a \in \mathcal{X}$ and $f(a) \in C$. Since $a \in \mathcal{X}$ is already assumed, the condition reduces to:
$f(a) \in C \iff a \in \{x \in \mathcal{X} : f(x) \in C\}$.
\end{proof}
We visualize Proposition~\ref{proposition: equivalence with monoticity} and Proposition~\ref{proposition: equivalence without monoticity} in Figure~\ref{fig: proposition}.

\begin{figure*}[h]
\centering
\captionsetup{singlelinecheck = false, justification=centering}

  \includegraphics[scale=0.4]{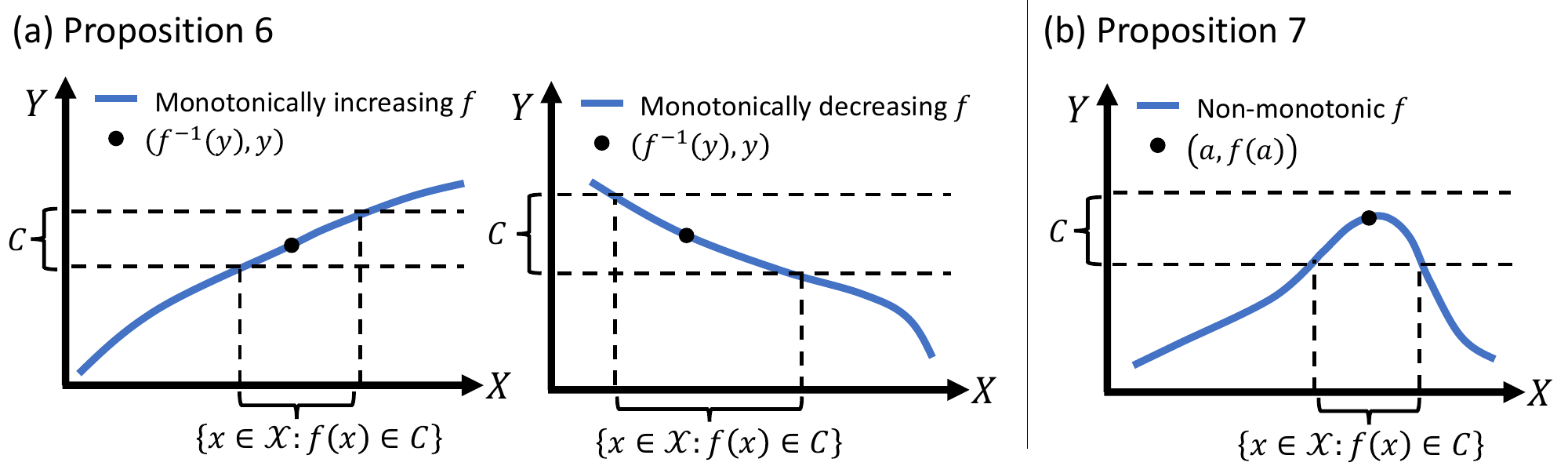}
  % \vspace{-10pt}
  \caption{Characterization of preimage membership under \textbf{(a)} monotonic and \textbf{(b)} non-monotonic functions.}
  \label{fig: proposition} 
  % \vspace{-20pt}
\end{figure*}

\subsection{Proof of Theorem~\ref{theorem: Wasserstein distance with Lipschitz continuity}}
\begin{proof}
Let $\mu_{XY}$ and $\nu_{XY}$ be probability measures on the metric space $(\mathcal{X}\times\mathcal{Y}, d_\mathcal{XY})$, where $d_{\mathcal{X}\mathcal{Y}}((x_1,y_1),(x_2,y_2)):=||(d_\mathcal{X}(x_1,x_2),d_\mathcal{Y}(y_1,y_2))||_2$. Let $s: \mathcal{X}\times\mathcal{Y} \to \mathcal{V}$ be a measurable function such that $s(x,y)=v$. In metric space $(\mathcal{V}, d_\mathcal{V})$, denote $\mu_V$ the probability measure of $s(X,Y)$ for $(X,Y) \sim \mu_{XY}$. Also, let $\nu_V$ be the probability measure of $s(X,Y)$ for $(X,Y) \sim \nu_{XY}$. Denote $\Gamma_{V|x}=\Gamma(\mu_{V|x},\nu_{V|x})$ and $\Gamma_{Y|x}=\Gamma(\mu_{Y|x},\nu_{Y|x})$. By Theorem 1 in~\citep{xu2025wassersteinregularized}, we derive
\begin{equation}\label{eq: W distance with pushforward}
    \begin{split}
    &W(\mu_{V|x},\nu_{V|x})=\inf_{\gamma\in\Gamma_{V|x}}\int_{\mathcal{V}\times\mathcal{V}}{d_\mathcal{V}(v_1,v_2)\dd{\gamma(v_1,v_2)}}\\&=\inf_{\gamma\in\Gamma_{Y|x}}\int_{\mathcal{Y}\times\mathcal{Y}}{d_\mathcal{V}(s(x,y_1),s(x,y_2))\dd{\gamma(y_1,y_2)}}.
    \end{split}
\end{equation}
Consider $\gamma^*\in\Gamma_{Y|x}$ is the optimal transport plan for $W(\mu_{Y|x},\nu_{Y|x})$. However, $\gamma^*$ is not necessarily optimal for obtaining $W(\mu_{V|x},\nu_{V|x})$ in Eq.~(\ref{eq: W distance with pushforward}), so we have 
\begin{equation}\label{eq: inequality by gamma^*}
        W(\mu_{V|x},\nu_{V|x})\leq\int_{\mathcal{Y}\times\mathcal{Y}}{d_\mathcal{V}(s(x,y_1)-s(x,y_2))\dd{\gamma^*(y_1,y_2)}}.
\end{equation}
Given that the function $s$ is continuous with constant $\kappa$ conditioned on $x$, we have $\frac{d_\mathcal{V}(s(x,y_1),s(x,y_2))}{d_\mathcal{Y}(y_1-y_2)}\leq \kappa$, $\forall x\in \mathcal{X},y_1,y_2\in\mathcal{Y}$, so the following inequality holds that
\begin{equation}\label{eq: kappa inequality in proof}
    \int_{\mathcal{Y}\times\mathcal{Y}}{d_\mathcal{V}(s(x,y_1),s(x,y_2))\dd{\gamma^*(y_1,y_2)}}\leq\int_{\mathcal{Y}\times\mathcal{Y}}{\kappa\cdot d_\mathcal{Y}(y_1,y_2)\dd{\gamma^*(y_1,y_2)}}=\kappa\cdot W(\mu_{Y|x},\nu_{Y|x}).
\end{equation}
Finally, combining Eq.~(\ref{eq: inequality by gamma^*}) and Eq.~(\ref{eq: kappa inequality in proof}), we can conclude that
\begin{equation}
    W(\mu_{V|x},\nu_{V|x})\leq\kappa\cdot W(\mu_{Y|x},\nu_{Y|x}).
\end{equation}
\end{proof}
\subsection{Proof of Theorem~\ref{theorem: upper bound of conditional Wasserstein distance}}
\begin{proof}
Let $\mu_{XY}$ and $\nu_{XY}$ be probability measures on the metric space $(\mathcal{X}\times\mathcal{Y}, d_{\mathcal{X}\mathcal{Y}})$, where $d_{\mathcal{X}\mathcal{Y}}((x_1,y_1),(x_2,y_2)):=||(d_\mathcal{X}(x_1,x_2),d_\mathcal{Y}(y_1,y_2))||_2$. A joint distribution shift results in $\mu_{X}\neq\nu_{X}$, $\mu_{Y|X}\neq\nu_{Y|X}$. 

For any $\gamma_{XYXY}\in\Gamma(\mu_{XY},\nu_{XY})$, denote $\gamma_{XX}(x_1,x_2)=\int_{\mathcal{Y}^2}\dd\gamma_{XYXY}(x_1,y_1,x_2,y_2)$. Thereby, we can derive
\begin{equation}\label{eq: proof 2 eq 1}
    \begin{split}
        &\int_{\mathcal{X}^2\times\mathcal{Y}^2}d_{\mathcal{XY}}((x_1,y_1),(x_2,y_2))\dd\gamma_{XYXY}(x_1,y_1,x_2,y_2)\\&\geq\int_{\mathcal{X}^2\times\mathcal{Y}^2}d_\mathcal{Y}(y_1,y_2)\dd\gamma_{XYXY}(x_1,y_1,x_2,y_2)\\&\geq\int_{\mathcal{X}^2\times\mathcal{Y}^2}d_\mathcal{Y}(y_1,y_2)I(x_1=x_2)\dd\gamma_{XYXY}(x_1,y_1,x_2,y_2)\\&=\int_{\mathcal{X}^2}\left(\int_{\mathcal{Y}^2}d_\mathcal{Y}(y_1,y_2)\dd\gamma_{YY|x_1x_2}(y_1,y_2)\right)I(x_1=x_2)\dd\gamma_{XX}(x_1,x_2)\\&=\int_{\mathcal{X}^2}\left(\int_{\mathcal{Y}^2}d_\mathcal{Y}(y_1,y_2)\dd\gamma_{YY|x_1x_1}(y_1,y_2)\right)\dd\gamma_{XX}(x_1,x_1).
    \end{split}
\end{equation}
Consider $\gamma^*_{XYXY}\in\Gamma(\mu_{XY},\nu_{XY})$ that satisfies
\begin{equation}\label{eq: proof 2 eq 2}
W(\mu_{XY},\nu_{XY})=\int_{\mathcal{X}^2\times\mathcal{Y}^2}d_{\mathcal{XY}}((x_1,y_1),(x_2,y_2))\dd\gamma^*_{XYXY}(x_1,y_1,x_2,y_2).
\end{equation}
However, $\gamma^*_{YY|x_1x_1}$ is not necessarily the optimal transport plan of $W(\mu_{Y|x_1},\nu_{Y|x_1}), \forall x_1\in\mathcal{X}$, so
\begin{equation}\label{eq: proof 2 eq 3}
W(\mu_{Y|x_1},\nu_{Y|x_1})\leq\int_{\mathcal{Y}^2}d_\mathcal{Y}(y_1,y_2)\dd\gamma^*_{YY|x_1x_1}(y_1,y_2).
\end{equation}
Therefore, after plugging Eq.~(\ref{eq: proof 2 eq 2}) and Eq.~(\ref{eq: proof 2 eq 3}) into Eq.~(\ref{eq: proof 2 eq 1}) and simplifying $x_1$ as $x$, we obtain
\begin{equation}
W(\mu_{XY},\nu_{XY})\geq\int_{\mathcal{X}^2}W(\mu_{Y|x},\nu_{Y|x})\dd\gamma^*_{XX}(x,x).
\end{equation}
Given $\eta>0$ that satisfies
\begin{equation}\label{eq: eta requirement in proof}
\eta\int_{\mathcal{X}^2}W(\mu_{Y|x},\nu_{Y|x})\dd\gamma^*_{XX}(x,x)\geq\int_{\mathcal{X}}W(\mu_{Y|x},\nu_{Y|x})\dd\nu_X(x),
\end{equation}
we can consequently prove
\begin{equation}         
\eta\cdot W(\mu_{XY},\nu_{XY})\geq\int_{\mathcal{X}}W(\mu_{Y|x},\nu_{Y|x})\dd\nu_X(x).
\end{equation}
\end{proof}
We would like to further justify the necessity of introducing $\eta$ to satisfy Eq.~(\ref{eq: eta requirement in proof}).  

Considering $\int _\mathcal{X}^2 \dd\gamma^* _{XX}(x,x)=\int _\mathcal{X}^2 I(x _1=x _2) \dd\gamma^* _{XX}(x _1,x _2)$, we denote 
\begin{equation}
    \psi(\mathcal{A})=\gamma^* _{XX}\left(\{(x _1,x _2)\in\mathcal{X}^2:x _1=x _2\in\mathcal{A}\}\right)=\gamma^* _{XX}(\mathcal{A}\times\mathcal{A})
    \text{, }\forall  \mathcal{A}\subset\mathcal{X}.
\end{equation} 

As $\nu _X$ is a projection of $\gamma^* _{XX}$, we have $\nu _X(\mathcal{A})=\gamma^* _{XX}(\mathcal{A}\times\mathcal{X})\geq\psi(\mathcal{A})$. By the Radon-Nikodym theorem~\citep{fonseca2007modern}, there exists a density $\rho(x)\geq0$ such that 
\begin{equation}
    \psi(\mathcal{A})=\int _\mathcal{A}\rho(x)\dd\nu _X(x).
\end{equation} 
Since $\psi(\mathcal{A})\leq\nu(\mathcal{A})$, we can derive $\int _\mathcal{A}\rho(x)\dd\nu _X(x)\leq\int _\mathcal{A}1\dd\nu _X(x)$ for all $\mathcal{A}$. This forces $\rho(x)\leq1$ almost everywhere on $\nu _X$. 
As a result, we conclude that
\begin{equation}\label{eq: constraint in proof 2}
    \int _{\mathcal{X}^2} W(\mu _{Y|x},\nu _{Y|x})\dd\psi(x)=\int _\mathcal{X} W(\mu _{Y|x},\nu _{Y|x})\rho(x)\dd\nu _X(x)\leq\int _\mathcal{X} W(\mu _{Y|x},\nu _{Y|x})\dd\nu_X(x).
\end{equation} Therefore, we introduce a constant $\eta$ to reverse the inequality in Eq.~(\ref{eq: constraint in proof 2}).
\subsection{Proof of Theorem~\ref{theorem: Wasserstein upper bound under random mixture}}
\begin{proof}
    For each $k\in\{1,...,K\}$, denote $\gamma^k\in\Gamma(\mu_{XY},\nu_{XY}^k)$ the optimal transport plan realizing $W(\mu_{XY},\nu_{XY}^k)$ such that
    \begin{equation}
        W(\mu_{XY},\nu_{XY}^k)=\int_{\mathcal{X}\times\mathcal{Y}}d_{\mathcal{X}\mathcal{Y}}(x,y)\dd\gamma^k(x,y).
    \end{equation}
    Given $\nu_{XY}=\sum_{k=1}^K\lambda_k\nu_{XY}^k$, let $\gamma^*=\sum_{k=1}^K\lambda_k\gamma^k$. Since the first marginal of $\gamma^*$ is $\mu_{XY}$ and the second marginal of $\gamma^*$ is $\sum_{k=1}^K\lambda_k\nu_{XY}^k$, it follows that $\gamma^*\in\Gamma(\mu_{XY},\nu_{XY})$. However, $\gamma^*$ is not necessarily optimal transport plan for $W(\mu_{XY},\nu_{XY})$, we conclude that
    \begin{equation}
        \begin{split}
         &W(\mu_{XY},\nu_{XY})=\inf_{\gamma\in\Gamma(\mu_{XY},\nu_{XY})}\int_{\mathcal{X}\times\mathcal{Y}}d_{\mathcal{X}\mathcal{Y}}(x,y)\dd\gamma(x,y)\leq\int_{\mathcal{X}\times\mathcal{Y}}d_{\mathcal{X}\mathcal{Y}}(x,y)\dd\gamma^*(x,y)\\&=\sum\nolimits_{k=1}^K\lambda_k\int_{\mathcal{X}\times\mathcal{Y}}d_{\mathcal{X}\mathcal{Y}}(x,y)\dd\gamma^k(x,y)=\sum\nolimits_{k=1}^K\lambda_kW(\mu_{XY},\nu_{XY}^k).
        \end{split}
    \end{equation}
\end{proof}
\subsection{Proof of Theorem~\ref{theorem: Wasserstein approximation}}

\begin{proof}
Since the Wasserstein distance satisfies the triangle inequality, the distance $W(\mu_X, \nu_X)$ can be related to the empirical distributions $ \widehat{\mu}_X$ and $\widehat{\nu}_X$ as follows:
\begin{equation}\label{eq: Wasserstein triangle ineqality}
     W(\mu_X,\nu_X)\leq W(\widehat{\mu}_X,\mu_X) + W(\widehat{\mu}_X,\nu_X) \leq W(\widehat{\mu}_X,\mu_X) +  W(\widehat{\mu}_X,\widehat{\nu}_X) + W(\widehat{\nu}_X,\nu_X).
\end{equation}
Given $\mathbb{E}[W(\mu,\widehat{\mu}_X)]\leq\lambda_\mu n^{-1/\sigma_\mu}$ and $\mathbb{E}[W(\nu_X,\widehat{\nu}_X)]\leq\lambda_\nu m^{-1/\sigma_\nu}$ from Theorem~\ref{theorem: Wasserstein distance convergence to expectation}, with probabilities at least $1-e^{-2n{t_\mu}^2}$ and $1-e^{-2m{t_\nu}^2}$, respectively, we have
\begin{equation}\label{eq: Wasserstein ineqality mu and nu}
\begin{split}
    &W(\mu_X,\widehat{\mu}_X)\leq\lambda_\mu n^{-1/\sigma_\mu}+t_\mu;\\&W(\nu_X,\widehat{\nu}_X)\leq\lambda_\nu m^{-1/\sigma_\nu}+t_\nu.
\end{split}
\end{equation}
It is reasonable to assume the two events in Eq.~(\ref{eq: Wasserstein ineqality mu and nu}) are independent, so we can apply them to Eq.~(\ref{eq: Wasserstein triangle ineqality}), and thus obtain
\begin{equation}\label{eq: Wasserstein approximation without absolute}
            W(\mu_{X}, \nu_{X}) - W(\widehat{\mu}_{X}, \widehat{\nu}_{X})\leq \lambda_\mu n^{-1/\sigma_\mu} + \lambda_\nu m^{-1/\sigma_\nu} + t_\mu + t_\nu
\end{equation}
with probability at least $(1-e^{-2n{t_\mu}^2})(1-e^{-2m{t_\nu}^2})$.

Since $\mathbb{E}[W(\mu,\widehat{\mu}_X)]$ and $\mathbb{E}[W(\nu_X,\widehat{\nu}_X)]$ are non-negative, it follows that $\lambda_\mu,\lambda_\nu\geq0$. Given that $t_\mu$ and $t_\nu$ are also positive, the right-hand side of Eq.~(\ref{eq: Wasserstein approximation without absolute}) is non-negative. Therefore, we can take the absolute value on both sides of Eq.~(\ref{eq: Wasserstein approximation without absolute}) without changing the direction of the inequality, leading to Eq.~(\ref{eq: Wasserstein approximation with absolute}).
\end{proof}

\section{Demonstration of implicit dependency}\label{appendix: demonstration of implicit dependency}
We demonstrate that $f_{\theta_Y}^{-1}(\widebar{Y}_{n+1})$ implicitly depends on $X_{n+1}$ through the composition  $\phi\circ f_{\theta_X}$, where $\phi:\mathcal{X}\rightarrow\mathcal{Y}$ is the ground truth mapping function under the calibration distribution $P_{XY}$. 
Consider a BNF $f_\theta$ is optimized by Wasserstein distance minimization in Eq.~(\ref{eq: joint Wasserstein minimization}) such that $f_{\theta\#} Q_{XY}=P_{XY}$. Therefore, for a test sample $(X_{n+1},Y_{n+1})=(x,y)\sim Q_{XY}$, it holds that
\begin{equation*}
    (\widebar{X}_{n+1},\widebar{Y}_{n+1})=f_\theta(x,y)=(f_{\theta_X}(x),f_{\theta_Y}(y))=(\widebar{x},\widebar{y})\sim P_{XY}.
\end{equation*}
As a result, $\mathcal{C}_\text{C}(\widebar{X}_{n+1})$ satisfies the conditional coverage guarantee under $P_{XY}$. Moreover, since $\bar{y} = \phi(\widebar{x})$, we obtain
\begin{equation*}
    f_{\theta_Y}^{-1}(\widebar{y})=f_{\theta_Y}^{-1}(\phi(\widebar{x}))=f_{\theta_Y}^{-1}(\phi(f_{\theta_X}(x))),
\end{equation*}
which shows that the inverse transformation $f_{\theta_Y}^{-1}(\widebar{y})$ used to construct $\mathcal{C}_\text{BNF}(X_{n+1})$ inherently captures the dependency on $X_{n+1} = x$.

We present an example to illustrate the dependency. Denote $\mathcal{U}$ and $\mathcal{N}$ uniform and Gaussian distributions, respectively. To introduce a distribution shift between $P_{XY}$ and $Q_{XY}$, let 
\begin{equation*}
\begin{split}
    &P_X=\mathcal{U}(0,1),\smallskip P_{Y|X}=\mathcal{N}(-0.5X,-0.3X^2 +0.3X);\\&
    Q_X=\mathcal{U}(0,0.8),\smallskip Q_{Y|X}=\mathcal{N}(0.25X,-0.24X^2+0.24X).
\end{split}
\end{equation*}
Figure~\ref{fig: Toy} shows how the inverse transformation $f_{\theta_Y}^{-1}$ preserve the conditional guarantee from $\mathcal{C}_\text{C}(\widebar{X}_{n+1})$ to $\mathcal{C}_\text{BNF}(X_{n+1})$ through the implicit dependency on $X_{n+1}=x$.
\begin{figure}[h]
\centering
\captionsetup{singlelinecheck = false, justification=justified}
  \includegraphics[scale=0.55]{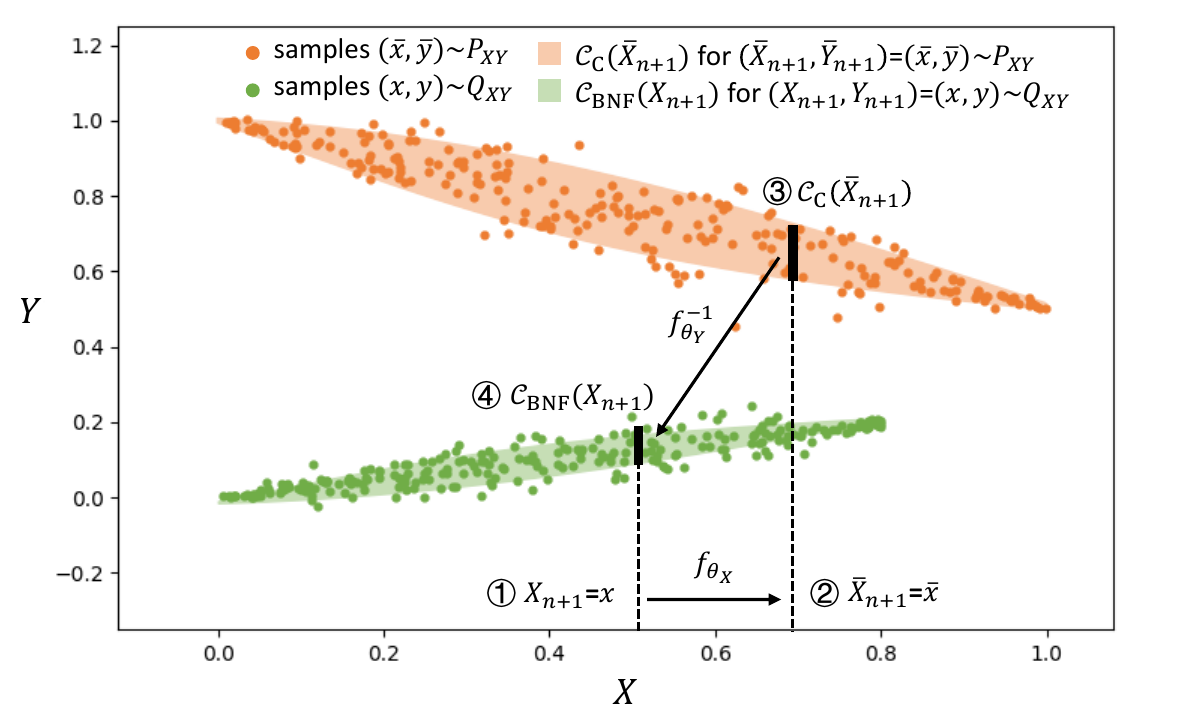}
  % \vspace{-20pt}
  \caption{Preserving conditional coverage via implicit dependency on test input. The circled numbers indicate the sequential steps to obtain the corresponding values or prediction sets.}
  \label{fig: Toy} 
  \vspace{-5pt}
\end{figure}

We admit the factorized architecture of BNF relies on a mild structural assumption of $P_{XY}$ and $Q_{XY}$ to realize the exact alignment in Eq.~(\ref{eq: exact alignment}).

Let labeling processes be $Y=\psi(X)+\psi_\text{rand}$ on $Q_{XY}$ and $Y=\phi(X)+\phi_\text{rand}$ on $P_{XY}$, where $\psi_\text{rand}$ and $\phi_\text{rand}$ are stochastic parts. Eq.~(\ref{eq: exact alignment}) implies 
\begin{equation}\label{eq: star}
    f_{\theta_Y}(\psi(X)+\psi_\text{rand})=\phi(f_{\theta_X}(X))+\phi_\text{rand}.
\end{equation}
We implement $f_{\theta_Y}$ as an augmented Real NVP that includes tanh activations over $Y$. Hence, $f_{\theta_Y}$ is not affine with respect to $\psi(X)+\psi_\text{rand}$, so in general
\begin{equation}
    f_{\theta_Y}(\psi(X)+\psi_\text{rand})\neq f_{\theta_Y}(\psi(X))+f_{\theta_Y}(\psi_\text{rand}).
\end{equation}
 Consequently, we can not split Eq.~\ref{eq: star} by $f_{\theta_Y}(\psi(X))=\phi(f_{\theta_X}(X))$ and $f_{\theta_Y}(\psi_\text{rand})=\phi_\text{rand}$.
 Accordingly, the deterministic-stochastic decomposition of Eq.~\ref{eq: star} can be naturally expressed via expectation:
 \begin{equation}
     \phi(f_{\theta_X})=\mathbb{E}_{\psi_\text{rand}}[ f_{\theta_Y}(\psi(X)+\psi_\text{rand})],
 \end{equation}
 \begin{equation}
     \phi_\text{rand} = f_{\theta_Y}(\psi(X)+\psi_\text{rand})-\mathbb{E}_{\psi_\text{rand}}[ f_{\theta_Y}(\psi(X)+\psi_\text{rand})].
 \end{equation}
 This leads to a structural assumption: for exact alignment to hold, both $\phi$ and $\phi_\text{rand}$ should be generally feature-dependent, unless $\psi_\text{rand}$
 is independent noise and $\psi$
 is constant with respect to $X$.

 Experiment result in~\ref{fig: aug_coverage} shows that the augmented factorized architecture is sufficiently expressive to capture practical distribution shifts, where both deterministic and stochastic labeling parts on $P$
 and $Q$ are typically feature-dependent, satisfying the assumption above. 

A possible way to eliminate this assumption is to condition the transformation of $Y$ on the feature $X$. However, as discussed in Section~\ref{sec: feature conditioning}, this approach introduces practical challenges, most notably exacerbating the curse of dimensionality.

\section{Comparison between normalizing flow techniques}\label{appendix: comparison between flows}
The monotonicity of the univariate $f_{\theta_Y}$ allows us to take advantage of Proposition~\ref{proposition: equivalence with monoticity} to inversely transform $\mathcal{C}_\text{A}(\widebar{X}_{n+1})$ via Eq.~(\ref{eq: equivalence with monoticity}). However, the monotonicity also limits the flexibility of $f_{\theta_Y}$, restricting the class of distributions it can model. Here, we briefly introduce several normalizing flow techniques designed for one-dimensional transformations that often struggle to map complex distributions effectively, thereby motivating the design of Augmented BNF in Section~\ref{sec: augmented BNF}. For a more comprehensive overview of normalizing flows, we refer to the survey by~\cite{kobyzev2020normalizing}.

We begin with planar flow, a fundamental transformation that expands or contracts the input space along specific directions~\citep{rezende2016variationalinferencenormalizingflows}. A planar flow is achieved by applying a linear transformation followed by a nonlinear activation, which dictates how the data is warped. To enhance expressiveness, normalizing flows are typically constructed as compositions of multiple sub-flows. We implement a BNF where each branch applies a sequence of 16 planar flows. LeakyReLU is used as the nonlinear activation function to preserve invertibility throughout the transformation.

Residual flow is built using residual connections~\citep{he2015deepresiduallearningimage}. The output of a residual connection is the sum of the original input and a transformation generated by a neural network. For these residual connections to be invertible, the transformation must have a Lipschitz constant less than 1, ensuring that the transformation does not distort the data too much. We also construct a BNF where each branch consists of 16 residual connections. The neural network within each residual connection has an architecture consisting of an input layer, two hidden layers with 128 units each, and an output layer matching the input dimension.

Both planar flow and residual flow are capable of transforming one-dimensional data. In addition, autoregressive flow~\citep{kingma2016improved, papamakarios2021normalizing} offers an alternative approach by modeling each transformation step as conditioned on the preceding ones, meaning the transformation of each sample value explicitly depends on the values that came before it. This sequential dependency enables more flexible and expressive density estimation, particularly in one-dimensional settings. However, because BNF requires deterministic transformations that are independent of input ordering, autoregressive flow is not suitable for our approach.

We illustrate the performance of BNFs constructed using planar flow and residual flow in Figure~\ref{fig: flow_ablation} and compare them against the Augmented BNF, which is implemented using a standard coupling normalizing flow, Real NVP~\citep{dinh2016density}. Detailed specifications for the Augmented BNF are provided in Appendix~\ref{appendix: architecture}. The results show that BNFs using univariate $f_{\theta_Y}$ struggle to transform complex distributions effectively, resulting in higher WSCG compared to the Augmented BNF.

\begin{figure*}[h]
\centering
\captionsetup{singlelinecheck = false, justification=justified}
  \includegraphics[scale=0.4]{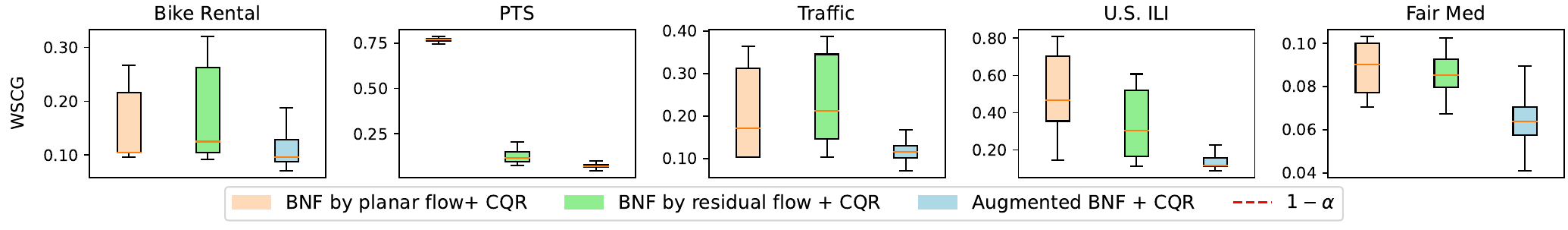}
  \caption{WSCG of BNFs constructed with planar and residual flows, compared with Augmented BNF at confidence level $1-\alpha=0.9$.}
  \label{fig: flow_ablation} 
\end{figure*}

\section{Structure of Augmented BNF via coupling flows}\label{appendix: architecture}
Both branches of Augmented BNF operate on multi-dimensional data, enabling the use of coupling flows—a technique for modeling complex high-dimensional distributions. A coupling flow usually consists of multiple coupling layers. In a coupling layer, the input is partitioned into two parts. One part remains unchanged during the transformation, while the other is modified using a neural network $c$, whose parameter $\Theta$ depends on the unchanged part. This setup ensures invertibility and allows for flexible, learnable transformations. Afterward, a permutation step is applied for higher expressiveness. In our implementation, each branch of Augmented BNF consists of a sequence of 48 coupling layers based on Real NVP~\citep{dinh2016density}, allowing the entire input to be progressively transformed. The neural network $c$ within each coupling layer follows a symmetric architecture with hidden layers of sizes 64, 128, 256, 128, and 64, mapping from the input dimension to the output dimension. Figure~\ref{fig: architecture} illustrates the structure of a coupling layer, using a random variable $Z \in \mathbb{R}^d$ with a realization $z$, and shows how both branches are constructed by stacking multiple coupling layers. The normalized Gaussian noise $\widebar{\varepsilon}$ is discarded after the transformation. 
\begin{figure*}[h]
\centering
\captionsetup{singlelinecheck = false, justification=centering}
  \includegraphics[scale=0.4]{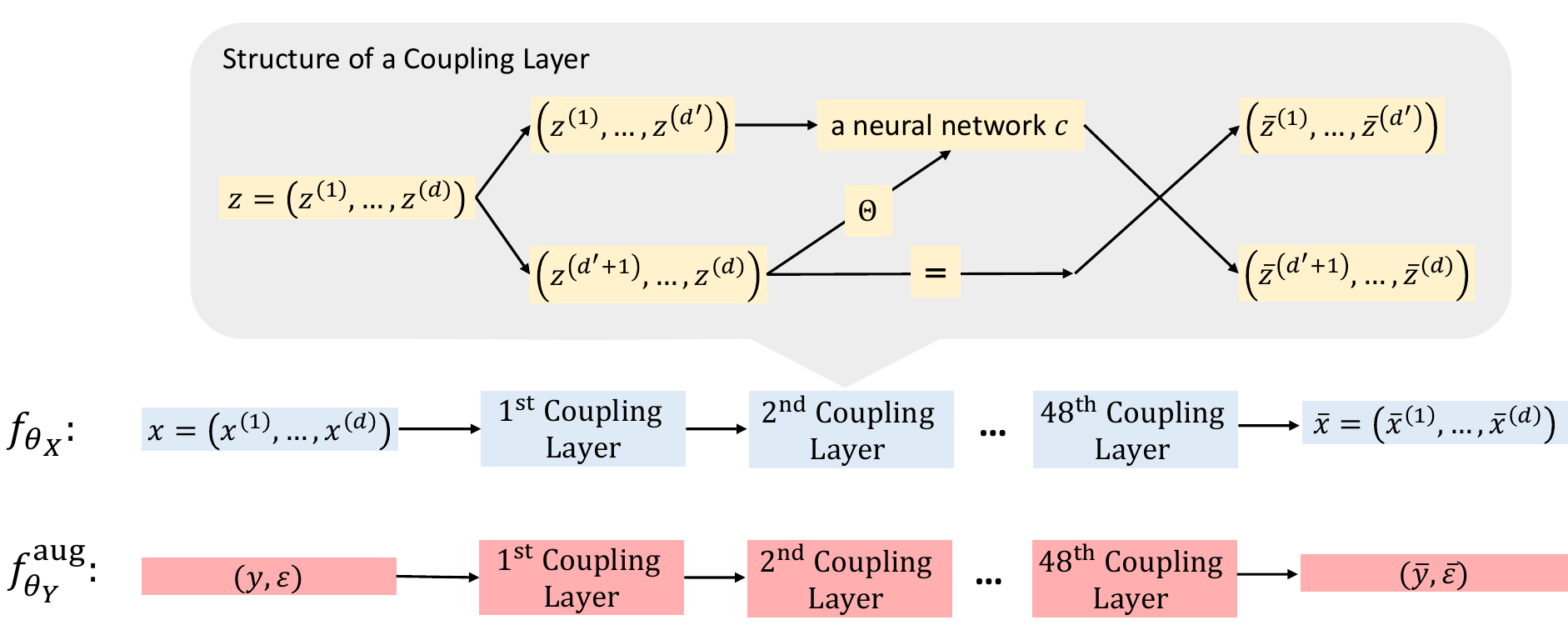}
  \caption{Illustration of the coupling layer structure and the overall composition of Augmented BNF.}
  \label{fig: architecture} 
\end{figure*}
\section{Conformalized quantile regression}\label{appendix: CQR}
The conditional guarantee in Eq.~(\ref{eq: conditional guarantee}) is not practically achievable using finite calibration samples without regularity assumptions, such as Lipschitz continuity of $P_{Y|x}$ density~\citep{foygel2021limits}. Hence, approximations of the conditional guarantee are extensively developed. Mondrian CP ensures $1-\alpha$ coverage conditioned over input subspaces~\citep{bostrom2021mondrian}. Some methods estimate the conformal score distribution conditioned on specific test input $x$, for example, by weighting each $V_i$ based on the proximity of $X_i$ to $x$~\citep{lin2021locally, guan2023localized, gibbs2023conformal}. Conformal training embeds a size-based loss in the training of the model $h$~\citep{correia2024information,stutz2021learning, bars2025volumeminimizationconformalregression}. Besides, advanced score functions are developed to facilitate conditional coverage in regression~\citep{romano2019conformalized, feldman2021improving}. Generative models also show promise for enhancing adaptiveness, especially for multivariate output~\citep{colombo2024normalizing, fang2025contra,klein2025multivariateconformalpredictionusing,thurin2025optimaltransportbasedconformalprediction}.

In this work, we apply Conformalized quantile regression (CQR)~\citep{romano2019conformalized} to approximate the conditional guarantee in Eq.~(\ref{eq: conditional guarantee}). CQR first trains two regression models with pinball loss at levels  $1-\alpha/2$ and $\alpha/2$, respectively, then calibrates the resulting intervals using residuals on a separate calibration set.

For clarity, we introduce CQR in the context of sample normalization and multi-source domain generalization.
For a regression model $h$, the pinball loss~\citep{steinwart2011estimating} at quantile level $\alpha$ for sample $(x,y)$ is defined as
\begin{equation}\label{eq:pinball loss}
        l_\alpha(h(x),y)= 
\begin{cases}
    \alpha\left(y-h(x)\right)& \text{if } y-h(x)>0,\\
    (1-\alpha)\left(h(x)-y\right)& \text{otherwise.}
\end{cases}    
\end{equation}
The models $h_\text{hi}$ and $h_\text{lo}$ are trained by optimizing the pinball loss in Eq~(\ref{eq:pinball loss}) at quantile levels $1-\alpha/2$ and $\alpha/2$, respectively. For calibration instances $\{(X_i,Y_i)\}_{i=1}^n$ drawn from $P_{XY}$, conformal scores are defined as 
\begin{equation}\label{eq: cqr score function}
    V_i=\max\left\{h_\text{lo}(X_i)-Y_i,Y_i-h_\text{hi}(X_i)\right\}\text{, for }i=1,...,n.
\end{equation}
Let $\tau$ be the ${\lceil(1-\alpha)(n+1)\rceil}/{n}$ quantile of $\{V_i\}_{i=1}^n$. If a test sample $(X_{n+1},Y_{n+1})\sim Q_{XY}$ is normalized to $(\widebar{X}_{n+1},\widebar{Y}_{n+1})\sim P_{XY}$, we construct an adaptive prediction set
\begin{equation}
    \mathcal{C}_\text{CQR}(\widebar{X}_{n+1})=\left[h_\text{lo}(\widebar{X}_{n+1})-\tau, h_\text{hi}(\widebar{X}_{n+1})+\tau\right].
\end{equation}
Here, $h_\text{lo}$ and $h_\text{hi}$ predict the likely lower and upper ends, while $\tau$ adjusts the set based on how well the predictions fit the calibration data.
As proved in~\citep{romano2019conformalized}, $\mathcal{C}_\text{CQR}$ can empirically approximate the conditional coverage guarantee described in Eq.~(\ref{eq: conditional guarantee of the normalized sample}). Extensions of CQR are explored in~\citep{kivaranovic2020adaptivedistributionfreepredictionintervals, Sesia_2020}, which modified the score function in Eq.~(\ref{eq: cqr score function}) for higher adaptiveness.

\section{A brief review of the Sinkhorn algorithm}\label{appendix: sinkhorn algorithm}
As we introduced in Definition~\ref{def: Wassertein distance between empirical distributions}, the Wasserstein distance between two empirical distributions $\widehat{\mu}_X$ and $\widehat{\nu}_X$ with $p=1$ is given by
\[
{W}(\widehat{\mu}_X, \widehat{\nu}_X) = \min_{\gamma \in \Gamma(\widehat{\mu}_X, \widehat{\nu}_X)} \sum_{i=1}^n \sum_{j=1}^m \gamma_{ij} \, C_{ij}.
\]
where $C \in \mathbb{R}^{n \times m}$ is the cost matrix with entries $C_{ij} = d_\mathcal{X}(x_i, x'_j)$, and $\Gamma(\widehat{\mu}_X, \widehat{\nu}_X)$ is the set of joint distributions $\gamma \in \mathbb{R}_+^{n \times m}$ with marginals $\widehat{\mu}_X$ and $\widehat{\nu}_X$.

To make this optimization problem more tractable, the Sinkhorn algorithm~\citep{cuturi2013sinkhorn} introduces an entropic regularization term:
\[
W^\beta(\widehat{\mu}_X, \widehat{\nu}_X) = \min_{\gamma \in \Gamma(\widehat{\mu}_X, \widehat{\nu}_X)} \sum_{i=1}^n \sum_{j=1}^m \gamma_{ij} \, C_{ij} + \beta \sum_{i=1}^n \sum_{j=1}^m \gamma_{ij} \log \gamma_{ij},
\]
where $\beta > 0$ controls the strength of the regularization.

This regularized objective is strictly convex and can be efficiently minimized via iterative matrix scaling. 
Let $K = \exp(-C / \beta)$ be the Gibbs kernel. The scaling vectors $u \in \mathbb{R}^n$ and $v \in \mathbb{R}^m$ are initialized to all ones and updated via
\[
u \gets\ \frac{1/n}{K v}, \quad v \gets \frac{1/m}{K^\top u},
\]
where divisions are element-wise. Once converged with small changes in $u$ and $v$, the optimal transport plan takes the form
\[
\gamma^\ast = \mathrm{diag}(u) \, K \, \mathrm{diag}(v).
\]

This approach yields a differentiable approximation to the true Wasserstein distance, enabling its integration into gradient-based optimization pipelines. We refer to~\citep{cuturi2013sinkhorn,knight2008sinkhorn,feydy2020analyse} for more detailed studies about the Sinkhorn algorithm.
\section{Introduction to baselines}\label{appendix: baselines}
Figure~\ref{fig: examples} highlights the distinctions between the baseline methods and the proposed approach. SCP constructs prediction sets of fixed size and ensures only marginal coverage under i.i.d. assumptions, rendering it ineffective under joint distribution shifts. IW-CP addresses only covariate shift and causes its prediction intervals to contract in the example, because test features are distributed in regions where calibration data is concentrated. WC-CP accounts for worst-case distribution shifts, expanding prediction sets until $1-\alpha$ marginal coverage is achieved on the test data, which can be inefficient. WR-CP improves upon this by regularizing the base predictive model through minimizing the Wasserstein distance between calibration and test conformal scores, producing more compact prediction sets while maintaining robust marginal coverage. All of these methods, however, focus exclusively on marginal coverage. CQR, a representative conditional conformal prediction method, fails to handle distributional shifts. In contrast, the Augmented BNF transformation model learns an invertible mapping between calibration and test data, enabling robust conditional coverage even under non-i.i.d. conditions.
\begin{figure*}[h]
\centering
\captionsetup{singlelinecheck = false, justification=justified}
\includegraphics[scale=0.35]{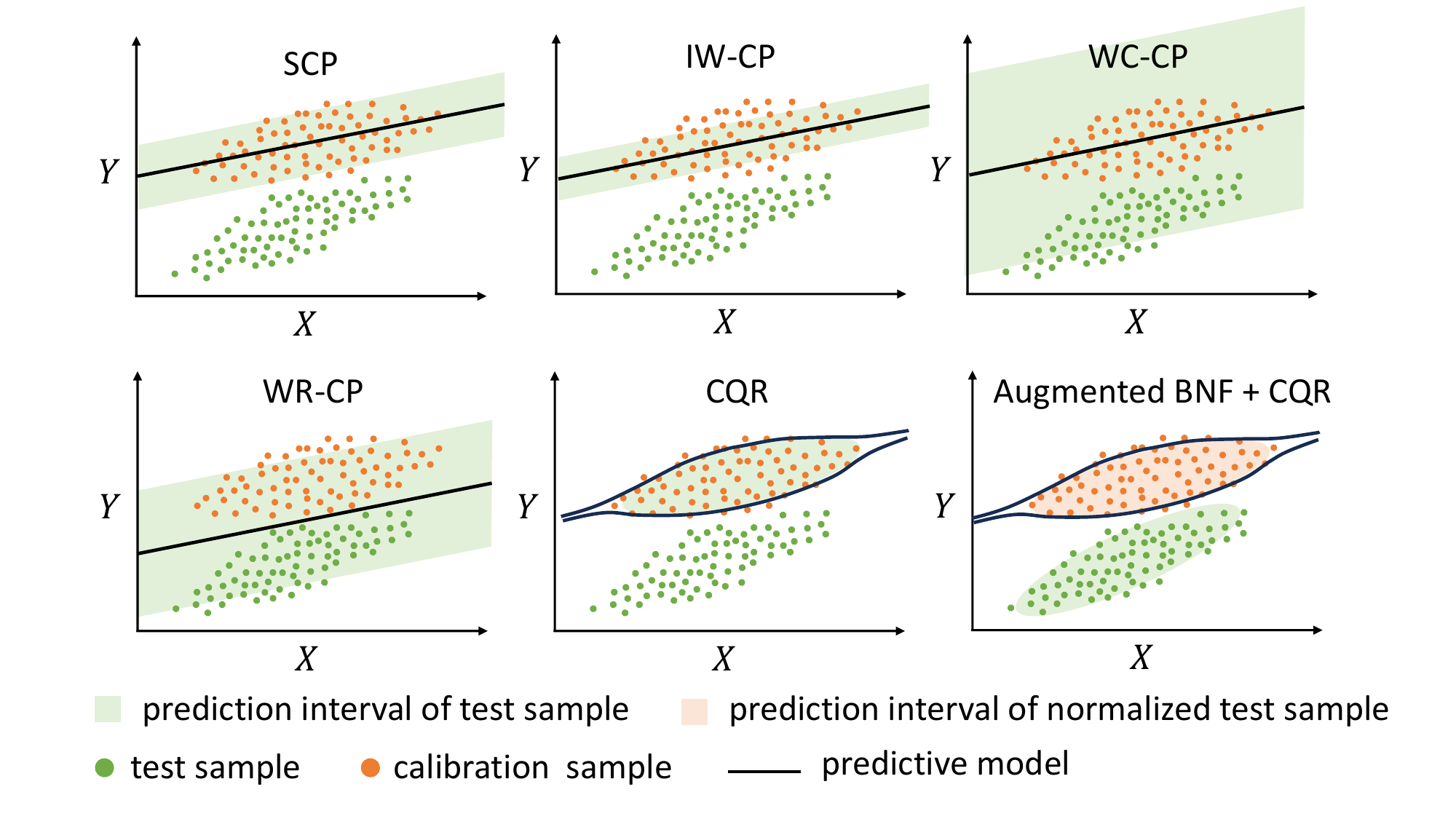}
\caption{Comparison between baselines and the proposed method via a toy example. Augmented BNF effectively transforms the prediction intervals from the calibration distribution to the test distribution.}
\label{fig: examples}
\end{figure*}
\section{Data preparation for multi-source domain generalization}\label{appendix: datasets}
We introduce the data preparation procedure shared across all datasets. We set \( K = 3 \), partitioning each dataset into three subsets, each exhibiting a distinct distribution shift. For each dataset, we conduct 10 independent sampling trials. In each trial, we first sample \(\mathcal{S}_{D^{k}}\) from subset \(k\) without replacement. Since calibration and training data typically share the same distribution in conformal prediction, \(\mathcal{S}_P\) is then sampled from the union of all \(K\) subsets, also without replacement. Finally, 100 different \(\mathcal{S}_Q\) sets are sampled as random mixtures from the remaining data. This procedure ensures that \(\mathcal{S}_{D^{k}}\) for \(k=1,\dots,K\), \(\mathcal{S}_P\), and \(\mathcal{S}_Q\) are mutually disjoint. Since the Sinkhorn algorithm is more numerically stable when comparing empirical distributions with matching sample sizes,  we set the calibration set and each training set to have equal sizes, i.e., $|\mathcal{S}_P| = |\mathcal{S}_{D^k}|$ for all $k = 1, \dots, K$. Experimental results are aggregated over the 10 trials for each dataset.

We also leverage a toy example from~\citep{xu2025wassersteinregularized} to demonstrate joint distribution shift under multi-source domain generalization in Figure~\ref{fig: random_mixture}.
\begin{figure*}[h]
\centering
\captionsetup{singlelinecheck = false, justification=justified}
\includegraphics[scale=0.60]{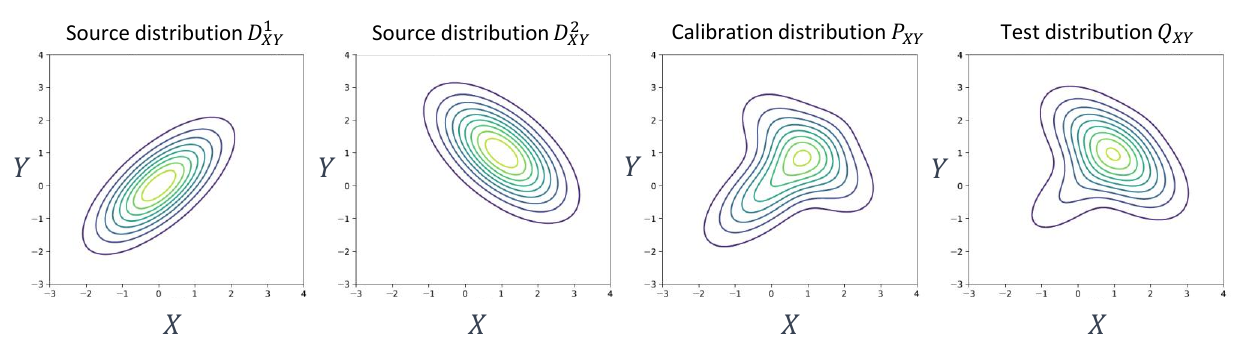}
\caption{\textbf{Multi-source domain generalization.} 
The test distribution \( Q_{XY} \) is a random mixture of source distributions, while the calibration distribution \( P_{XY} \) is a fixed known distribution. As a result, a distribution shift occurs since \( P_{XY} \neq Q_{XY} \).}
\label{fig: random_mixture}
\end{figure*}
\subsection{Synthetic distribution shifts}
The Physicochemical Properties of Protein Tertiary Structure (PTS) dataset~\citep{physicochemical_properties_of_protein_tertiary_structure_265} contains 45,730 instances, with the target variable being the protein decoy size. It includes nine features: surface area, non-polar exposed area, fractional area of exposed non-polar residue, fractional area of exposed non-polar part, molecular mass weighted exposed area, average deviation, Euclidean distance, secondary structure penalty, and spatial distribution constraints. Raw data is split into three subsets based on the distribution of the secondary structure penalty, thereby introducing distribution shifts among the subsets. We also use the PTS dataset to perform ablation studies on the approximation ability (with varying sample sizes) and generalization performance (with different numbers of source domains) of Augmented BNF.

\subsection{Natural distribution shifts reflecting real-world challenges}

\textbf{Generalized sales prediction over time-series} is crucial for risk-averse business decision-making~\citep{jin2022domain}. Moreover, sales data typically exhibit strong periodic patterns, such as seasonal or weekly fluctuations. Thus, effectively utilizing data from each sub-period to model a robust and generalized sales pattern is critical for achieving reliable forecasts. This requires models not only to capture short-term variations but also to generalize across different temporal domains, where distribution shifts may occur naturally due to changes in consumer behavior, external events, or market conditions. We consider the Bike Rental dataset~\citep{bike_sharing_275} to reflect this challenge. The dataset records hourly and daily rental counts from the Capital Bikeshare system during 2011 and 2012, along with associated weather and seasonal information. We partition the data based on rental hours into three time intervals: [0,8] (midnight), [9,16] (daytime), and [17,23] (evening). For prediction, we select continuous features including temperature, feeling temperature, humidity, and wind speed. The target variable is the count of rental bikes.

\textbf{Traffic speed prediction with mismatched data} focuses on transferring models trained on source distributions (e.g., traffic patterns on regular days and at major intersections) to test distributions exhibiting different characteristics (e.g., traffic patterns on special days and at minor intersections). For example, recent work has proposed traffic-law-informed models based on reaction-diffusion equations to provide generalized speed predictions~\citep{sun2023reaction}. Nevertheless, enhancing the reliability of uncertainty quantification under such distribution shifts remains a significant challenge. The Seattle-Loop dataset contains traffic volume and speed data collected in Seattle throughout 2015, recorded by sensors at 5-minute intervals~\citep{cui2019traffic}. PEMSD4 includes traffic data from 29 roads in San Francisco collected between January and February 2018, while PEMSD8 covers 8 roads in San Bernardino from July to August 2016~\citep{bai2020adaptive}. The task is to predict traffic speed at the next time step based on current speed and volume measurements. With $K=3$, we select one representative intersection from each dataset. Due to varying local traffic patterns, natural distribution shifts arise among the three locations. Our goal is to achieve strong generalization across these locations, ensuring robust predictions on any test sites where traffic patterns resemble a random mixture of the three selected intersections.

\textbf{Fair medical decision-making for patients from different hospitals} is essential for ensuring equitable healthcare outcomes. Variations in patient demographics, medical imaging scanners, laboratory equipment, and clinical practices across hospitals can lead to distribution shifts in the data. This phenomenon is commonly referred to as the multi-center issue~\citep{das2022multicenter}. Addressing this challenge is essential for building predictive models that remain accurate and fair across diverse healthcare institutions~\citep{olsson2022estimating}. To validate the effectiveness of the proposed method in this task, we collect patient data from a collaborating hospital. Additionally, we use the MIMIC-IV~\citep{johnson2023mimic} and eICU~\citep{pollard2018eicu} datasets to simulate data from two other hospitals. The goal is to fairly predict patients' ICU stay times based on their Apache scores and blood urea nitrogen (BUN) levels, ensuring reliable performance regardless of which center a patient originates from. We consider fair medical prediction to be achieved across the three data sources if the model exhibits comparable performance on random mixtures of the sources.
 
\textbf{Robust epidemic modeling across pandemic phases} can facilitate timely public health responses and resource planning. The U.S. Centers for Disease Control and Prevention (CDC) categorizes an epidemic period into three main phases: initiation, acceleration, and deceleration~\citep{cdcPandemicIntervals}. Each of these phases exhibits distinct epidemiological characteristics, which lead to natural distribution shifts. Traditional forecasting methods typically rely on Susceptible-Infectious-Recovered (SIR) models~\citep{harko2014exact,kabir2019analysis,turkyilmazoglu2022restricted} to predict the number of recently infected patients, aiming for robustness across the different pandemic phases. We demonstrate the application of the proposed method using the U.S. Influenza-like Illness (ILI) dataset~\citep{deng2020cola}, which contains weekly reports from the CDC on the number of ILI patients. The objective is to predict new infections for the upcoming week using both the weekly increase of infected patients and the cumulative infections for the year. The raw data is divided into three subsets based on the corresponding pandemic phases. We consider a forecasting model to be robust if its predictions remain reliable on random mixtures of data from the three phases.

We further apply t-SNE~\citep{van2008visualizing} to map the samples from each source into two dimensions, as shown in Figure~\ref{fig: data}. The visualization reveals clear distributional shifts between most sources. However, for some cases, such as the second and third sources in the Bike Rental and Fair Med setups, the distributions appear more similar. This slight overlap is not the result of manually creating similar data but arises naturally from the datasets themselves, which are collected from real-world scenarios.

\begin{figure}[H]
\centering
\captionsetup{singlelinecheck = false, justification=centering}
  \includegraphics[scale=0.6]{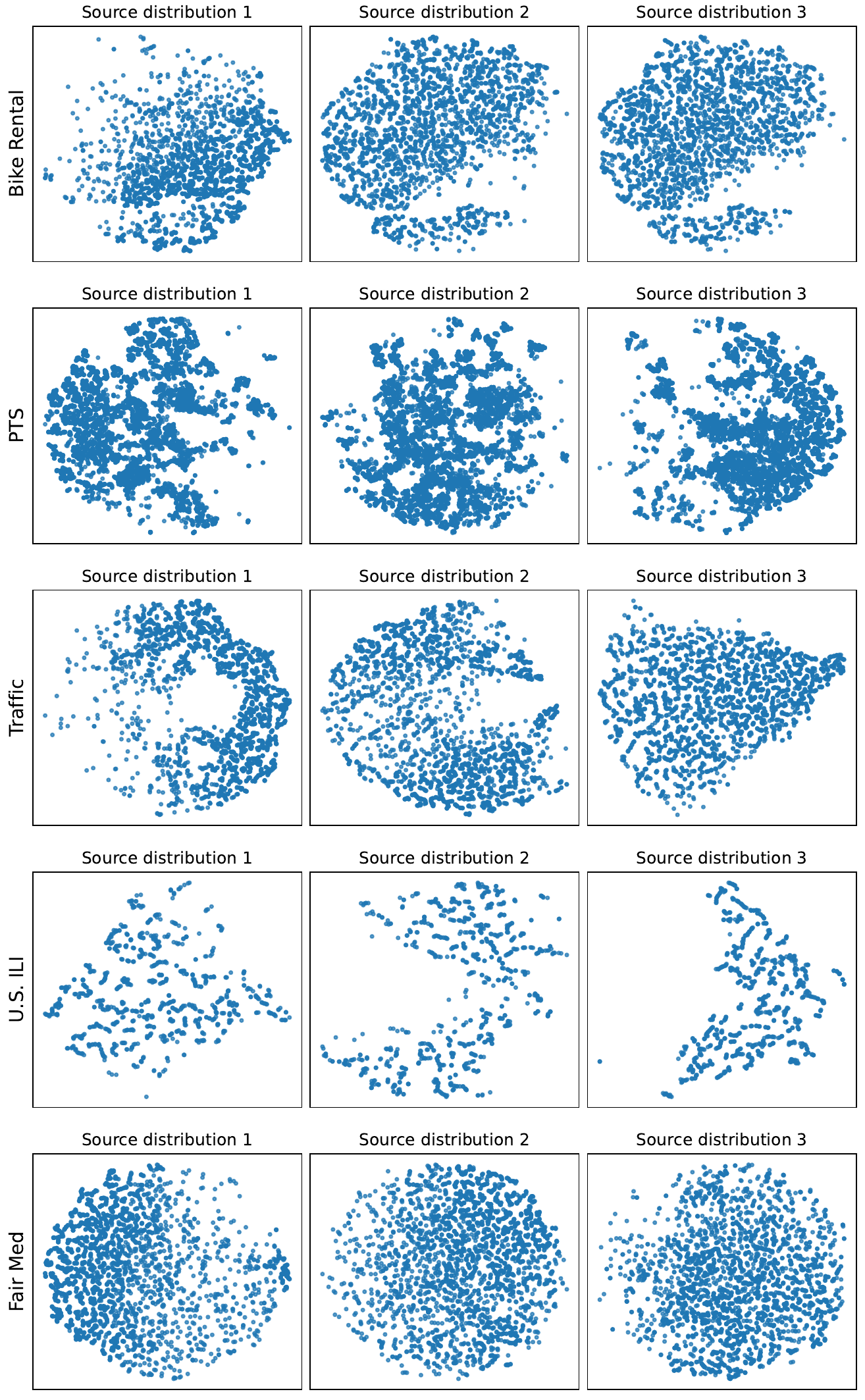}
  % \vspace{-15pt}
  \caption{Empirical data distributions of source domains after applying t-SNE.}
  \label{fig: data} 
\end{figure}
\section{Worst-slice coverage (WSC)}\label{appendix: wsc}
Worst-slice coverage (WSC)~\citep{cauchois2021knowing} quantifies the minimum empirical coverage over any slab \( \mathcal{S} \subseteq \mathcal{X} \) that contains at least 10\% of the test samples in \( \mathcal{S}_Q \). Specifically, for any CP methods that produce a prediction set $C(x)$ given an input $x$, WSC is defined by
\begin{equation}\label{eq: WSC}
    \text{WSC}=\inf_{\mathcal{S} \subseteq \mathcal{X}} \Pr(y\in C(x)|x\in\mathcal{S}), \text{s.t.} \Pr(x\in\mathcal{S}|(x,y)\in\mathcal{S}_Q)\geq 0.1.
\end{equation}
Nevertheless, WSC only evaluates the infimum slice coverage and therefore fails to penalize over-coverage. As also noted in~\citep{romano2020classification}, ensuring a high worst-case slice does not guarantee good conditional coverage across $\mathcal{X}$, particularly when different regions exhibit excessive coverage.

These limitations motivate our introduction of WSCG in Eq.~(\ref{eq: WSCG}), which penalizes both under- and over-coverage from the target level $1-\alpha$. As a result, WSCG provides a more comprehensive and balanced assessment of conditional coverage robustness.

\section{Generalization performance of Augmented BNF}\label{appendix: approximation ability}
\subsection{Various number of source domains}\label{appendix: generalization of various K}
\begin{wrapfigure}[12]{r}{6cm}
\centering
\captionsetup{singlelinecheck = false, skip=5pt, justification=justified}
\vspace{-10pt}
  \includegraphics[width=0.4\textwidth]{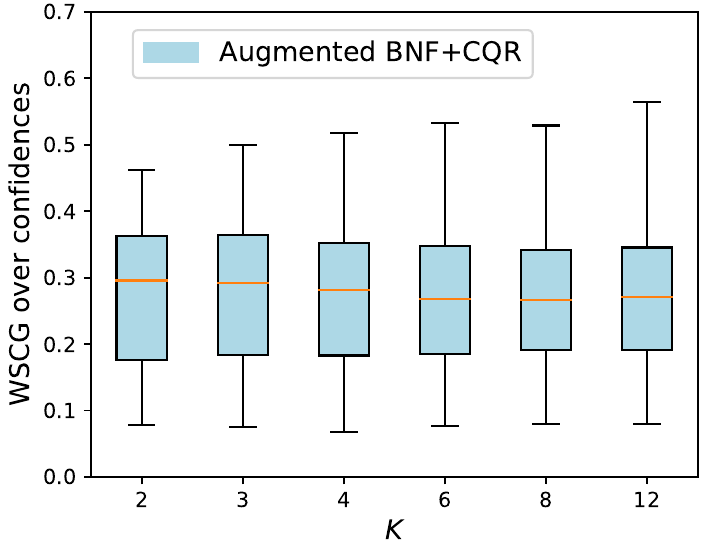}
  \vspace{-5pt}
  \caption{
  Generalization performance with different numbers of source domains.}
  \label{fig: env_ablation} 
\end{wrapfigure}
To explore the generalization ability of Augmented BNF under varying numbers of source domains, we modified the sampling procedure in Appendix~\ref{appendix: datasets} by changing \( K \in \{2, 3, 4, 6,8, 12\} \). For each value of \( K \), we generated 10 independent trials using the PTS dataset to account for sampling variability. Augmented BNF combined with CQR was applied to each trial across confidence levels \(1 - \alpha \in [0.1, 0.9]\), enabling a comprehensive evaluation.
Figure~\ref{fig: env_ablation} shows that increasing the number of source domains does not significantly degrade conditional coverage robustness, suggesting that Augmented BNF generalizes well even in the presence of greater domain heterogeneity.

\subsection{Different sample sizes}
\begin{wrapfigure}[12]{r}{6cm}
\centering
\captionsetup{singlelinecheck = false, skip=5pt, justification=justified}
\vspace{-10pt}
  \includegraphics[width=0.4\textwidth]{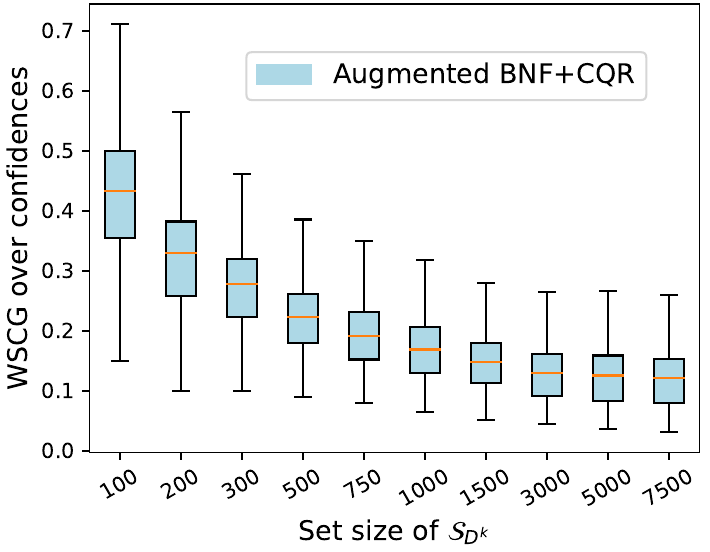}
  \vspace{-5pt}
  \caption{
  Impact of data availability on Augmented BNF approximation ability.}
  \label{fig: sample_ablation} 
\end{wrapfigure}
Generative models may struggle to approximate underlying distributions when data are limited, especially in high dimensions~\citep{kong2020expressivepowerclassnormalizing, poggio2017and}.
To assess how sample size affects the performance of Augmented BNF, we vary the number of samples in $\mathcal{S}_{D^k}$ and perform 10 trials for each setting on the PTS dataset. For each trial, we apply Augmented BNF+CQR across $1-\alpha$ from 0.1 to 0.9 and compute the WSCG over all confidence levels. As shown in Figure~\ref{fig: sample_ablation}, WSCG degrades clearly when the size drops below 750. With 11-dimensional data on PTS, it gives a sample-to-dimension ratio of ~68, demonstrating that our method is reasonably data-efficient.
\section{Coverage lower bounds under imperfect transformation}\label{appendix: lower bound}
\subsection{Marginal coverage lower bound}

We establish a marginal coverage lower bound by quantifying the alignment between the calibration conformal scores and those obtained from the transformed test distribution.

Marginal coverage gap can be defined as the discrepancy between the CDFs of $P_V$ and $Q_V$ at the calibration quantile $\tau$~\citep{xu2025wassersteinregularized}. After applying the transformation $f_\theta^{\text{aug}}$, the test distribution $Q_{XY}$ is mapped to $\widebar{Q}_{XY}:={{f_{\theta}^{\text{aug}}}}_\# Q_{XY}$, yielding the conformal score distribution $\widebar{Q}_{V}:=s_\#\widebar{Q}_{XY}$, where $s$ denotes the score function. The residual marginal coverage gap after transformation is therefore
\begin{equation}
    |F_V^P(\tau)-F_{V}^{\widebar{Q}}(\tau)|,
\end{equation}
with $F$ denoting the CDF.

This leads to the following lower bound on the marginal coverage of prediction sets produced by the Augmented BNF transformation model:
\begin{equation}
    \Pr(Y_{n+1}\in \mathcal{C}_\text{BNF}^\text{aug}(X_{n+1}))\geq1-\alpha-|F_V^P(\tau)-F_{V}^{\widebar{Q}}(\tau)|.
\end{equation}
Within the multi-source domain generalization (MSDG) framework, the test distribution $Q_{XY}$ is assumed to be a random mixture of source distributions $\{D_{XY}^k\}_{k=1}^K$. Denoting $\widebar{D}_{XY}^k:={{f_{\theta}^{\text{aug}}}}_\# D_{XY}^k$ and $\widebar{D}^k_{V}:=s_\#\widebar{D}_{XY}^k$, we can bound the marginal coverage gap as
\begin{equation}
    |F_V^P(\tau)-F_{V}^{\widebar{Q}}(\tau)|\leq\sup\nolimits_{k\in\{1,...,K\}}|F_V^P(\tau)-F_{V}^{\widebar{D}^k}(\tau)|.
\end{equation}
Consequently, we obtain the final marginal coverage lower bound under MSDG as
\begin{equation}\label{eq: marginal lower bound}
     \Pr(Y_{n+1}\in \mathcal{C}_\text{BNF}^\text{aug}(X_{n+1}))\geq1-\alpha-\sup\nolimits_{k\in\{1,...,K\}}|F_V^P(\tau)-F_{V}^{\widebar{D}^k}(\tau)|.
 \end{equation}
For validation, we compare the theoretical bound with the empirical marginal coverage observed across randomly sampled test distributions. The results, presented in Figure~\ref{fig: marginal lower bound}, show that the empirical coverage for most test distributions exceeds the proposed lower bound, thereby confirming the validity of Eq.~(\ref{eq: marginal lower bound}). The closeness between $1-\alpha$ and the bound further indicates that our method effectively aligns the calibration and test distributions.
\begin{figure*}[h]
\centering
\captionsetup{singlelinecheck = false, justification=centering}
\includegraphics[scale=0.4]{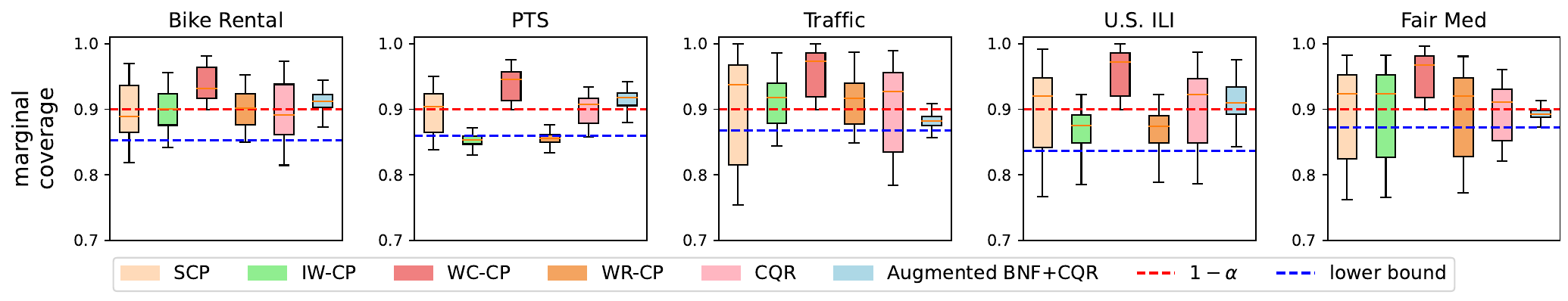}
\caption{Marginal coverage achieved by $\mathcal{C}_\text{BNF}^\text{aug}(X_{n+1})$ compared with the proposed marginal lower bound.}
\label{fig: marginal lower bound}
\end{figure*}

\subsection{Conditional coverage lower bound}
Next, we establish a conditional coverage lower bound that accounts for the imperfect alignment between calibration and test data induced by Augmented BNF.

First, even in the i.i.d. setting, exact conditional coverage is unattainable with finite samples~\citep{vovk2012conditional,lei2014distribution,foygel2021limits}. For instance,~\cite{romano2019conformalized} explicitly note that CQR achieves conditional coverage only on the training data, not on unseen test samples. Likewise, the performance of LCP~\citep{guan2023localized} is highly sensitive to the choice of kernel bandwidth, preventing finite-sample conditional coverage guarantees.
Consequently, even under the i.i.d. assumption such that $(X_{n+1},Y_{n+1})\sim P_{XY}$, the application of CQR can only guarantee that
\begin{equation}
    \Pr(Y_{n+1}\in \mathcal{C}_\text{CQR}(X_{n+1})|X_{n+1}=x)\geq 1-\alpha-\alpha_\text{i.i.d.},
\end{equation}
where $\alpha_\text{i.i.d.}$ reflects the approximation error introduced by CQR. This gap is an intrinsic limitation of existing conditional CP approaches.

Secondly, under distribution shift, for a test sample $(X_{n+1},Y_{n+1})\sim Q_{XY}$, we combine the approximation error $\alpha_\text{i.i.d.}$ from CQR and the lower bound in Eq.~(\ref{eq: lowerbound by CCG}) and derive
\begin{equation}
    \Pr\left(Y _{n+1}\in \mathcal{C}_\text{CQR}(X _{n+1})|X _{n+1}=x\right)\geq 1-\alpha-\alpha_\text{i.i.d.}-\text{CCG}(P,Q,x)
\end{equation}
To evaluate the expected conditional coverage across the test distribution, we take the expectation over $x\sim Q_X$ and obtain
\begin{equation}
    \mathbb{E}_{x\sim Q_X}[\Pr(Y _{n+1}\in \mathcal{C}_\text{CQR}(X _{n+1})|X _{n+1}=x)]\geq 1-\alpha-\alpha_\text{i.i.d.}-\text{ICG}(P,Q),
\end{equation}
where the $\text{ICG}(P,Q)$ is defined in Eq.~(\ref{eq: integrated coverage gap}) as the expectation of $\text{CCG}(P,Q,x)$ over $Q_X$.

Using our bound on $\text{ICG}(P, Q)$ in terms of the Wasserstein distance $W(P_{XY}, Q_{XY})$ in Eq.~(\ref{eq: final bound of ICG}), we obtain a bound on the expected conditional coverage under distribution shift:
\begin{equation}
\begin{split}
        &\mathbb{E}_{x\sim Q_X}[\Pr(Y _{n+1}\in \mathcal{C}_\text{CQR}(X _{n+1})|X _{n+1}=x)]\\&\geq 1-\alpha-\alpha_\text{i.i.d.}-\sqrt{2\kappa L}\left(\eta\cdot W(P _{XY},Q _{XY})+1/4\right).
\end{split}
\end{equation}
Finally, the transformation by Augmented BNF lead to a more robust prediction set $\mathcal{C}_\text{BNF}^\text{aug}(X_{n+1})$. Letting $\widebar{Q}_{XY}$ be ${{f_{\theta}^{\text{aug}}}}_\# Q_{XY}$, we clarify the role of the remaining distance $W(P_{XY},\widebar{Q}_{XY})$ by
\begin{equation}
\begin{split}
        &\mathbb{E}_{x\sim Q_X}[\Pr(Y _{n+1}\in \mathcal{C}_\text{BNF}^\text{aug}(X _{n+1})|X _{n+1}=x)]\\&\geq 1-\alpha-\alpha_\text{i.i.d.}-\sqrt{2\kappa L}\left(\eta\cdot W(P_{XY},\widebar{Q}_{XY})+1/4\right).
\end{split}
\end{equation}
We denote $\alpha_\text{trans}:=\sqrt{2\kappa L}(\eta\cdot W(P_{XY},\widebar{Q}_{XY})+1/4)$ to quantify the remaining deviation induced by imperfect alignment between calibration and test distributions.  Figure~\ref{fig: approximation} shows that the proposed transformation model effectively approximates the CQR under the i.i.d. condition. This suggests that $\alpha_\text{trans}$ is significantly smaller than $\alpha_\text{i.i.d.}$ with the remaining coverage gap primarily attributable to the approximation error of CQR itself.
\section{Prediction efficiency under source conditioning}\label{appendix: prediction efficiency}
\subsection{Prediction inefficiency by Augmented BNF}
Augmented BNF uses Eq.~(\ref{eq: aug BNF prediction set}) to obtain prediction sets on the test distribution $Q_{XY}$. During training, this augmented component $\varepsilon$ of the $Y$ branch $f_{\theta_Y}^\text{aug}$ in Eq.~(\ref{eq: normalization via augmented BNF}) is sampled from a single Gaussian distribution $\mathcal{N}(0,1)$, making it independent of the training sample sources. As a result, the model learns a shared transformation for all training distributions $D^k_{XY}$ for $k=1,..., K$ to align with the calibration distribution 
$P_{XY}$.

At test time, this design leads to a key limitation: since $\varepsilon_{n+1}$ is source-agnostic, the $Y$ branch $f_{\theta_Y}^\text{aug}$ cannot infer which source distribution a new test sample originates from. As a result, the prediction set $\mathcal{C}_\text{BNF}^\text{aug}(X_{n+1})$ must widen to account for all sources to ensure valid coverage. This behavior corresponds to a conditional worst-case strategy and inherently results in larger prediction sets.

The prediction inefficiency is reflected in Figure~\ref{fig: coverage_size}. The Augmented BNF produces noticeably larger prediction sets on Traffic, U.S. ILI, and Fair Med. This is due to the substantial variation in the conditional label distributions $D^k_{Y|x}$ within the three settings, such as significantly different supports, which leads to enlarged prediction sets.

\begin{figure*}[h]
\centering
\captionsetup{singlelinecheck = false, justification=justified}
  \includegraphics[scale=0.4]{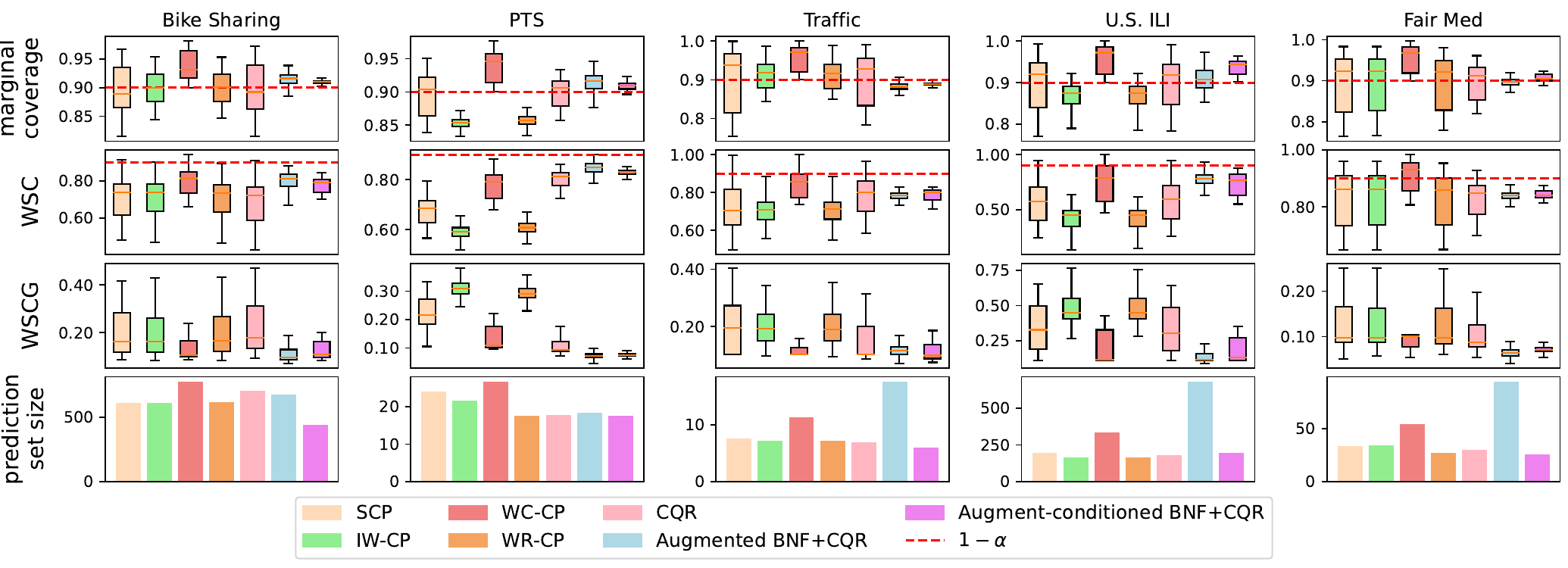}
  % \vspace{-15pt}
  \caption{Prediction set size comparison. The standard Augmented BNF can produce large prediction sets, whereas the Augment-Conditioned variant significantly reduces set size while preserving coverage performance.}
  \label{fig: coverage_size} 
  % \vspace{-15pt}
\end{figure*}
\subsection{Efficient prediction by Augment-Conditioned BNF}
To achieve smaller prediction sets, we propose \textbf{Augment-Conditioned BNF}, denoted as $f_\theta^\text{aug-cond}$. In this design, the augmented component $\varepsilon$ is sampled from a distinct Gaussian distribution $\mathcal{N}^k$ if an instance $(x,y)$ is from $D^{k}_{XY}$ during training. In other words, $\varepsilon$ serves not only to enhance expressiveness but also as a conditioning variable in $f_{\theta_Y}^\text{aug-cond}(y,\varepsilon)$. Consequently, during inference, if $\varepsilon_{n+1}$ correctly captures the source of the test sample, we can construct a smaller prediction set $\mathcal{C}_\text{BNF}^\text{aug-cond}(X_{n+1})$. Using a similar calculation as in Eq.~(\ref{eq: aug BNF prediction set}), we derive
\begin{equation}
        \mathcal{C}_\text{BNF}^{\text{aug-cond}}(X_{n+1})=\left\{y:f^{\text{aug-cond}}_{\theta_Y}(y;\varepsilon_{n+1})\in \mathcal{C}_\text{C}(\widebar{X}_{n+1})\right\}, \text{ where } \widebar{X}_{n+1}=f_{\theta_X}(X_{n+1})
\end{equation}
As shown in Figure~\ref{fig: coverage_size}, Augment-Conditioned BNF leads to significantly smaller prediction sets compared to the standard Augmented BNF. \cite{correia2024information} also demonstrates that knowledge of the test sample's source can serve as valuable side information to improve prediction efficiency. Such side information is often available in real-world applications. For instance, in multi-center healthcare settings, models are required to generalize across different hospitals. In these scenarios, the center at which a patient is admitted is typically known and can be used to tailor the prediction procedure. Leveraging this information allows the model to generate tighter prediction sets while maintaining valid coverage guarantees.

We observe that the Augment-Conditioned BNF may exhibit slightly higher WSCG compared to the standard Augmented BNF, such as on U.S. ILI. That is attributed to data sparsity, caused by the use of more distinct Gaussian distributions to model the data sources. As the number of Gaussians increases, the samples become more dispersed, making it more challenging to learn the underlying population distributions effectively. Hence, although the Augment-Conditioned BNF yields smaller prediction sets, it compromises robustness in coverage, revealing an inherent trade-off.
\section{Additional experiment results}\label{appendix: additional exp result}
\subsection{Ablation study across confidence levels}\label{appendix: alpha ablation}
Rather than fixing $1-\alpha=0.9$, we explore the performance of BNF across different confidence levels. After averaging over all datasets and trials, WSCG of Augmented BNF+CQR and five baselines with $1-\alpha$ from 0.1 to 0.9 are illustrated in Figure~\ref{fig: alpha ablation}. It is shown that BNF consistently achieves the most robust conditional coverage with the lowest mean WSCG as the confidence level varies. Moreover, we can observe a pattern across all methods where WSCG tends to be higher in the middle confidence range and lower at both ends of the spectrum. At a high confidence level (i.e. $1-\alpha=0.9$), prediction sets are so broad that they encompass the majority of possible outcomes, minimizing the potential for coverage gaps. Consequently, WSCG tends to be lower at high confidence levels. Similarly, prediction intervals are intentionally narrow at a low confidence level (i.e. $1-\alpha=0.1$), capturing only a small subset of outcomes. This makes them inherently less sensitive to distribution shifts, thereby reducing coverage gaps. Hence, the coverage gap is typically most pronounced at intermediate confidence levels, forming arch-like curves across the confidence spectrum in Figure~\ref{fig: alpha ablation}.
\begin{figure}[h]
\centering
\captionsetup{singlelinecheck = false, skip=5pt, justification=justified}
\vspace{-10pt}
  \includegraphics[width=0.6\textwidth]{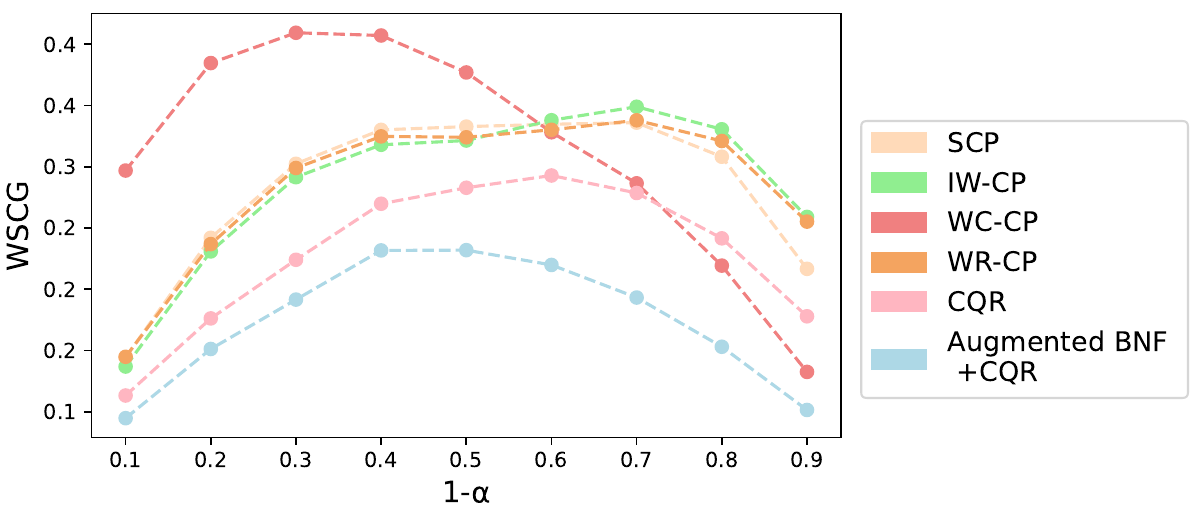}
  \vspace{-5pt}
  \caption{
  WSCG of Augmented BNF + CQR and baselines for $1-\alpha \in [0.1, 0.9]$, averaged across datasets.}
  \label{fig: alpha ablation} 
\end{figure}
\subsection{Challenges of feature conditioning in high dimensions}
We present the experiment result of  Feature-Conditioned BNF introduced in Section~\ref{sec: feature conditioning} across all dataset. 
This variant explicitly captures the dependency of $Y$ on $X$. However, it increases the input dimension of $f_{\theta_Y}^\text{fea}$ to $d+1$, making the total input dimension of Feature-Conditioned BNF $2d+1$, where $d$ is the dimension of $X$. As a result, the curse of dimensionality is exacerbated with a small sample-to-dimension ratio $|\mathcal{S}_{D^{k}}|/(2d+1)$, making true distributions harder to estimate. In contrast, Augmented BNF maintains a more favorable ratio of $|\mathcal{S}_{D^k}| / (d+2)$, as the input to $f_{\theta_Y}^{\text{aug}}$ is only two-dimensional. Consequently, with limited data, Feature-Conditioned BNF tends to yield higher WSCG due to poor approximation. We report the small sample-to-dimension ratio of Feature-Conditioned BNF in Table~\ref{Table: sample-to-dimension ratio}, and demonstrate its less robust conditional coverage in Figure~\ref{fig: feature}, where its WSCG tend to be higher than that of Augmented BNF.

Meanwhile, although a one-dimensional projection may alleviate this issue, such dimensionality reduction inevitably discards information that can be crucial for accurately modeling the conditional calibration distribution, and its deterministic nature prevents it from achieving the level of expressiveness offered by Augmented BNF. 
\begin{table}[h]
\centering
\footnotesize
\captionsetup{justification=centering}
\caption{Feature-Conditioned BNF holds a \textbf{small} sample-to-dimension ratio $|\mathcal{S}_{D^{k}}|/(2d+1)$.}
\def\arraystretch{1.2}
\begin{tabular}{c|c|c|c|c|c|c}
\toprule\toprule
Dataset & Bike Rental& PTS & Traffic & U.S. ILI & Fair Med\\ \midrule
$d$ & 4 & 9 & 3 & 2 & 2 \\ 
$|\mathcal{S}_{D^{k}}|$ & 2800 & 7500 & 2800 & 870 & 3000   \\ 
$|\mathcal{S}_{D^{k}}|/(d+2)$ & 466.7 & 681.8	& 560.0 &217.5 &750.0 \\ 
$|\mathcal{S}_{D^{k}}|/(2d+1)$ & \textbf{311.1} & \textbf{394.7} & \textbf{400.0} & \textbf{174.0} &  \textbf{600.0}\\ \bottomrule
\end{tabular}
\label{Table: sample-to-dimension ratio}
\end{table}

\begin{figure}[h]
\centering
\captionsetup{singlelinecheck = false, skip=5pt, justification=justified}
  \includegraphics[width=1.0\textwidth]{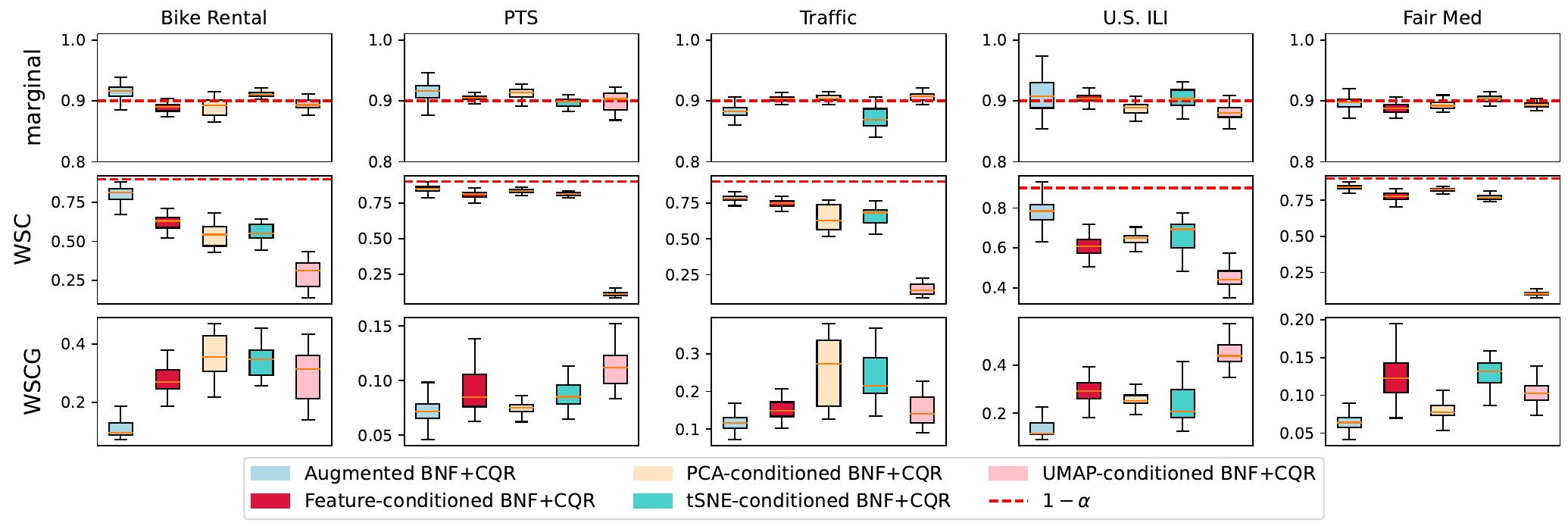}
  \vspace{-5pt}
  \caption{
  Comparison of conditioning the transformation of label $Y$ on the feature and its one-dimensional $X$ projection obtained via PCA, t-SNE, and UMAP.}
  \label{fig: feature} 
\end{figure}

\subsection{Additional Distribution Shift via Label Perturbation}
\label{appendix: label perturbation}

In Section~\ref{sec: perturbation}, we evaluate the robustness of our method beyond the multi-source setting by introducing distribution shift based on label perturbation~\cite{sesia2023adaptive, einbinder2022conformal}. Specifically, we consider a family of shifted distributions defined as
\[
Q_{XY}\in\left\{ 
D_{XY} : (X,Y)\sim P_{XY},\ \delta \sim \mathcal{U}(1,1.5),\ \tilde{Y} = Y\delta,\ (X,\tilde{Y}) \sim D_{XY}
\right\}.
\]
This formulation corresponds to a multiplicative perturbation applied to the label, where the scaling factor $\delta$ is sampled independently from a uniform distribution. As a result, the marginal distribution of $X$ remains unchanged, while the conditional distribution of $Y \mid X$ is systematically perturbed.

Compared to additive perturbations, this scale-based transformation induces a more pronounced and heterogeneous shift, particularly for larger label values, thereby providing a challenging testbed for evaluating conditional coverage.

In our experiments, we construct ten perturbed environments for training by setting $\delta \in \{1, 1.05, \ldots, 1.45, 1.5\}$, each corresponding to an independent realization of the scaling variable. The BNF is then trained to transport these shifted distributions back to the original (unperturbed) calibration distribution $P_{XY}$. During inference, we further evaluate the model on 100 randomly generated shifted distributions sampled from the same perturbation family.

The results across all datasets are presented in Figure~\ref{fig: perturbation}, where our method demonstrates a favorable trade-off between robust conditional coverage and prediction efficiency. 

\begin{figure}[h]
\centering
\captionsetup{singlelinecheck = false, skip=5pt, justification=justified}
  \includegraphics[width=1.0\textwidth]{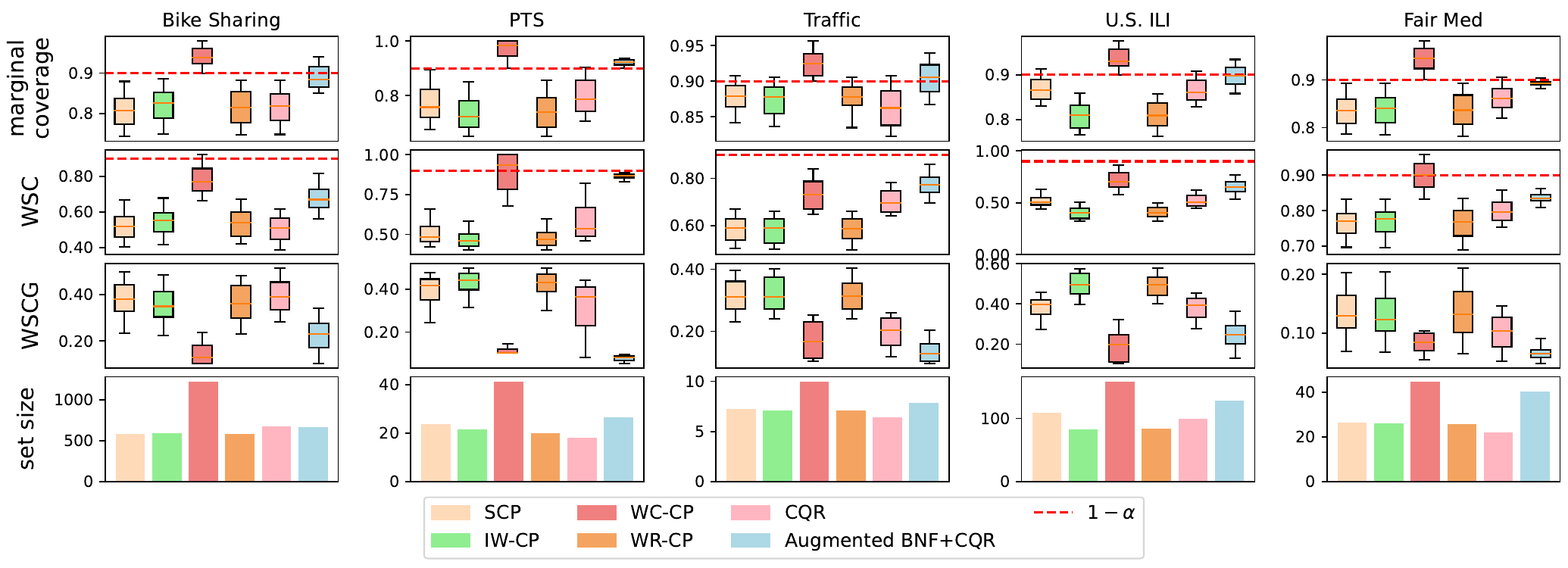}
  \vspace{-5pt}
  \caption{
  Distribution shift is induced by label perturbation. Compared with baselines, our method consistently maintains comparably robust marginal and conditional coverage without a high cost in prediction efficiency. }
  \label{fig: perturbation} 
\end{figure}

\section{Limitations}\label{appendix: limitations}
\subsection{Root-Finding challenges}
Unlike the original BNF, Augmented BNF does not employ a univariate monotonic transformation for the $Y$-branch. As a result, we cannot directly apply Eq.~(\ref{eq: BNF prediction set}) to construct the prediction set. Instead, Augmented BNF relies on Eq.~(\ref{eq: aug BNF prediction set}) to generate $\mathcal{C}_\text{BNF}^\text{aug}(X_{n+1})$, which frames the construction as a root-finding problem. Specifically, let the interval endpoints of the calibrated set $\mathcal{C}_\text{C}(\widebar{X}_{n+1})$ be denoted by $y_\text{lo}$ and $y_\text{hi}$. Then, Eq.~(\ref{eq: aug BNF prediction set}) requires solving a root-finding problem to identify the pre-images of these endpoints under the learned transformation. In particular, we need to find the values of $y$ that satisfy the following equations
\begin{equation}\label{eq: root finding}
    f^{\text{aug}}_{\theta_Y}(y;\varepsilon_{n+1})=y_\text{lo};\quad f^{\text{aug}}_{\theta_Y}(y;\varepsilon_{n+1})=y_\text{hi}.
\end{equation}
In practice, we construct a prediction set by evaluating a coarse grid of 1000 candidate $y$values and checking, via the inverse transformation, whether the corresponding $\bar{y}$ falls inside $\mathcal{C}_\text{C}(\widebar{X}_{n+1})$. Values that satisfy this condition are included in $\mathcal{C}_\text{BNF}^\text{aug}(X_{n+1})$, with the endpoints defining the prediction set boundary. In practice, 1000 candidates are sufficient to accurately capture the boundaries, and adding more candidates does little change to the set size. The procedure is computationally efficient, adding only 0.03 seconds per test sample compared to standard CQR on an RTX 3090.
\subsection{Stochastic prediction sets}
One practical drawback of Augmented BNF lies in its reliance on stochastic augmentation through a random noise $\varepsilon \sim \mathcal{N}(0,1)$, which is used to modulate the $Y$-branch of the Augmented BNF in Eq.~(\ref{eq: normalization via augmented BNF}). While this augmentation introduces flexibility, it also introduces randomness into the transformation. As a result, in Eq.~(\ref{eq: aug BNF prediction set}), the prediction set produced by Augmented BNF for the same input $x$ is no longer deterministic. The set varies across different forward passes depending on the realization of $\varepsilon$. This stochasticity undermines one of the appealing properties of standard conformal prediction: the deterministic and repeatable nature of the prediction set given a test point. In high-stakes domains, such randomness can lead to interpretability challenges or instability in downstream decisions. 

A plausible solution is to perform multiple runs to approximate a stable prediction set, or to fix the random seed so that each test input yields a consistent prediction set across repeated queries.

\subsection{Architectural incompatibility with one-dimensional features}
Another limitation of Augmented BNF stems from its architectural dependency on Real NVP~\citep{dinh2016density}, a type of normalizing flow that is inherently designed for multi-dimensional transformations. Real NVP operates by alternating between dimensions of the input to apply affine coupling layers, as plotted in Figure~\ref{fig: architecture}. This necessitates a feature space $\mathcal{X}$ of at least two dimensions. Consequently, Augmented BNF inherits this constraint: its architecture presumes that the input feature $x$ is multivariate.

In the case where $x$ is one-dimensional, the affine coupling mechanism of Real NVP becomes undefined, rendering the model non-functional. As a result, Augmented BNF cannot be applied to tasks with univariate inputs. This presents a clear barrier for applying to domains, where no natural multivariate feature exists.
One might consider artificially expanding $x$ with noise or engineered features to satisfy the dimensionality requirement, just like the augmented $Y$-branch in Eq.~(\ref{eq: normalization via augmented BNF}).

\subsection{Tuning Bias}
In Section~\ref{sec: augmented BNF}, the calibration set $\mathcal{S}_P$ participates in the training of the Augmented BNF, as described in Algorithm~\ref{alg: Augmented BNF under MSDG}. However, this practice may undermine the rigor of conformal prediction, where calibration data is ideally held out from any training procedure. Despite this concern, similar strategies have been adopted in prior work~\citep{angelopoulos2020uncertainty,dabah2025temperaturescalingconformalprediction,xi2024doesconfidencecalibrationimprove,yang2024selectionaggregationconformalprediction}, often to simplify implementation. Notably,~\cite{zeng2025parametricscalinglawtuning} identifies a parametric scaling law of tuning bias, showing that reusing calibration data introduces a bias that grows with model complexity and diminishes as the calibration set size increases.

To adhere more closely to the theoretical foundations of conformal prediction, a more principled approach would involve randomly partitioning $\mathcal{S}_P$ into two disjoint subsets: one used for training the Augmented BNF and another reserved exclusively for inference. Given the architectural complexity and parameterization of the Augmented BNF, as detailed in Appendix~\ref{appendix: architecture}, such a split is particularly recommended to mitigate the risk of overfitting and maintain robust uncertainty guarantees. 

We conduct an ablation study with a strict, unbiased training–calibration–test split across all datasets. As shown in Figure~\ref{fig: strict split}, WSCG remains largely unchanged, with no noticeable degradation, suggesting that the practical impact of such bias in Section~\ref{sec: application to MSDG} is minimal.

\begin{figure}[h]
\centering
\captionsetup{singlelinecheck = false, skip=5pt, justification=justified}
  \includegraphics[width=1.0\textwidth]{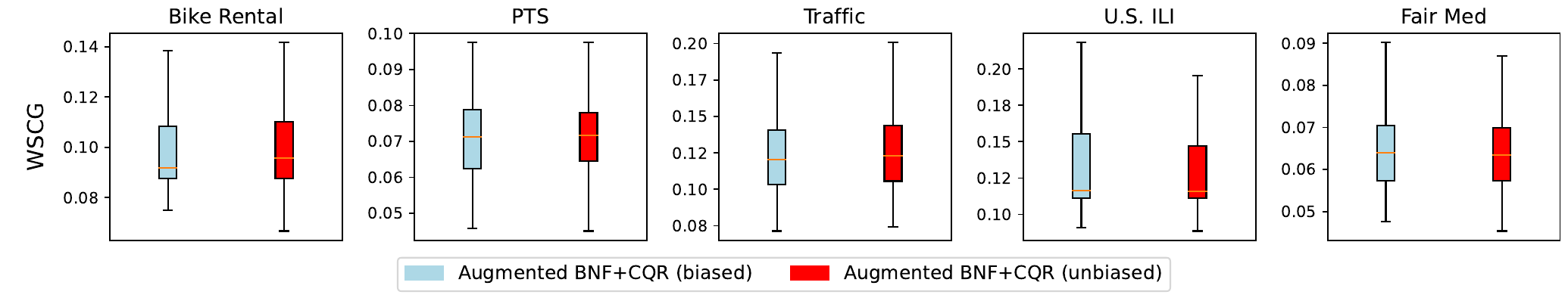}
  \vspace{-5pt}
  \caption{
  Ablation study on the unbiased setting where a strict data split is conducted. The WSCG metric shows no significant degradation.}
  \label{fig: strict split} 
\end{figure}
%%%%%%%%%%%%%%%%%%%%%%%%%%%%%%%%%%%%%%%%%%%%%%%%%%%%%%%%%%%%

\end{document}